\definecolor{blanchedalmond}{rgb}{1.0, 0.92, 0.8}
\definecolor{beige}{rgb}{0.96, 0.96, 0.86}
\definecolor{blond}{rgb}{0.98, 0.94, 0.75}
\definecolor{brass}{rgb}{0.71, 0.65, 0.26}
\definecolor{buff}{rgb}{0.94, 0.86, 0.51}
\definecolor{burlywood}{rgb}{0.87, 0.72, 0.53}
\definecolor{camel}{rgb}{0.76, 0.6, 0.42}
\definecolor{bronze}{rgb}{0.8, 0.5, 0.2}
\definecolor{brown}{rgb}{0.59, 0.29, 0.0}
\definecolor{copper}{rgb}{0.72, 0.45, 0.2}
\definecolor{cinnamon}{rgb}{0.82, 0.41, 0.12}
\definecolor{darkchestnut}{rgb}{0.6, 0.41, 0.38}
\definecolor{brew1}{HTML}{fef0d9}
\definecolor{brew2}{HTML}{fdcc8a}
\definecolor{brew3}{HTML}{fc8d59}
\definecolor{brew4}{HTML}{d7301f}
\newcommand{\newtext}[1]{\textcolor{black}{#1}}
\def\R{\mathbb{R}}
\def\E{\mathbb{E}}
\def\argmin{\text{argmin}}
\def\Id{\mathbf{I}_d}
\def\In{\mathbf{I}_n}
\newcommand\independent{\protect\mathpalette{\protect\independenT}{\perp}}
\def\independenT#1#2{\mathrel{\rlap{$#1#2$}\mkern2mu{#1#2}}}
\def\ExcessRisk{{\rm ExcessRisk}} 
\def\stheta{\theta^{\star}} 
\def\y{Y} 
\def\x{X} 
\def\Y{\mathbf{Y}} 
\def\X{\mathbf{X}} 
\def\Noise{\boldsymbol{\eta}} 
\def\Xt{\mathbf{X}^\top}
\def\u{\mathbf{u}} 
\def\v{\mathbf{v}} 
\def\s{s} 
\def\Ud{\mathbf{U}_d}
\def\Ur{\mathbf{U}_r}
\def\Vd{\mathbf{V}_d}
\def\Vr{\mathbf{V}_r}
\def\Sd{S_d}
\def\Sr{S_r}
\def\ts{\tilde{s}} 
\def\tS{\tilde{S}} 
\def\htheta{\hat{\theta}}
\def\hSigma{\hat{\Sigma}} 
\def\ttheta{\tilde{\theta}}
\def\btheta{\bar{\theta}}
\def\slambda{\lambda^\star}
\def\bxik{\xi^{(k)}}
\def\bxir{\xi^{(r)}}
\def\bxi{\xi}
\def\xik{\bar{\xi}^{(k)}}
\def\xir{\bar{\xi}^{(r)}}
\def\barxi{\bar{\xi}}
\def\txi{\tilde{\xi}}
\def\Pre{\mathbf{P}} 
\def\Om{\mathbf{\Omega}}
\def\bone{\mathbf{1}}
\def\Mk{M^{(k)}}
\def\mk{m^{(k)}}
\def\ck{c}
\def\Bk{B^{(k)}}
\def\bk{b^{(k)}}
\def\Vk{V^{(k)}}
\def\vk{v^{(k)}}
\def\Ak{A^{(k)}(\lambda)}
\def\Ar{A^{(r)}(\lambda)}
\newtheorem{assumption}{Assumption}
\newtheorem{lemma}{Lemma}[section]
\newtheorem{theorem}{Theorem}
\newtheorem{remark}{Remark}[section]
\newtheorem{proposition}{Proposition}[section]
\title{Understanding the Gains from Repeated Self-Distillation}
\author{%
Divyansh Pareek \quad \quad  Simon S. Du  \quad \quad Sewoong Oh \\
\textit{Paul G. Allen School of Computer Science and Engineering} \\
\textit{University of Washington, Seattle, WA} \\
\texttt{\{dpareek,ssdu,sewoong\}@cs.washington.edu}\\
}
\date{} 
\begin{document}
\maketitle

\begin{abstract} 
Self-Distillation is a special type of knowledge distillation where the student model has the same architecture as the teacher model. Despite using the same architecture and the same training data, self-distillation has been empirically observed to improve performance, especially when applied repeatedly. For such a process, there is a fundamental question of interest: How much gain is possible by applying multiple steps of self-distillation? To investigate this relative gain, we propose studying the simple but canonical task of linear regression. Our analysis shows that the excess risk achieved by multi-step self-distillation can significantly improve upon a single step of self-distillation, reducing the excess risk by a factor as large as $d$, where $d$ is the input dimension. Empirical results on regression tasks from the UCI repository show a reduction in the learnt model's risk (MSE) by up to $47$\%.

\end{abstract}


\section{Introduction} \label{sec-intro}
Knowledge distillation \cite{distill2015} was initially proposed as a way to transfer the knowledge learnt by a larger teacher model to a smaller student model, which can then be deployed in limited resource settings. 
The process is as follows: Train a teacher ($T$) model using ground-truth labels, then use its predictions to supervise the training of a student ($S$) model via a combined per-sample loss, 
\begin{equation} \label{onestep-dist-obj}
    \xi \cdot \boldsymbol{\ell} \bigl( \hat{y}_T, y_S(\theta) \bigr) + (1 - \xi) \cdot \boldsymbol{\ell} \bigl( y, y_S(\theta) \bigr)\;, 
\end{equation} 
where $\boldsymbol{\ell}$ denotes the loss function, $y$ is the ground-truth label, $\hat{y}_T$ denotes the teacher's prediction, and $y_S(\theta)$ denotes the student's prediction, parameterized by the learnable $\theta$.
The extra hyperparameter $\xi$ is called the imitation parameter \cite{lopez2015unifying}, generally restricted to $\xi \in [0, 1]$.
It gives additional freedom to the student to balance importance between labels and teacher's predictions. The student trained via this distillation objective (i.e., utilizing the teacher's predictions through $\xi \neq 0$) has been widely observed to generalize better than when trained only on the labels (i.e., $\xi=0$). This gain has been attributed to `dark knowledge' that is $(i)$ impossible to be directly extracted from the training data by the small model, but $(ii)$ easily learnt by the large model and transferred to the small model. 

Challenging this interpretation, \citet{li2017learning} and \citet{furlanello2018born} empirically observed performance gains through distillation even when the teacher and student are same-sized models. One can set $T$ and $S$ to have the \emph{same architecture}, and $S$ trained with the objective in Eq.~\eqref{onestep-dist-obj} outperforms $T$. This is referred to as Born-Again Networks (BANs) or {\em Self-Distillation} (SD). Furthermore, repeatedly applying self-distillation on the same training data with a student model having the same architecture provides additional gains on benchmark datasets and architectures \cite{furlanello2018born,yang2019training,zhang2020self}. At each step, the student from the previous step acts as the teacher used to train a new student model under the self-distillation loss of Eq.~\eqref{onestep-dist-obj}. For such {\em multi-step self-distillation}, there is a fundamental question of interest: \emph{How much more gain can we get by repeatedly applying self-distillation?}

Recently, \citet{pmlr-v202-das23d} provided theoretical understanding of the original one-step self-distillation. For the canonical task of fixed design linear regression, considering the standard ridge estimator as both the teacher and student model, \cite{pmlr-v202-das23d} showed that there is indeed a regime of problem instances in which the \emph{optimal} student (i.e., with optimally tuned ridge parameter $\lambda$ and  imitation parameter $\xi$) can provably achieve a strictly lower test error than the \emph{optimal} teacher (i.e. with optimally tuned $\lambda$). However, the amount of this gain has not been characterized in closed form, and can only be numerically evaluated for a given problem instance. Inspired by this work, we aim to study the performance gains from multi-step self-distillation under linear regression. 

{\bf Contributions.} We summarize our contributions below.
\begin{itemize}
[leftmargin=2em,itemsep=0em,topsep=0em]
    \item Under the fixed design linear regression defined in Section~\ref{sec-setup-LR}, we show that the \emph{optimal} multi-step self-distilled model \newtext{(i.e., each $\xi$ value at each step is optimized for the validation accuracy of the final multi-step self-distilled model) }can achieve a test error that is a factor of $d$ smaller than the \emph{optimal} one-step self-distillation (Theorem~\ref{thm-FD-separation}), under certain assumptions on the problem parameters (Assumption~\ref{assump-FD-sj}). Here, $d$ is the dimension of the input. 
    \newtext{Our analysis in Theorem~\ref{thm-FD-separation} suggests that the sequence of $\xi$ parameters provides additional freedom that can control the spectrum of eigenvalues of the linear estimator. Optimally choosing these $\xi$ parameters can significantly reduce the variance of the estimator, leading to a factor of (up to) $d$ difference in the overall test errors of multi-step SD compared to $1$-step SD.}
    We note that \citet{pmlr-v202-das23d} also observed a bias-variance tradeoff associated with the $\xi$ parameter for $1$-step SD compared to the ridge\newtext{, which was the reason behind $1$-step SD strictly outperforming the ridge}.

    \item  We demonstrate the necessity of Assumption~\ref{assump-FD-sj} theoretically (Theorems~\ref{thm-FD-nonsep-assump} and \ref{thm-FD-nonsep-nonpeaky}) and numerically (Figure~\ref{fig-synth-all}). Further, we provide a lower bound for the test error that any repeated SD can achieve, and show that only $r$ steps of SD (with optimally chosen $\xi$ at each step) are sufficient to achieve this lower bound, when the input data matrix has rank $r$ (Theorem~\ref{thm-sdkPre-optPre}).
    \item By capturing the functional form of the test error in $\xi$ (Theorem~\ref{thm-quadraticRisk}), we also show a method to practically select the $\xi$ parameters for real-world regression tasks.
In Section~\ref{sec-expts}, we empirically show that this theoretical insight leads to selecting effective $\xi$ values, which can indeed achieve a lower test error on real-world regression tasks.
\end{itemize}

\section{Related Work} \label{sec-related}

\textbf{Knowledge distillation and self-distillation.} 
\citet{distill2015}, \citet{ba2014deep} proposed knowledge distillation to transfer knowledge learnt by large teacher models into smaller student models without any substantial performance drop (e.g.,  \cite{romero2014fitnets,rusu2015policy,huang2017like,chen2017learning,urban2017do,liu2019structured,sun2019patient} and surveys in \cite{gou2021knowledge,hu2022teacher}). Distillation also provides interpretability \cite{liu2018improving}, robustness to adversarial examples \cite{papernot2016distillation}, and defense against backdoor attacks \cite{yoshida,li2021neural,xia2022eliminating}, although stronger backdoor attacks have been proposed that bypass distillation defense \cite{jha2023label}. Perhaps surprisingly, empirical observations show that performance {\em improves} when a teacher model is distilled into a student model with the same architecture on the same training data ({\em self-distillation}). 
Performance gains with one-step self-distillation of the form Eq.~\eqref{onestep-dist-obj} were first demonstrated by \citet{li2017learning} for AlexNet on YFCC100M. Further gains can be achieved by repeating self-distillation, as shown for the DenseNet architecture family on CIFAR10 and CIFAR100 \cite[Table 2]{furlanello2018born}. To empirically explain such gains, \citet{zhang2020self} measured prediction uncertainty on the same multi-step experiments and offered an interpretation that soft labels capture sample-level uncertainties.  \citet{yang2019training} also reproduced the same experiments and explained the gains as knowledge refinement on the class similarities. We will analytically study the gains that can be achieved with such repeated self-distillation. 

Many variations of self-distillation have also been proposed. Snapshot Distillation \cite{yang2019snapshot} tries to treat previous checkpoints (snapshots) of the same model as the teacher. \citet{zhang2018deep} employ a group of collaborative students with no teacher. \citet{zhang2021self,zhang2019your} use it for model self-improvement, and 
DINO \cite{caron2021emerging} adopts self-distillation for self-supervised learning. 
Knowledge distillation is also popular for transfer learning, where the student model is trained on a different dataset than the teacher model \cite{yim2017gift,zagoruyko2016paying,ahn2019variational}, which is not a setting we address. 
With the recent scaling of data, \cite{zhu2024iterative}, \cite{lin2024rho} are relevant works using a teacher model for either label editing or data reweighing.

\newpage

\textbf{Theory of distillation and self-distillation.} Theoretical understanding of distillation started with \citet{phuong2019towards} studying linear student networks.
\citet{mobahi2020self} studied self-distillation theoretically in the restricted setting of $\xi = 1$ (i.e. only teacher supervision, no ground-truth labels), showing that in this setting, the SD process acts as a regularizer, with a few steps of SD helping, but further steps hurting model performance. We study the setting where $\xi$ is not restricted to $1$, and show a different conclusion. In particular, we observe that more steps of SD always provide an improvement, if the $\xi$ parameters are chosen optimally.
\citet{allen-zhu2023towards} analyzed a stylized setting, where a different view of the data is learned by different models, and show how ensemble methods can combine the views, achieving improved test accuracy. This framework is used to show how self-distillation can also improve model accuracy by implicitly performing ensembling.
\citet{menon2021statistical} theoretically studied distillation in the classification setting, and also observed a bias-variance tradeoff underlying teacher supervision.
\citet{pmlr-v202-das23d} theoretically studied one-step self-distillation for fixed design linear regression and binary classification, and showed that the student can provably achieve a lower test error than the teacher. We take inspiration from them and study the multi-step SD to characterize this performance gain, showing that the multi-step SD can outperform one-step SD by a large factor.
\citet{jeong2024understanding} also take inspiration from \cite{pmlr-v202-das23d} and provide understanding of multi-step self-distillation in the multi-class classification setting.


\section{Problem formulation and background on self-distillation} \label{sec-setup}

Focusing on the simple but canonical task of linear regression, we investigate the performance gain from applying repeated self-distillation.

\subsection{Linear regression} \label{sec-setup-LR}

For the observed response $\y \in \R$ and the covariate $\x \in \R^d$, the following assumption is standard in linear regression, e.g., \cite{pmlr-v202-das23d}.  
\begin{assumption} \label{assump-LR}
There exist $\stheta \in \R^d$ and $\gamma>0$ such that
    ($i$) $\E[\y | \x] = \langle \stheta, \x \rangle$;
    ($ii$) ${\rm Var}[\y | \x] = \gamma^2$ for all $\x \in \R^d$; and 
    ($iii$) $\left( \y - \E[\y | \x] \right) \independent \x$, i.e. the label noise is independent of $\x$. 
\end{assumption}
The training set of size $n$ is denoted by $\X \in \R^{d \times n}$, the collection of covariates, and $\Y \in \R^n$, the responses; $\Y = \Xt \stheta + \Noise$, with $\Noise$ satisfying $\E[\Noise] = 0$, $\E[\Noise \Noise^\top] = \gamma^2 \In$. 
The problem instance is defined by its parameters $(\X, \stheta, \gamma^2)$, treating $\X=[\x_1, \x_2, \cdots \x_n]$ as fixed but $\Y$ as random. The training set $(\X, \Y)$ is one occurrence of the random noise $\Noise \in \R^n$.
In this {\em fixed design} setup, 
the excess risk of an estimator $\htheta$ is defined using the standard $\hSigma_n$-norm, $\| v\|_{\hSigma_n}=\|\hSigma_n^{1/2} v\|_2$, as
\begin{align} \label{eq-ER-fixed}
    \ExcessRisk(\htheta) \;\; &:= \;\; \E_{\Noise} \left[ \| \htheta - \stheta \|_{\hSigma_n}^2 \right]\;, 
\end{align}
where $\hSigma_n := (\nicefrac{1}{n})\X \Xt$ is the covariance matrix, and the expectation is over the randomness in $\Noise$.
Measuring the error in the $\hSigma_n$-norm ensures that the signal-to-noise ratio is uniform in all directions. 
The popular ridge estimator serves as a baseline, using a single hyperparameter $\lambda > 0$: 
\begin{align} 
    \htheta(\lambda) \;&:=\; \arg\min_{\theta \in \R^d} \left( \| \Y - \Xt \theta \|^2 + \lambda \| \theta \|^2 \right) 
    \;=\; \left( \X \Xt + \lambda \Id \right)^{-1} \X \Y \label{est-ridge-2}\;.
\end{align} 
We use  $\Om_\lambda := \X \Xt + \lambda \Id$ throughout. We consider only $\lambda > 0$, but surprisingly, \citet{kobak2020optimal} showed that the optimal penalty $\slambda$ (one with the lowest risk) can indeed be negative. However we will largely work in the non-overparameterized case ($n > d$), where $\slambda > 0$ holds.

\subsection{Self-distillation} \label{sec-setup-Estimators}

Applying the self-distillation loss in Eq.~\eqref{onestep-dist-obj} to linear regression with hyperparameters $\lambda > 0$ and $\xi \in \R$,
\begin{align} \label{est-sd1}
    \htheta(\lambda, \xi) \;&:=\; \arg\min_{\theta \in \R^d} \left(  \xi \,\| \Xt \htheta(\lambda) - \Xt \theta \|^2 +  (1 - \xi) \,\| \Y - \Xt \theta \|^2 + \lambda  \,\| \theta \|^2 \right) \\
    &=\; \left( \X \Xt + \lambda \Id \right)^{-1} \X \underbrace{\left( \xi \cdot \X^\top \htheta (\lambda) + (1 - \xi) \cdot \Y \right)}_{\text{New label}} \label{est-sd1-2} \\
    &= \;\underbrace{\left\{ (1 - \xi) \cdot \Id + \xi \cdot \Om_\lambda^{-1} \X \Xt \right\}}_{\text{Pre-conditioner: function of }(\lambda, \xi)} \cdot \underbrace{\Om_\lambda^{-1} \X \Y}_{\text{Ridge }  \htheta(\lambda)} \;, \label{est-sd1-3}
\end{align}
where $\xi\in\R$ is not restricted to the conventional $[0, 1]$ interval. This additional freedom is meaningful since it can result in a strictly better solution, as noted by \citet{pmlr-v202-das23d} both theoretically (Remark 3.6) and empirically (Table 1). \newtext{It is worth noting that the optimization problem in eq.~\eqref{est-sd1} remains convex for any $\xi \in \R$, since its hessian evaluates to $\xi \cdot 2 \X\Xt + (1 - \xi) \cdot 2 \X \Xt = 2 \X \Xt \succeq 0$.}
On the other hand, the teacher and student use the same ridge penalty $\lambda$ for simplicity.

We call this estimator {\em $1$-step self-distillation}.  This can be interpreted as 
($i$) assigning new labels that combine the ground-truth labels with the teacher's predictions, or ($ii$) pre-multiplying the usual ridge estimator with a pre-conditioner. Note that $\xi = 0$ recovers ridge. 
\citet[Theorem 3.8]{pmlr-v202-das23d} show that under a certain condition, 1-step self-distillation strictly dominates ridge, i.e.,
\begin{equation} \label{eq-thm-sdgain}
    \min_{\lambda \geq 0, \xi \in \R} \E_{\Noise} \left[ \| \htheta(\lambda, \xi) - \stheta \|_2^2  \right] \;<\; \min_{\lambda \geq 0} \E_{\Noise} \left[ \| \htheta(\lambda) - \stheta \|_2^2  \right]\;,
\end{equation}
where the risk is measured in the non-standard Euclidean norm. The same strict inequality can be shown under the standard $\hSigma_n$-norm under a slightly modified condition stated in Proposition~\ref{prop-condition}.
This naturally leads to a fundamental question: \emph{How much more gain can we get by repeatedly applying self-distillation?}

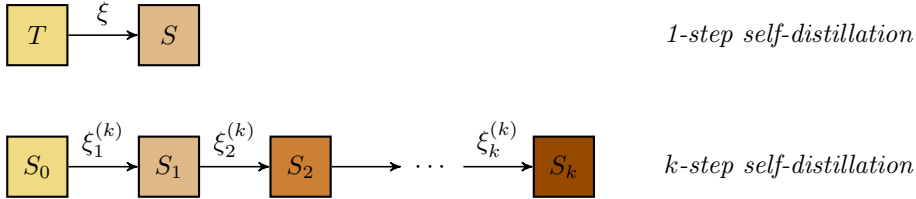
\begin{figure}[h]
\centering
\begin{tikzpicture}[->, >=stealth', auto, semithick, node distance=1.75cm]
\tikzstyle{state}=[draw, thick, fill=buff, minimum size=8mm]

\node[state] (T) {$T$};
\node[state, fill=burlywood] (S) [right of=T] {$S$};
\node[state, draw=none, fill=white] (lbl) at (10, 0) {\textit{1-step self-distillation}};
\path (T) edge node {$\xi$} (S);

\node[state] (S0) [below of=T] {$S_0$};
\node[state, fill=burlywood] (S1) [right of=S0] {$S_1$};
\node[state, fill=bronze] (S2) [right of=S1] {$S_2$};
\node[state, draw=none, fill=white] (Sdots) [right of =S2] {$\cdots$};
\node[state, fill=brown] (Sk) [right of=Sdots] {$S_k$};
\node[state, draw=none, fill=white] (lbla) at (10, -1.75) {\textit{$k$-step self-distillation}};
\path (S0) edge node {$\bxik_1$} (S1)
      (S1) edge node {$\bxik_2$} (S2)
      (S2) edge (Sdots)
      (Sdots) edge node {$\bxik_k$} (Sk);
\end{tikzpicture}
\caption{The standard $1$-step self-distillation defined in Eq.~\eqref{onestep-dist-obj} with parameter $\xi$ and $k$-step self-distillation that repeatedly applies Eq.~\eqref{onestep-dist-obj} with parameter $\bxik=[\bxik_1,\bxik_2,\ldots,\bxik_k] \in \R^k$.
}
\label{fig-sdk}
\end{figure}

\subsection{Repeated self-distillation} \label{sec-setup-Estimators-sdk}
The standard repeated application of self-distillation starts with the teacher model, $T$ (which we also refer to as the zeroth model,  $S_0$), and applies self-distillation sequentially for $k$ steps. At each step $i$, Eq.~\eqref{onestep-dist-obj} is applied with the $(i-1)^{th}$ model, $S_{i-1}$ as the teacher, and the $i^{th}$ model, $S_i$ as the student, with an imitation parameter $\bxik_i$, i.e., $\hat\theta \in \arg\min_\theta \big\{ \bxik_i \boldsymbol{\ell} \bigl( \hat{y}_{S_{i-1}}, y_{S_i}(\theta) \bigr) + (1 - \bxik_i )  \boldsymbol{\ell} \bigl( y, y_{S_i}(\theta) \bigr)\,\big\}$ for $i\in[k]$. The collection of parameters is denoted by $\bxik=[\bxik_1,\bxik_2,\ldots,\bxik_k] \in\R^k$.

This repeated self-distillation has been studied, for example, theoretically in \cite{mobahi2020self} and empirically in \cite{furlanello2018born}. We aim to understand its gain under linear regression, where we prove that 
\begin{align} \label{est-sdk}
    \htheta(\lambda, \underbrace{\bxik}_{\in \R^k}) \;&=\; 
    \underbrace{\left\{ \left( 1 - \sum_{i=1}^k \xik_i \right) \Id + \sum_{i=1}^k \xik_i \left( \Om_\lambda^{-1} \X \Xt \right)^i \right\}}_{\text{Pre-conditioner: }\Pre (\lambda, \bxik)} \cdot \underbrace{ \Om_\lambda^{-1} \X \Y}_{\text{Ridge }  \htheta(\lambda)} \;, 
\end{align}
with $\xik_i := (1 - \bxik_{k-i}) \prod_{l=k-i+1}^{k} \bxik_{l}$ for each $i \in [k]$, and we let $\bxik_0=0$. The proof that repeated SD with $\bxik \in \R^k$ results in Eq.~\eqref{est-sdk} is provided in Appendix~\ref{sec-app-xis-objective}. 
Here 
$\bxik \in \R^k$ denote the imitation parameters, $\lambda$ denotes the ridge coefficient for all the models, and 
$\xik \in \R^k$ is a \emph{reparametrization} of $\bxik \in \R^k$ (details in Appendix \ref{sec-app-xis-objective}).
We call this {\em $k$-step self-distillation}. 
Note the increasing flexibility in the pre-conditioner matrix. 
The increasing powers of $\Om_{\lambda}^{-1} \X \Xt$ in the above expression are still numerically stable, since, for $\lambda > 0$, $\Om_{\lambda}^{-1} \X \Xt$ is PSD with all eigenvalues in $[0, 1]$. 
As an aside, one can also consider a version of SD where the $i^{th}$ model receives supervision from all $S_{<i}$ instead of just $S_{i-1}$. Appendix \ref{sec-app-xis-fullVinc} shows that this version provides no extra representational capacity over the repeated version presented above, when all $k$ entries of $\bxik$ are optimized as free parameters. Hence, the procedure in Figure \ref{fig-sdk} suffices for analysis.



\section{Main results for linear regression} \label{sec-results-FD}

The main object of our study is to theoretically demonstrate the gains from repeated self-distillation. Concretely, we aim to show that there can be a significant multiplicative separation between the excess risk achieved by $r$-step SD (Self-Distillation), where $r$ is the rank of the input $\X$; compared to the ridge estimator, as well as the $1$-step SD (Section~\ref{sec-main1}). The necessity of the two main assumptions is shown in Section~\ref{sec-main2}. The sufficiency of $r$ steps of SD is shown in Section~\ref{sec-results-FD-optimalPre}. In Section~\ref{sec-results-FD-quadratic}, we provide an exact characterization of the excess risk achieved by $k$-step SD (for any $k$).


\subsection{The $r$-step self-distillation significantly improves upon the $1$-step self-distillation} \label{sec-main1}


We show the desired separation under 
the following assumption (and two more mild technical assumptions specified fully in Appendix~\ref{sec-app-proof-thm-FD-separation}). 
\begin{assumption}  \label{assump-FD-sj}
Assume the following two conditions hold on the problem instance $\left( \X, \stheta, \gamma^2 \right):$
\begin{enumerate}
\item No two non-zero singular values of $\X$ collide, i.e. $\s_1 > \s_2 > \cdots > \s_r > 0$, where $\{\s_j\}_{j=1}^r$ denote the non-zero singular values of the input data matrix $\X$ whose rank is denoted by $r$. \label{assump-FD-sj1}
\item $\measuredangle(\stheta, \u_1)=0,$ where $\{ \u_j \}_{j=1}^d$ denote the eigenvectors of $\X \Xt$, $\u_1$ being the leading one. \label{assump-FD-sj2}
\end{enumerate}
\end{assumption}
Assumption~\ref{assump-FD-sj} is needed to show that $r$-step SD achieves a small excess risk in Eq.~\eqref{eq-thm-FD-sep-SD}. 
In general, both these conditions are somewhat necessary for the separation, as we show in Theorems \ref{thm-FD-nonsep-assump} and \ref{thm-FD-nonsep-nonpeaky}. 
We now state our main result. We show that under the above assumption, there exists a family of problem instances, $(\X, \stheta, \gamma^2)$, such that the excess risk achieved by $r$-step SD is a factor of $r:={\rm rank}(\X)$ smaller than that of the ridge estimator \emph{and} the 1-step SD. 




\begin{theorem} \label{thm-FD-separation}
Under the fixed design linear regression in Assumption~\ref{assump-LR}, there exists a family of problem instances satisfying Assumption \ref{assump-FD-sj} such that for any instance $(\X, \stheta, \gamma^2)$ in the family, it holds that
\begin{align}
    \exists \lambda > 0, \exists \bxir \in \R^r, \hspace{10pt} &\ExcessRisk\left( \htheta (\lambda, \bxir) \right) \leq \frac{\gamma^2}{n} \label{eq-thm-FD-sep-SD} \;, \\
    \forall \lambda > 0, \forall \xi \in \R, \hspace{10pt} &\ExcessRisk\left( \htheta (\lambda, \xi) \right) \geq  \left( \nicefrac{0.99}{2^{9}} \right) \, \frac{r\gamma^2}{n} \label{eq-thm-FD-sep-oneSD} \;, \text{ and }\\
    \forall \lambda > 0, \hspace{10pt} &\ExcessRisk\left(\htheta(\lambda) \right) \geq \left( 0.98 \right) \, \frac{r \gamma^2}{n} \label{eq-thm-FD-sep-Ridge}\;,
\end{align}
where $r:={\rm rank}(\X)$, $n$ is the number of samples, $\htheta(\lambda,\bxir)$ and $\htheta(\lambda,\xi)$ are the $r$-step and $1$-step SD estimators defined in Eqs.~\eqref{est-sdk} and \eqref{est-sd1} respectively, and $\htheta(\lambda)$ is the ridge estimator defined in Eq.~\eqref{est-ridge-2}.
\end{theorem}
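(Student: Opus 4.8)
The plan is to work in the eigenbasis of $\X\Xt$, where everything decouples. Write the SVD-type decomposition $\X\Xt = \sum_{j=1}^r \s_j^2 \u_j \u_j^\top$, and let $\btheta := (\langle \stheta, \u_1\rangle, \ldots, \langle \stheta, \u_r\rangle)$ be the coordinates of $\stheta$ in this basis (Assumption~\ref{assump-FD-sj2} forces all the signal into $\u_1$, i.e. $\btheta = (\|\stheta\|, 0, \ldots, 0)$ after reindexing). In this basis the pre-conditioner $\Pre(\lambda,\bxik)$ is diagonal, and the $r$-step estimator acts on the $j$-th coordinate of the ridge estimate by a scalar multiplier that is a degree-$r$ polynomial $q(\mu_j)$ evaluated at $\mu_j := \s_j^2/(\s_j^2 + \lambda) \in (0,1)$, where the polynomial $q(x) = (1 - \sum_i \xi_i) + \sum_i \xi_i x^i$ has $r$ free coefficients (after reparametrization) — crucially $q(1) = 1$ for all choices of $\bxik$. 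The excess risk then splits into a bias term (governed by how far $q(\mu_1)$ times the ridge shrinkage is from $1$ on the signal direction) and a variance term $\frac{\gamma^2}{n}\sum_{j=1}^r \mu_j\, q(\mu_j)^2 \cdot (\text{ridge shrinkage})^2$ — I would first carefully re-derive this decomposition, likely reusing the exact quadratic-in-$\xi$ characterization promised in Theorem~\ref{thm-quadraticRisk}.

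**The three bounds.** For the upper bound \eqref{eq-thm-FD-sep-SD}: choose the problem instance so the $\mu_j$ are well-separated (e.g. $\mu_j \approx$ geometrically decreasing), then the $r$ free coefficients of $q$ let me interpolate — I want $q(\mu_1) \cdot (\text{shrinkage}_1) = 1$ to kill the bias on the signal direction, while simultaneously forcing $q(\mu_j)$ to be tiny for $j \geq 2$ to suppress variance there; with $r$ degrees of freedom and the constraint structure this is a solvable linear system, and by choosing $\lambda$ and the instance appropriately one drives the total excess risk down to $\gamma^2/n$ (essentially the variance contribution of a single well-conditioned direction). For the ridge lower bound \eqref{eq-thm-FD-sep-Ridge}: ridge has only $\lambda$ to tune, the multiplier on coordinate $j$ is just the shrinkage $\mu_j$, so the variance is $\frac{\gamma^2}{n}\sum_j \mu_j^3$ (or $\sum \mu_j^2$ depending on normalization) plus a bias $\propto (1-\mu_1)^2\|\stheta\|^2$; if $\|\stheta\|$ is taken large in the constructed family, any $\lambda$ keeping the bias controlled forces $\mu_1 \approx 1$, hence by the spectral design most $\mu_j$ are also bounded below, giving $\Omega(r\gamma^2/n)$. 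For the $1$-step lower bound \eqref{eq-thm-FD-sep-oneSD}: here $q(x) = (1-\xi) + \xi x$ is affine with $q(1)=1$, only $\xi$ and $\lambda$ free — two parameters against $r$ directions — so I'd argue that the affine multiplier cannot be simultaneously small on $\mu_2,\ldots,\mu_r$ while handling the bias on $\mu_1$; a pigeonhole / averaging argument over the directions combined with $q$ being monotone in $x$ should yield the $\Omega(r\gamma^2/n)$ lower bound with the stated explicit constant $0.99/2^9$.

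**Main obstacle.** The hard part will be constructing a single family of instances $(\X, \stheta, \gamma^2)$ — i.e. choosing the spectrum $\{\s_j\}$ and $\|\stheta\|$ — that makes \emph{all three} bounds hold simultaneously with the claimed constants. The upper bound wants the $\mu_j$ spread out so the degree-$r$ polynomial can isolate directions; the ridge lower bound wants $\|\stheta\|$ large enough that bias control forces $\mu_1 \to 1$; and the $1$-step lower bound needs the remaining $\mu_j$'s to be neither too close to $1$ (else the affine $q$ keeps them near $1$ automatically, but that only helps the lower bound) nor too close to $0$. Pinning down an explicit geometric sequence for $\s_j^2$ and an explicit $\|\stheta\|$ (likely scaling like a power of $r$), then verifying the optimization over $(\lambda,\xi)$ in \eqref{eq-thm-FD-sep-oneSD} really is bounded below after taking the min — including checking the two extra mild technical assumptions deferred to Appendix~\ref{sec-app-proof-thm-FD-separation} are consistent with this family — is where the real work lies. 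I expect the proof to proceed by: (1) state the family explicitly; (2) prove the variance/bias decomposition in the eigenbasis; (3) exhibit the interpolating $q$ for the upper bound; (4) lower-bound ridge by a direct $\lambda$-optimization; (5) lower-bound $1$-step by analyzing the two-parameter affine family against the fixed spectrum.
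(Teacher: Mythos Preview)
Your high-level plan --- work in the eigenbasis, decompose into bias plus variance, exploit the polynomial structure of the preconditioner --- matches the paper, and your treatment of the $r$-step upper bound (interpolation; this is essentially the content of Theorem~\ref{thm-sdkPre-optPre}) and of the ridge lower bound (large $\|\stheta\|$ forces the optimal $\lambda$ to be small, after which the variance term alone gives $\Omega(r\gamma^2/n)$) tracks the paper's argument closely.

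There is, however, a genuine issue with your proposed family. You suggest a \emph{geometric} spectrum for $\s_j^2$ so that the $\mu_j$ are well separated; the paper does the opposite. Its family imposes a \emph{bounded condition number} $\s_1^2/\s_r^2 \leq 2$ together with $\|\stheta\|^2 \gtrsim r\gamma^2 \s_1^2/\s_r^4$, so the singular values are distinct but packed into a narrow window. This matters because the paper's $1$-step lower bound carries a factor $(\s_r/\s_1)^8$ in the final constant; a geometric spectrum would make that factor exponentially small in $r$ and destroy the $\Omega(r)$ separation in Eq.~\eqref{eq-thm-FD-sep-oneSD}. The $r$-step upper bound, by contrast, needs only \emph{distinctness} of the $\s_j$ (Assumption~\ref{assump-FD-sj}.\ref{assump-FD-sj1}), not spread, so there is no tension once you switch to the bounded-condition-number construction.

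Your proposed argument for the $1$-step lower bound (a pigeonhole/monotonicity argument on the affine $q$) is also different in kind from the paper's and is the vaguest part of the plan. The paper does not reason geometrically about what an affine multiplier can accomplish across $r$ directions; instead it invokes the quadratic form of Theorem~\ref{thm-quadraticRisk} at $k=1$, writes the excess risk as $M\xi^2 + 2m\xi + c$, minimizes over $\xi$ to obtain $(Mc-m^2)/M$, and then expands $Mc-m^2$ explicitly: the double sums over pairs of directions reorganize into two nonnegative sums-of-squares plus a residual cross term that is bounded below directly, while $M$ is upper-bounded separately in the small-$\lambda$ and large-$\lambda$ regimes. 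The bounded condition number is precisely what lets these explicit bounds assemble into a uniform-in-$\lambda$ constant times $r\gamma^2/n$. Your pigeonhole idea may be workable, but as stated it does not handle the regime of unbounded $\xi$ (the zero of $q$ can be placed anywhere in $(0,1)$), and extracting the specific constant $0.99/2^9$ from such an argument would likely force you back to the paper's explicit computation anyway.
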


We provide precise conditions and a proof in Appendix~\ref{sec-app-proof-thm-FD-separation}. Since each $k$-step SD includes $(k-1)$-step SD as a special case with the proper choice of $\bxik$, the hyperparameter-tuned excess risk of repeated SD is monotonically non-increasing. However, it is perhaps unexpected that the multiplicative separation between $r$-step SD and $1$-step SD can be as large as $\Omega(r)$, demonstrating the gains of repeated SD. 
\newtext{Figure~\ref{fig-synth-ridge-vs-rstepSD} illustrates this $\Omega(r)$ multiplicative separation on a synthetic family of problems.}
Note that $\Omega(d)$ separation can be achieved by choosing the problem instance to have rank $r=d$, at the cost of requiring many more steps of SD. This $\Omega(d)$ factor is the largest multiplicative separation possible with self-distillation, as shown by the fundamental lower bound in Theorem~\ref{thm-sdkPre-optPre} for any pre-conditioning based approach. 

\begin{remark}
    SD significantly outperforms ridge by primarily reducing the variance. For the lower bound on ridge's excess risk, i.e., Eq.~\eqref{eq-thm-FD-sep-Ridge}, we ignored the bias term and only used the variance term. The repeated SD (Eq.~\eqref{eq-thm-FD-sep-SD}) primarily reduces the variance to improve the excess risk over Eq.~\eqref{eq-thm-FD-sep-Ridge}.
\end{remark} 

\begin{figure}[h]
\includegraphics[width=\textwidth]{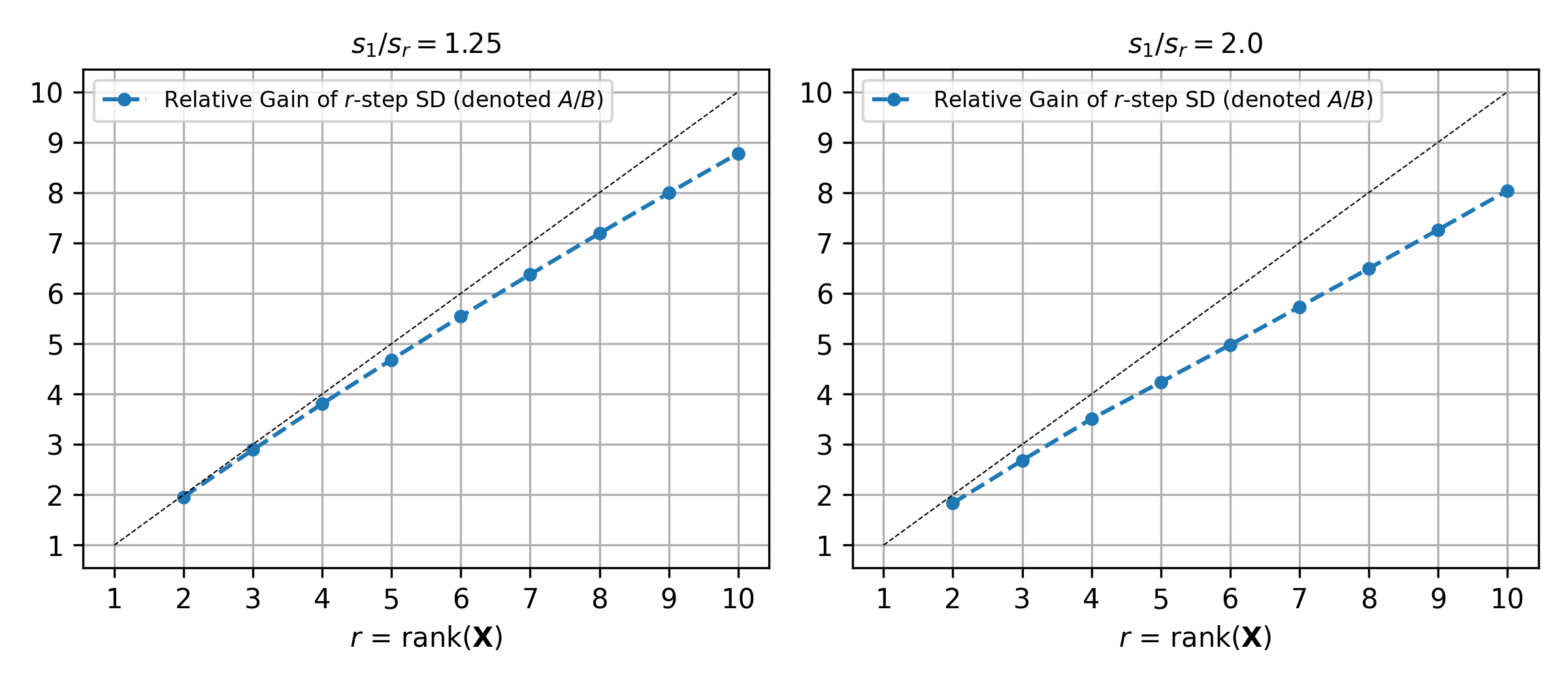}
\caption{
\newtext{On a synthetic problem family with dimension $d=100$, noise variance $\gamma=0.1$, and $\stheta=\u_1$ (agreement with Asmp.~\ref{assump-FD-sj}.\ref{assump-FD-sj2}); we set the singular values of $\X$ with a power law from $s_1=1$ to $s_r=\{0.8, 0.5\}$ (left and right panels) and vary $r={\rm rank}(\X)$. Both plots show a linear increase of the relative gain of $r$-step self-distillation in excess risk, i.e. the ratio $\nicefrac{A}{B}$ where $A:= \min_{\lambda > 0} \ExcessRisk \bigl( \htheta(\lambda) \bigr)$ and $B:= \min_{\lambda > 0, \bxir \in \R^r} \ExcessRisk \bigl( \htheta(\lambda, \bxir) \bigr)$; demonstrating that $r$-step SD outperforms ridge by a factor of $\Omega(r)$, with the constant inside the $\Omega$ (i.e. slope of the line) changing with the effective condition number, $\nicefrac{s_1}{s_r}$.}
}
\label{fig-synth-ridge-vs-rstepSD}
\end{figure}


\subsection{Necessity of Assumption~\ref{assump-FD-sj}} \label{sec-main2}

In Figure~\ref{fig-synth-all}, we empirically show on synthetic tasks how violating Assumption~\ref{assump-FD-sj}.\ref{assump-FD-sj1} or \ref{assump-FD-sj}.\ref{assump-FD-sj2} leads to higher excess risks, even for the $r$-step SD ($r=4$ in the example). This supports the necessity of both assumptions, which we analytically investigate in the following. 

\textbf{Necessity of Assumption~\ref{assump-FD-sj}.\ref{assump-FD-sj1} on $\X$}. 
We assume that the non-zero singular values of $\X$ are unique. This allows us to tightly upper bound the excess risk achieved by $r$-step SD in Eq.~\eqref{eq-thm-FD-sep-SD} via Theorem~\ref{thm-sdkPre-optPre}. We show in the following that some version of Assumption \ref{assump-FD-sj}.\ref{assump-FD-sj1} is also \emph{necessary}. For a more detailed explanation of why  we need Assumption \ref{assump-FD-sj}.\ref{assump-FD-sj1}, we refer the reader to Remark \ref{remark-necessity-FD-sj1}.


\begin{theorem} \label{thm-FD-nonsep-assump}
Under the hypotheses of Theorem \ref{thm-FD-separation} except for Assumption~\ref{assump-FD-sj}.\ref{assump-FD-sj1}, if the singular values of $\X$ satisfy  $\s_1 = \ldots=s_r=1$, where $r={\rm rank}(\X)$,  for all $k \geq 1$, $\lambda>0$, and $\bxik\in\R^k$, we have 
\begin{align} 
    &\ExcessRisk\left( \htheta \left( \lambda, \bxik \right) \right) \; \geq \; \frac{r \gamma^2}{n} \left( {1 + \frac{r \gamma^2}{\sum_{j=1}^r \langle \stheta, \u_j \rangle^2}} \right)^{-1} \label{eq-thm-FD-nonsep-assump-1} \;.
\end{align}
Furthermore, there exists $\lambda > 0$ such that the ridge, $\htheta(\lambda)$, achieves this lower bound with equality. 
\end{theorem}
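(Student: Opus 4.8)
The plan is to exploit the degeneracy of the spectrum under the hypothesis $\s_1=\cdots=\s_r=1$: in that case $\X\Xt=\Pi$, the orthogonal projection onto $\mathrm{col}(\X)$, so $\Om_\lambda^{-1}\X\Xt=\beta\,\Pi$ with $\beta:=1/(1+\lambda)\in(0,1)$, and every power $(\Om_\lambda^{-1}\X\Xt)^i=\beta^i\Pi$ is a scalar multiple of the \emph{same} projection. Consequently the $k$-parameter pre-conditioner in Eq.~\eqref{est-sdk} collapses: writing $\Om_\lambda^{-1}\X\Y=\beta(\Pi\stheta+\X\Noise)\in\mathrm{col}(\X)$ (on which $\Pi$ acts as the identity), one gets $\htheta(\lambda,\bxik)=\alpha\,(\Pi\stheta+\X\Noise)$ with $\alpha:=\beta\bigl(1-\sum_{i=1}^k\xik_i(1-\beta^i)\bigr)\in\R$. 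Thus, for any $k$, every admissible $k$-step SD estimator is a one-parameter shrinkage of $\Pi\stheta+\X\Noise$, and one only has to optimize the scalar $\alpha$.

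Next I would compute the excess risk of this one-parameter family. Since $\hSigma_n=\tfrac1n\Pi$, we have $\|\htheta(\lambda,\bxik)-\stheta\|_{\hSigma_n}^2=\tfrac1n\|(\alpha-1)\Pi\stheta+\alpha\X\Noise\|_2^2$; taking $\E_\Noise$ the cross term drops ($\E[\X\Noise]=0$) and $\E\|\X\Noise\|_2^2=\gamma^2\,\mathrm{tr}(\X\Xt)=\gamma^2 r$, so $\ExcessRisk(\htheta(\lambda,\bxik))=\tfrac1n\bigl[(\alpha-1)^2\|\Pi\stheta\|_2^2+\alpha^2\gamma^2 r\bigr]$ with $\|\Pi\stheta\|_2^2=\sum_{j=1}^r\langle\stheta,\u_j\rangle^2$. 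Minimizing this quadratic over $\alpha\in\R$ gives $\alpha^\star=\|\Pi\stheta\|_2^2/(\|\Pi\stheta\|_2^2+\gamma^2 r)$ and optimal value $\frac{r\gamma^2}{n}\bigl(1+\frac{r\gamma^2}{\sum_{j}\langle\stheta,\u_j\rangle^2}\bigr)^{-1}$; since this is a minimum over all $\alpha$, it lower-bounds the risk of every $k$-step SD, which is exactly Eq.~\eqref{eq-thm-FD-nonsep-assump-1}.

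For the ``furthermore'' claim, note that ridge corresponds to $\bxik=\mathbf 0$ (hence $\xik=\mathbf 0$), where $\alpha=\beta=1/(1+\lambda)$; choosing $\lambda=r\gamma^2/\sum_{j=1}^r\langle\stheta,\u_j\rangle^2>0$ makes $\alpha=\alpha^\star$, so the ridge attains the bound with equality. Observe that for the lower bound we never need $\alpha$ to be surjective onto $\R$, and for the equality we only need that $\bxik=\mathbf 0$ is admissible, so the $\bxik\mapsto\xik$ reparametrization is irrelevant to the argument.

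Honestly, there is no real obstacle once the spectral collapse in the first paragraph is seen — the rest is a one-dimensional least-squares computation. The only subtlety to be careful about is that the components of $\htheta$ and $\stheta$ orthogonal to $\mathrm{col}(\X)$ are invisible both to the pre-conditioner (because $\Om_\lambda^{-1}\X\Y$ already lies in $\mathrm{col}(\X)$) and to the $\hSigma_n$-seminorm, which is why the apparently $k$-dimensional freedom is really one-dimensional here. It is also worth double-checking that the hypothesis $\measuredangle(\stheta,\u_1)=0$ inherited from Theorem~\ref{thm-FD-separation} plays no role: the argument holds for arbitrary $\stheta$ with $\sum_{j=1}^r\langle\stheta,\u_j\rangle^2$ in place of $\|\stheta\|_2^2$ (they coincide when $\stheta\parallel\u_1$), matching the form written in the theorem.
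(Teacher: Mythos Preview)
Your proof is correct. The underlying mechanism is the same as the paper's---the $k$ degrees of freedom in $\bxik$ collapse to a single effective parameter when all $\s_j$ coincide---but you reach it by a slightly different and arguably cleaner route. The paper invokes its general quadratic-risk machinery (Theorem~\ref{thm-quadraticRisk}): with $\rho_j\equiv\lambda$ the coefficients $C_j(i)$ become $j$-independent, so $\Mk$ is rank one in the direction $\omega=[C(1),\dots,C(k)]$ and the quadratic in $\xik$ depends only on the scalar $\langle\omega,\xik\rangle$; minimizing that scalar quadratic yields the bound, and substituting $\slambda=r\gamma^2/Q$ into the ridge risk formula gives the equality. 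You instead observe the collapse one level earlier, at the estimator itself: $\Om_\lambda^{-1}\X\Xt=\beta\Pi$ forces $\htheta(\lambda,\bxik)=\alpha\,(\Pi\stheta+\X\Noise)$ for a single scalar $\alpha$, so the excess risk is manifestly a one-variable quadratic in $\alpha$. Both derivations arrive at the same minimizer $\alpha^\star=Q/(Q+r\gamma^2)$ and the same ridge parameter $\lambda=r\gamma^2/Q$; your version is self-contained (no appeal to Theorem~\ref{thm-quadraticRisk}) and makes the geometry transparent, while the paper's version illustrates how the general quadratic form degenerates. Your closing remark that Assumption~\ref{assump-FD-sj}.\ref{assump-FD-sj2} plays no role is also correct and matches the paper, whose proof likewise works for arbitrary $\stheta$ via $Q=\sum_{j=1}^r\langle\stheta,\u_j\rangle^2$.
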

We provide a proof in Appendix~\ref{sec-app-proof-thm-FD-nonsep-assump}. This implies that when there is no gap in the singular values of the input $\X$, there is no separation between ridge and SD estimators (repeated or not).
Intuitively, if the $\s_j$ are all equal, 
the pre-conditioner for ridge (i.e., $\Om_{\lambda}^{-1}$) and the pre-conditioner for the repeated SD, both are restricted to have all eigenvalues to be equal.
(Repeated) SD has no degrees of freedom to deviate from this.
However, $\s_j$'s being unequal provides the freedom for the $\bxik$ to control the SD's pre-conditioner matrix, and reduce the excess risk. 
This is also why in Remark \ref{remark-necessity-FD-sj1}, we hypothesize that numerically, the optimal $\left(\bxik\right)^\star$ depends inversely on the min-gap of the singular values.
\newtext{Figure~\ref{fig-synth-zistar-vs-singgap} demonstrates this increasing relationship of the magnitude of the optimal $\xi$ parameters w.r.to the decreasing singular gap.}

\begin{figure}[h]
\begin{subfigure}{.33\textwidth}
  \includegraphics[width=\textwidth]{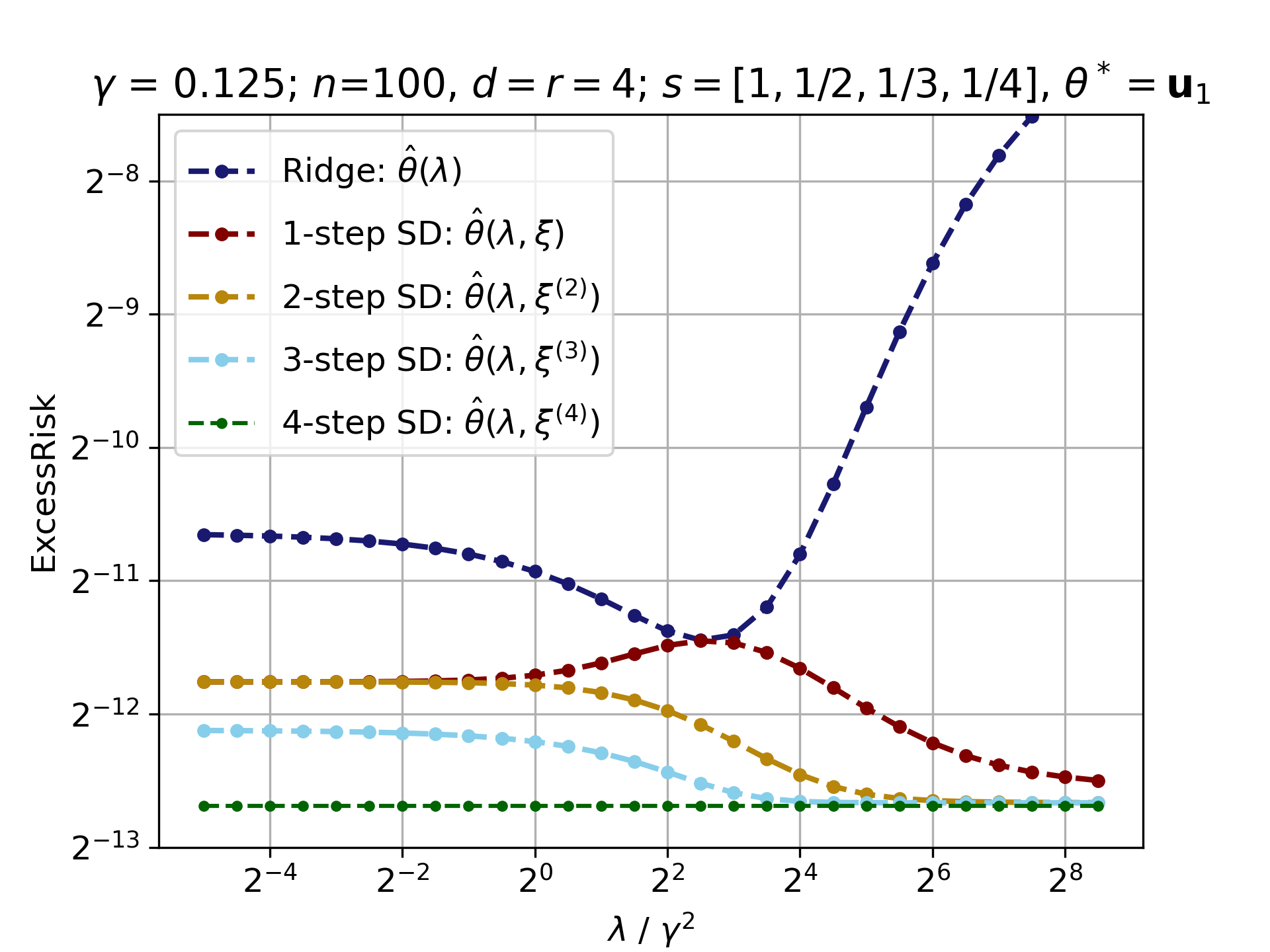}
  \caption{$\s := [1,\nicefrac{1}{2},\nicefrac{1}{3},\nicefrac{1}{4}], \stheta = \u_1$}
  \label{fig-synth_A}
\end{subfigure}
\begin{subfigure}{.33\textwidth}
  \includegraphics[width=\linewidth]{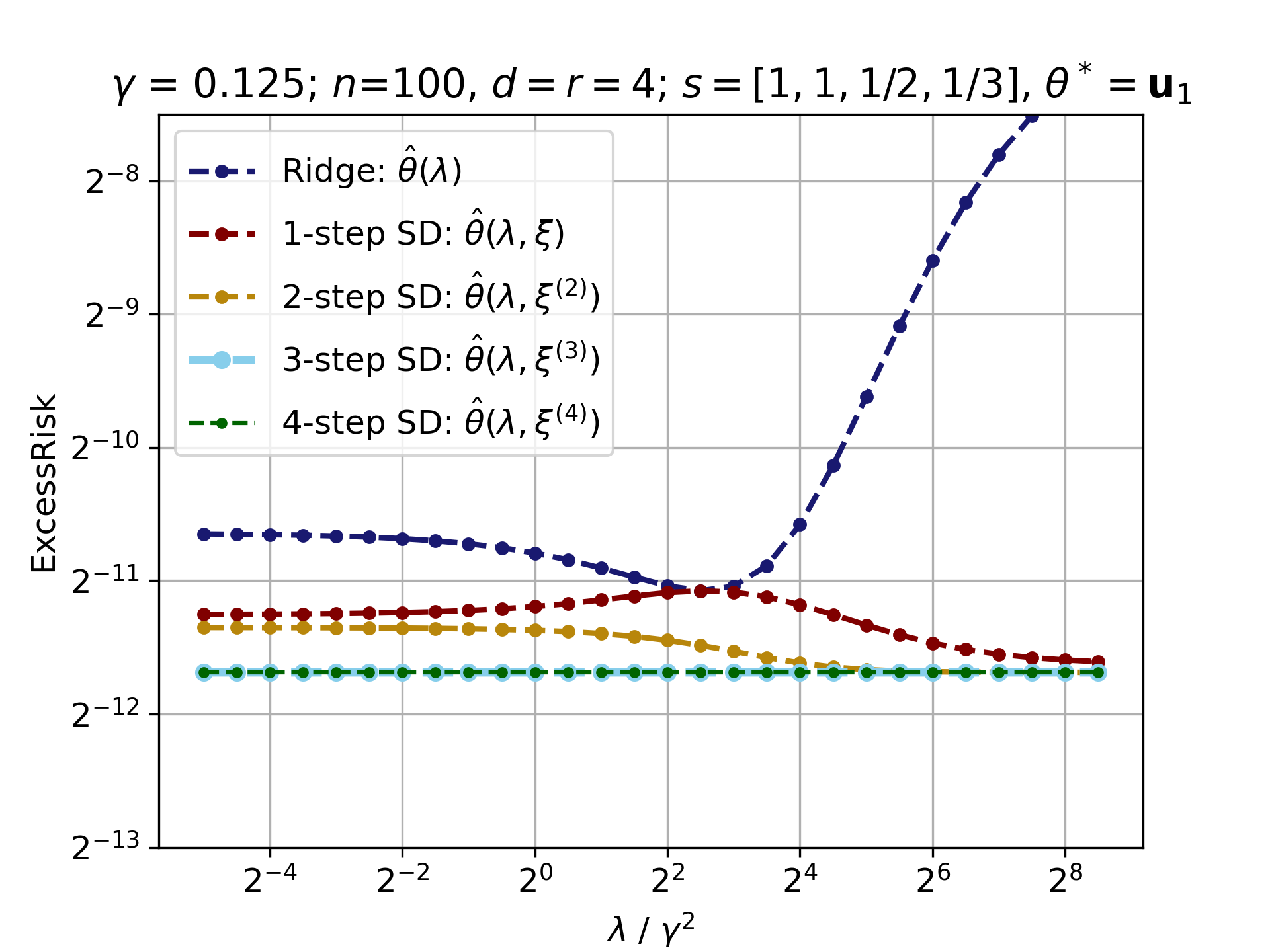}
  \caption{$\s := [1,1,1/2,1/3]$}
  \label{fig-synth_B}
\end{subfigure}
\begin{subfigure}{.33\textwidth}
  \includegraphics[width=\linewidth]{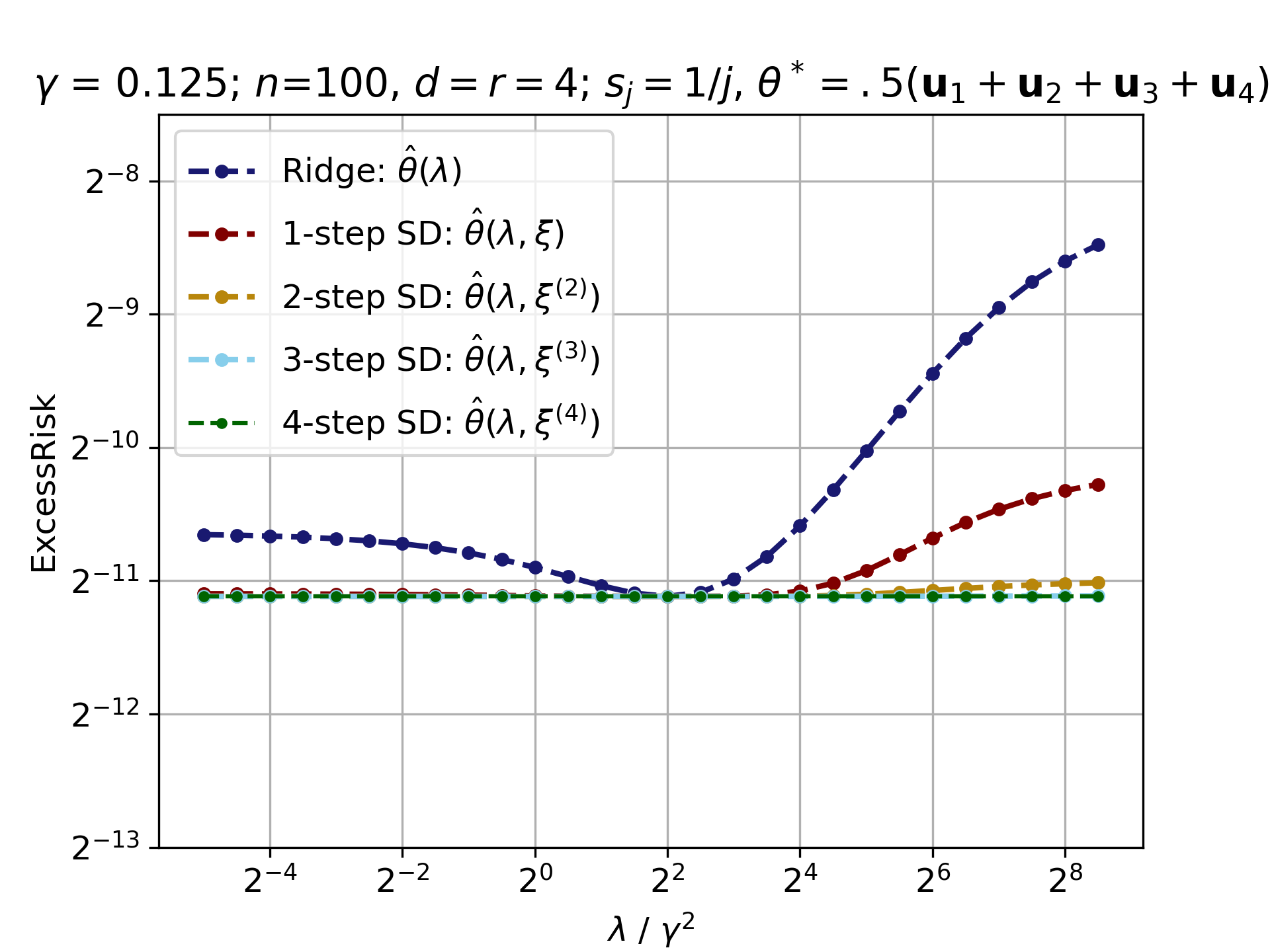}
  \caption{$\stheta := 0.5(\u_1+\u_2+\u_3+\u_4)$}
  \label{fig-synth_C}
\end{subfigure}
\caption{On a synthetic task (explained in Section~\ref{sec-expts-synth}), $\X$ has rank $4$ with (a) $\stheta=\u_1$ and distinct $s_j$'s; (b) $s=[1,1,\nicefrac{1}{2},\nicefrac{1}{3}]$; (c) $\stheta=0.5(\u_1+\u_2+\u_3+\u_4)$. Each additional step of SD with optimal choice of $\bxik$ reduces $\ExcessRisk( \htheta(\lambda, (\bxik)^\star) )$ for any choice of $\lambda$ on the $x$-axis. Panel (a) satisfies Asmp.~\ref{assump-FD-sj} and hence $4$-step SD is necessary to achieve the optimal excess risk. This is no longer true when Asmp.~\ref{assump-FD-sj}.\ref{assump-FD-sj1} is violated (b) or Asmp.~\ref{assump-FD-sj}.\ref{assump-FD-sj2} is violated (c).
\newtext{
Excess risk achieved by $4$-step SD (i.e. the green lines) in panels (a) and (c) exactly match the numerical value given by RHS of eq.~\eqref{eq-thm-sdk-opt-ER-LowerBound}, i.e. the fundamental lower bound for any SD estimator. But this is not the case in panel (b) [which has the same lower bound from eq.~\eqref{eq-thm-sdk-opt-ER-LowerBound} as panel (a)], because Asmp.~\ref{assump-FD-sj}.\ref{assump-FD-sj1} is violated.}
}
\label{fig-synth-all}
\end{figure}

\begin{figure}[H]
\includegraphics[width=\textwidth]{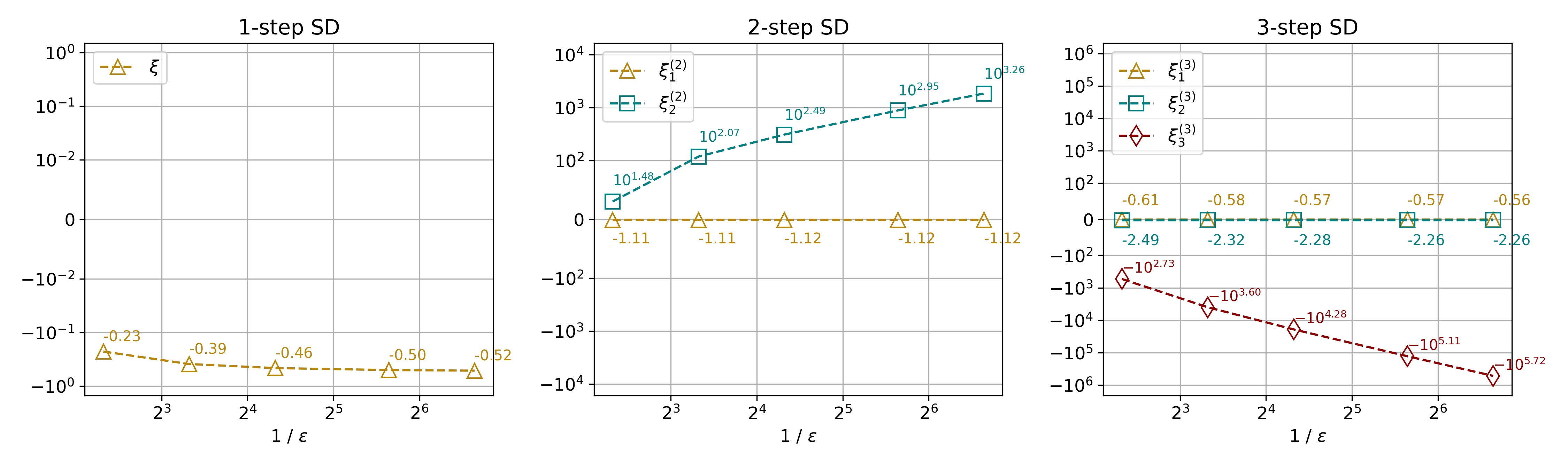}
\caption{
\newtext{On the synthetic problem from Figure~\ref{fig-synth_A}, we fix $\lambda = 0.125$ and set the singular values of $\X$ as $\s_j = \{1 - (j-1) \epsilon\}$, i.e. consecutive values are separated by $\epsilon$. For $k$-step SD with $k=\{1, 2, 3\}$, we plot $(\bxik)^\star (\lambda)$ (i.e. optimal values of the $\xi$ parameters) by varying $\epsilon \in \{0.2, 0.1, 0.05, 0.02, 0.01\}$. The magnitude of $\bxik_k$ values increases as the singular gap $\epsilon$ decreases, verifying Remark~\ref{rem-ill-cond}.}
}
\label{fig-synth-zistar-vs-singgap}
\end{figure}

\textbf{Necessity of Assumption~\ref{assump-FD-sj}.\ref{assump-FD-sj2} on $\stheta$}. 
For simplicity, we assumed that $\stheta$ is aligned solely with $\u_1$ (i.e., projection onto any other $\u_j$ is zero for $j \geq 2$). 
In general, it is sufficient that $\stheta$ is completely aligned with \emph{any one} of the eigenvectors of $\X\Xt$ (not necessarily the leading eigenvector) to prove a large separation between ridge and repeated SD.
We make this precise in Appendix \ref{sec-app-peaky-discussion}. 
We show next that if 
$\stheta$ is equally (mis)aligned with all the eigenvectors $\{ \u_j \}_{j=1}^r$ of $\X\Xt$, then again there is no separation between ridge and repeated SD. 

\begin{theorem} \label{thm-FD-nonsep-nonpeaky}
Under the hypotheses of Theorem~\ref{thm-FD-separation} except for Assumption~\ref{assump-FD-sj}.\ref{assump-FD-sj2}, if the true parameter $\stheta$ satisfies $\langle \stheta, \u_j \rangle^2 = z$ for all $j \in [r]$, it holds that for all $z>0$, $k \geq 1,  \lambda > 0$, and $\bxik \in \R^{k}$, 
\begin{align} 
    &\ExcessRisk\left( \htheta ( \lambda, \bxik ) \right) \; \geq \; \frac{\gamma^2}{n} \sum_{j=1}^r \left( {1 + \frac{\gamma^2}{z \s_j^2}} \right)^{-1} \;. \label{eq-thm-FD-nonsep-nonpeaky-1} 
\end{align}
Furthermore, there exists $\lambda > 0$ such that the ridge, $\htheta(\lambda)$, achieves this lower bound with equality. 
\end{theorem}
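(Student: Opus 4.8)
The plan is to reduce $k$-step SD to a diagonalized, coordinate-wise problem and then show that, under the equal-alignment hypothesis $\langle \stheta, \u_j\rangle^2 = z$, the common value $z$ factors out so the pre-conditioner cannot differentiate directions any better than ridge can. First I would work in the eigenbasis $\{\u_j\}$ of $\X\Xt$: write $\X\Xt = \sum_j \s_j^2 \u_j\u_j^\top$ (with $\s_j=0$ for $j>r$), so that $\Om_\lambda^{-1}\X\Xt$ is diagonal in this basis with entries $\beta_j(\lambda) := \s_j^2/(\s_j^2+\lambda)$. The pre-conditioner $\Pre(\lambda,\bxik)$ from Eq.~\eqref{est-sdk} is then diagonal with $j$-th entry $p_j := (1-\sum_i \xik_i) + \sum_i \xik_i \beta_j^i = q(\beta_j)$, where $q$ is the degree-$k$ polynomial $q(t) = 1 - \sum_i \xik_i(1 - t^i)$ satisfying $q(1)=1$. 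The $k$-step SD estimator acts on the ridge estimator $\Om_\lambda^{-1}\X\Y$, whose $j$-th coordinate (in the eigenbasis) has mean $\beta_j \langle\stheta,\u_j\rangle$ and, using $\E[\Noise\Noise^\top]=\gamma^2\In$, variance $\gamma^2 \s_j^2/(\s_j^2+\lambda)^2 = \gamma^2\beta_j/\s_j^2$. Collecting the standard bias-variance decomposition and the $\hSigma_n$-norm weight $\s_j^2/n$ per direction, the excess risk becomes
\begin{align*}
\ExcessRisk\left(\htheta(\lambda,\bxik)\right) = \frac{1}{n}\sum_{j=1}^r \s_j^2\Big[ (p_j\beta_j - 1)^2 \langle\stheta,\u_j\rangle^2 + p_j^2 \,\frac{\gamma^2\beta_j}{\s_j^2}\Big].
\end{align*}
Substituting $\langle\stheta,\u_j\rangle^2 = z$, writing $a_j := p_j\beta_j$ (the $j$-th ``effective eigenvalue'' of the full estimator), and using $\beta_j/\s_j^2 = 1/(\s_j^2+\lambda)$, each summand is $\frac{\s_j^2}{n}\big[ z(a_j-1)^2 + \gamma^2 a_j^2/\s_j^2 \cdot (\s_j^2/\beta_j)^{-1}\cdot\ldots\big]$ — more cleanly, $\frac{1}{n}\big[ z\,\s_j^2 (a_j-1)^2 + \gamma^2\, a_j^2\big]$ after simplifying $p_j^2 \s_j^2 \gamma^2\beta_j/\s_j^2 = \gamma^2 a_j^2 / \beta_j \cdot \beta_j = $ wait — I would re-derive this factor carefully, but the upshot is that each term is a function of $a_j$ alone (given $z,\gamma^2,\s_j$), of the form $f_j(a_j) := z\s_j^2(a_j-1)^2 + \gamma^2 a_j^2$ divided by $n$.

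The second step is the key observation: since each $f_j$ is a strictly convex quadratic in $a_j$, it is minimized at the unconstrained optimum $a_j^\star = z\s_j^2/(z\s_j^2+\gamma^2) = (1 + \gamma^2/(z\s_j^2))^{-1}$, giving $f_j(a_j^\star) = z\s_j^2\gamma^2/(z\s_j^2+\gamma^2) = \gamma^2 (1+\gamma^2/(z\s_j^2))^{-1}$. Summing over $j$ and dividing by $n$ yields exactly the claimed RHS of Eq.~\eqref{eq-thm-FD-nonsep-nonpeaky-1}. So this is an unconditional lower bound on $\ExcessRisk$ over \emph{all} choices of $a_1,\dots,a_r$, hence in particular over all $(\lambda,\bxik)$. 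This step is essentially automatic once the risk is in the separable form above — the whole content of the theorem is that equal alignment collapses the per-direction objectives to depend only on $a_j$, so the joint coupling between directions (a single $\lambda$, a single length-$k$ vector $\bxik$) is irrelevant to the lower bound.

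The third step is to exhibit equality for ridge. For the pure ridge estimator ($\bxik=0$, i.e. $q\equiv 1$, $p_j=1$) we have $a_j = \beta_j(\lambda) = \s_j^2/(\s_j^2+\lambda)$, which ranges over $(0,1)$ as $\lambda$ ranges over $(0,\infty)$, but crucially it does \emph{not} hit every target $a_j^\star$ simultaneously. However, the claim is only that \emph{some} $\lambda$ achieves the bound, so I need $a_j^\star$ to be simultaneously attainable by a single $\lambda$: this forces $\s_j^2/(\s_j^2+\lambda) = z\s_j^2/(z\s_j^2+\gamma^2)$ for all $j$, i.e. $\lambda = \gamma^2/z$ independent of $j$ — and indeed this works for every $j$. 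So $\lambda^\star = \gamma^2/z > 0$ makes ridge achieve the lower bound with equality. (Note this is where the equal-alignment hypothesis does real work a second time: only because the ``right'' regularization level is the same in every direction can one scalar $\lambda$ be optimal everywhere.) I would then briefly remark that this same $\lambda$ is also optimal among all SD estimators, so no SD flexibility helps, mirroring the intuition stated after Theorem~\ref{thm-FD-nonsep-assump}.

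The main obstacle is purely bookkeeping in the first step: getting the variance factor and the $\hSigma_n$-norm weighting exactly right so that each direction's contribution cleanly becomes $\frac1n\big(z\s_j^2(a_j-1)^2 + \gamma^2 a_j^2\big)$, with the cross terms in the bias-variance expansion vanishing (they do, because $\E[\Noise]=0$ decouples the bias and variance parts). Once that algebra is pinned down — and it is the same computation underlying Eq.~\eqref{est-sdk} and Theorem~\ref{thm-sdkPre-optPre}, so it can likely be cited rather than redone — the convex-quadratic minimization and the ridge equality case are short. One should double-check the degenerate possibility that the minimizing $a_j^\star$ is unattainable by \emph{any} $(\lambda,\bxik)$ within the allowed class (it is attainable, e.g. by ridge at $\lambda=\gamma^2/z$), so the lower bound is tight and not merely valid.
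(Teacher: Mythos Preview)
Your proposal is correct and follows essentially the same route as the paper: the lower bound is exactly the per-direction quadratic minimization underlying Lemma~\ref{lem-optimalPre}/Theorem~\ref{thm-sdkPre-optPre} (which the paper simply cites, substituting $\stheta_j=z$), and your ridge-achievability argument at $\lambda=\gamma^2/z$ matches the paper's computation via Lemma~\ref{lem-Ridge}. Your variance bookkeeping is right once cleaned up: $\frac{\s_j^2}{n}\cdot p_j^2\cdot\frac{\gamma^2\s_j^2}{(\lambda+\s_j^2)^2}=\frac{\gamma^2}{n}\,a_j^2$ with $a_j=p_j\beta_j$, yielding the separable form $\frac1n\sum_j\big(z\s_j^2(a_j-1)^2+\gamma^2 a_j^2\big)$ as you state.
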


We provide a proof in Appendix~\ref{sec-app-proof-thm-FD-nonsep-nonpeaky}. Similar conditions are needed when analyzing 1-step SD in \cite{pmlr-v202-das23d} as well; \cite[Eq.~(9) in Theorem 3.8]{pmlr-v202-das23d} is required for the $1$-step SD to strictly outperform ridge. Observe that Eq.~(9) is violated when either ($i$) $\s_j$'s are all equal or ($ii$) $\langle \stheta, \u_j \rangle^2 $'s are all equal.


\subsection{$r$ steps of self-distillation are sufficient} \label{sec-results-FD-optimalPre}
For a given problem instance $(\X,\stheta,\gamma^2)$, the excess risk achieved by the $k$-step SD with parameter $\bxik$ can be exactly characterized (Theorem~\ref{thm-quadraticRisk}), but it is complicated and can only be evaluated numerically in general. On the other hand, we show that there exists a fundamental lower bound that holds for a linear family of  estimators including all repeated SD, and this lower bound has a simple characterization (Lemma~\ref{lem-optimalPre}). Furthermore, we show that under a mild assumption on the eigenvalues of $\X\Xt$ in Assumption~\ref{assump-FD-sj}.\ref{assump-FD-sj1}, the $r$-step SD achieves the lower bound (Theorem~\ref{thm-sdkPre-optPre}). This allows a precise characterization of the performance of $r$-step SD.  

\begin{theorem} \label{thm-sdkPre-optPre}
    Under the fixed design linear regression in Assumption~\ref{assump-LR}, the excess risk of any $k$-step SD estimator on an instance $(\X, \stheta, \gamma^2)$, is lower bounded for all $k \geq 1$, $\lambda > 0$, and $\bxik \in \R^{k}$ by 
    \begin{align} 
        \ExcessRisk\left( \htheta ( \lambda, \bxik ) \right) &\; \geq \;  \frac{\gamma^2}{n} \sum_{j=1}^r \frac{ \langle \stheta, \u_j \rangle^2 }{\left( \langle \stheta, \u_j \rangle^2  + \frac{\gamma^2}{\s_j^2} \right)} \label{eq-thm-sdk-opt-ER-LowerBound}\;, 
    \end{align}
    where $(s_j,\u_j)$ is the $j^{\text{th}}$ eigenvalue and eigenvector of $\X$ and $r:={\rm rank}(\X)$. Furthermore, 
    if Assumption \ref{assump-FD-sj}.\ref{assump-FD-sj1} holds then there exists $\lambda > 0$ and $\bxir \in \R^{r}$ such that the equality is achieved by the $r$-step SD estimator $\htheta(\lambda,\bxir)$.
\end{theorem}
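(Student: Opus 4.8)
The plan is to pass to the eigenbasis of $\X\Xt$, where every $k$-step SD estimator decouples into $d$ independent scalar estimators, reduce the excess risk to a sum of scalar bias--variance expressions, and then extract both the lower bound and the achievability from this decoupled picture.

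First I would introduce the eigendecomposition $\X\Xt = \sum_{j=1}^d \s_j^2\,\u_j\u_j^\top$ (with $\s_j=0$ for $j>r$) together with the compatible SVD $\X = \sum_{j=1}^r \s_j\,\u_j\v_j^\top$. Since $\Om_\lambda$, $\X\Xt$ and hence $\Om_\lambda^{-1}\X\Xt$ are simultaneously diagonalized by $\{\u_j\}$, with $\Om_\lambda^{-1}\X\Xt$ having eigenvalue $t_j := \s_j^2/(\s_j^2+\lambda)\in[0,1)$ on $\u_j$, the pre-conditioner $\Pre(\lambda,\bxik)$ of Eq.~\eqref{est-sdk} acts on $\u_j$ by the scalar $q(t_j)$, where $q(x) := \bigl(1-\sum_{i=1}^k\xik_i\bigr)+\sum_{i=1}^k\xik_i x^i$ is a polynomial of degree at most $k$ with $q(1)=1$; conversely every such polynomial arises by reading off its coefficients. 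Projecting the ridge estimator $\Om_\lambda^{-1}\X\Y = \Om_\lambda^{-1}\X(\Xt\stheta+\Noise)$ onto $\u_j$ gives $t_j\langle\stheta,\u_j\rangle + \tfrac{\s_j}{\s_j^2+\lambda}\langle\v_j,\Noise\rangle$, so the $j$-th coordinate of $\htheta(\lambda,\bxik)$ equals $q(t_j)\bigl(t_j\langle\stheta,\u_j\rangle+\tfrac{\s_j}{\s_j^2+\lambda}\langle\v_j,\Noise\rangle\bigr)$.

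Next, writing $\alpha_j := \langle\stheta,\u_j\rangle$ and $w_j := q(t_j)\,t_j$, using that $\|\cdot\|_{\hSigma_n}^2$ weights coordinate $j$ by $\s_j^2/n$ (so the $j>r$ coordinates drop out), and expanding the bias--variance decomposition (the cross term vanishes since $\E[\Noise]=0$ and $\E[\langle\v_j,\Noise\rangle^2]=\gamma^2$), I would obtain
\begin{equation*}
\ExcessRisk\bigl(\htheta(\lambda,\bxik)\bigr)\;=\;\frac{1}{n}\sum_{j=1}^{r}\Bigl[\,\s_j^2(w_j-1)^2\alpha_j^2\;+\;w_j^2\gamma^2\,\Bigr]\;.
\end{equation*}
Each summand is a convex scalar quadratic in $w_j$ minimized at $w_j^\star = \s_j^2\alpha_j^2/(\s_j^2\alpha_j^2+\gamma^2)$ with value $\gamma^2\s_j^2\alpha_j^2/(\s_j^2\alpha_j^2+\gamma^2) = \gamma^2\,\alpha_j^2/(\alpha_j^2+\gamma^2/\s_j^2)$. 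Since all the $w_j$ are tied together through a single admissible $q$, relaxing to independent $w_j\in\R$ can only lower the objective, and summing the per-coordinate minima gives exactly Eq.~\eqref{eq-thm-sdk-opt-ER-LowerBound}; this proves the lower bound for every $k\ge1$ with no extra assumption.

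For achievability under Assumption~\ref{assump-FD-sj}.\ref{assump-FD-sj1}, fix any $\lambda>0$. Because $s\mapsto s^2/(s^2+\lambda)$ is strictly increasing and $\lambda>0$, distinct singular values force $t_1,\dots,t_r$ to be distinct and all strictly below $1$, so $\{1\}\cup\{t_j\}_{j=1}^{r}$ are $r+1$ distinct nodes; Lagrange interpolation then yields a unique degree-$\le r$ polynomial $q$ with $q(1)=1$ and $q(t_j)=w_j^\star/t_j$ for all $j$ (note $t_j\neq0$), so the resulting $r$-step estimator realizes $w_j=w_j^\star$ simultaneously and meets the bound with equality. Its coefficients give $\xik\in\R^r$, and the reparametrization between $\bxik$ and $\xik$ from Eq.~\eqref{est-sdk} (Appendix~\ref{sec-app-xis-objective}) recovers the claimed $\bxir\in\R^r$. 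I expect this achievability step to be the crux: one must pin down that the interpolation nodes are genuinely distinct --- this is precisely where distinct singular values enter, and it also explains why $r$ rather than $r-1$ steps are required, since the structural constraint $q(1)=1$ is ``spent'' at a node $x=1$ disjoint from $\{t_j\}$ --- and one must check that the interpolated $\xik$ lies in the image of the $\bxik\mapsto\xik$ map, which may require choosing $\lambda>0$ generically.
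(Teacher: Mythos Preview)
Your argument is correct, and for the lower bound it coincides with the paper's: both decouple the excess risk in the eigenbasis of $\X\Xt$ into a sum of scalar bias--variance quadratics and minimize each independently (the paper packages this as Lemma~\ref{lem-optimalPre}).

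For achievability, your route is genuinely different and cleaner than the paper's. The paper writes the matching condition as a linear system $A^{(r)}(\lambda)\,\xir=\alpha(\lambda)$ with $[A^{(r)}]_{j,i}=1-t_j^{\,i}$, decomposes $A^{(r)}=\bone\bone^\top - V(\lambda)$ where $V$ is Vandermonde-like, invokes the Matrix Determinant Lemma to obtain $\det A^{(r)}=(1-\bone^\top V^{-1}\bone)\det(-V)$, and then runs a limiting argument as $\lambda\to\infty$ to force the scalar factor $1-\bone^\top V^{-1}\bone$ to be nonzero for \emph{some} $\lambda>0$. Your Lagrange-interpolation view sidesteps all of this: because $\{1,t_1,\dots,t_r\}$ are $r{+}1$ distinct nodes for \emph{every} $\lambda>0$ under Assumption~\ref{assump-FD-sj}.\ref{assump-FD-sj1}, the degree-$\le r$ interpolant with $q(1)=1$ and $q(t_j)=w_j^\star/t_j$ exists and is unique for every $\lambda>0$. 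This in fact proves the stronger statement --- noted only empirically in the paper (Appendix~\ref{sec-app-synth-details}) --- that equality is achieved for all $\lambda>0$, not just some, and implicitly shows $A^{(r)}(\lambda)$ is invertible for every $\lambda>0$, making the paper's limiting argument unnecessary. The residual concern you flag --- that the inverse reparametrization $\xir\mapsto\bxir$ from Appendix~\ref{sec-app-xis-reparam} may fail on a measure-zero set --- is real but shared by the paper, which also solves for $\xir$ and does not address this explicitly; your ``generic $\lambda$'' remark is the right fix.
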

A proof is provided in Appendix~\ref{sec-app-proof-thm-sdkPre-optPre} and we provide a sketch below.  \\
\textbf{Proof sketch}. The lower bound in Eq.~\eqref{eq-thm-sdk-opt-ER-LowerBound} is an instantiation of Lemma~\ref{lem-optimalPre}, since $\htheta( \lambda, \bxik)$ is a specific linear family estimator with  
$\Pre = \Pre \left(\lambda, \bxik \right) \, \Om_{\lambda}^{-1}$ defined in Eq.~\eqref{est-sdk}. 
To show achievability, we need to show that $\Pre \left(\lambda, \bxik \right) \, \Om_{\lambda}^{-1}=\Pre^\star$ for some value of $k$, $\lambda$, and $\bxik$. 
This holds when the below system of $r$ linear equations admits a solution for the $k$ parameters (i.e. $\bxik$), with an \emph{extra} free parameter $\lambda > 0$.  
We show that with $k=r$ and under Assumption \ref{assump-FD-sj}.\ref{assump-FD-sj1}, there exists $\lambda > 0$ that will ensure the existence of a solution for this system of equations. 
\begin{align} \label{eq-thm-sdk-opt-condition}
    \left( 1 - \sum_{i=1}^k \xik_i \left\{ 1 - \left( \frac{\s_j^2}{\lambda + \s_j^2} \right)^i \right\} \right) \, \frac{\s_j^2}{\lambda + \s_j^2} \; &= \; \frac{\langle \stheta, \u_j \rangle^2}{\langle \stheta, \u_j \rangle^2 + \frac{\gamma^2}{\s_j^2}}\; \quad \quad  \forall j \in [r] 
\end{align}
\begin{remark} [Necessity of Assumption~\ref{assump-FD-sj}.\ref{assump-FD-sj1}] \label{remark-necessity-FD-sj1} \label{rem-ill-cond}
This assumption is required for (\ref{eq-thm-sdk-opt-condition}). Otherwise, the LHS would be the same for indices $j$ and $j+1$ if $\s_j = \s_{j+1}$, but the RHS could still be different as $\langle \stheta, \u_j \rangle \neq \langle \stheta, \u_{j+1} \rangle$ generally. If Assumption~\ref{assump-FD-sj}.\ref{assump-FD-sj1} does not hold, there might not be any $\bxik$ satisfying the set of equations for a general $\stheta \in \R^d$.
Further, the system of linear equations in Eq.~\eqref{eq-thm-sdk-opt-condition} becomes more ill-conditioned as the singular values $\s_j, j \in [r]$ get closer to each other. Capturing this dependence explicitly is outside the scope of this paper. 
\end{remark}


{\bf Lower bound for a linear family.}
Consider a linear family of estimators of the form $\htheta(\Pre) := \Pre \cdot \X\Y$, for $\Pre := \Ud \tS \Ud^\top$, whose eigenspace coincides with that of $\X\Xt$ (i.e., $\Ud=[\u_1,\ldots,\u_d]$) and has 
$d$ degrees of freedom represented by the eigenvalues $\tS = {\rm diag}[ \ts_1, \cdots , \ts_d ]$. This is a generic form of any linear estimator, albeit with the restriction of the eigenvectors matching the underlying $\Ud$. In particular, $k$-step SD is an instantiation of this with $\Pre = \Pre(\lambda, \bxik) \, \Om_{\lambda}^{-1}$ (refer to Eq.(~\ref{est-sdk})). 
\begin{lemma} \label{lem-optimalPre}
    The Excess Risk for $\htheta(\Pre) = \Pre \cdot \X \Y$ where $\Pre := \Ud  \tS \Ud^\top $, satisfies
    \begin{align} \label{eq-lem-optExcessRisk}
        \ExcessRisk \left( \htheta(\Pre) \right) \;&\geq\; \frac{\gamma^2}{n} \sum_{j=1}^r \frac{ \langle \stheta, \u_j \rangle^2 }{\left( \langle \stheta, \u_j \rangle^2  + \frac{\gamma^2}{\s_j^2} \right)}  \; ,
    \end{align}
    with equality achieved at $\Pre = \Pre^\star = \Ud \tS^\star \Ud^\top$, given by
    \begin{equation} \label{eq-lem-tsigma*}
        \ts_j^\star \;=\; \begin{cases}
            \frac{\langle \stheta, \u_j \rangle^2 }{\left( \langle \stheta, \u_j \rangle^2  \s_j^2 + \gamma^2 \right)} \text{ }, &\text{ } j \leq r \text{\rm { } (i.e., } \s_j > 0 \text{\rm )} \\
            \text{any real value }, &\text{ } j \geq r+1 \text{\rm  { }(i.e., } \s_j = 0 \text{\rm )} \\
        \end{cases} \;.
    \end{equation}  
\end{lemma}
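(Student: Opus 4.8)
The plan is to compute the excess risk of $\htheta(\Pre) = \Pre \cdot \X\Y$ in closed form as a function of the free eigenvalues $\tS = \mathrm{diag}[\ts_1,\ldots,\ts_d]$, and then minimize it coordinatewise. First I would write $\Y = \Xt\stheta + \Noise$ and substitute into $\htheta(\Pre) = \Pre\X\Y = \Pre\X\Xt\stheta + \Pre\X\Noise$. Diagonalizing in the eigenbasis $\Ud$ of $\X\Xt$, with $\X\Xt = \Ud\,\mathrm{diag}[\s_1^2,\ldots,\s_d^2]\,\Ud^\top$ (setting $\s_j=0$ for $j>r$) and writing $\alpha_j := \langle\stheta,\u_j\rangle$, the bias term in direction $\u_j$ becomes $(\ts_j\s_j^2 - 1)\alpha_j$. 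For the variance term, I use $\E_{\Noise}[\Noise\Noise^\top]=\gamma^2\In$, so $\E[\Pre\X\Noise\Noise^\top\Xt\Pre] = \gamma^2\,\Pre\X\Xt\Pre$, which in the eigenbasis contributes $\gamma^2\,\ts_j^2\,\s_j^2$ in direction $\u_j$. Since the excess risk is measured in the $\hSigma_n$-norm with $\hSigma_n = \tfrac1n\X\Xt$, each direction $\u_j$ gets an extra weight $\tfrac{\s_j^2}{n}$, giving
\begin{align}
\ExcessRisk(\htheta(\Pre)) \;=\; \frac{1}{n}\sum_{j=1}^d \frac{\s_j^2}{1}\Bigl[ (\ts_j\s_j^2 - 1)^2\alpha_j^2 + \gamma^2\ts_j^2\s_j^2 \Bigr]\;,
\end{align}
where the $j>r$ terms vanish because $\s_j=0$. (I would double-check whether the cross term $\E[\Pre\X\Xt\stheta\cdot\Noise^\top\Xt\Pre]$ vanishes — it does, since $\E[\Noise]=0$ — so bias and variance decouple cleanly.)

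Next I minimize over each $\ts_j$ independently for $j\le r$. The per-coordinate objective is a convex quadratic $q_j(\ts_j) = (\ts_j\s_j^2-1)^2\alpha_j^2 + \gamma^2\ts_j^2\s_j^2$ (up to the positive prefactor $\s_j^2/n$); setting $q_j'(\ts_j)=0$ gives $\s_j^2\alpha_j^2(\ts_j\s_j^2-1) + \gamma^2\ts_j\s_j^2 = 0$, i.e. $\ts_j(\alpha_j^2\s_j^2 + \gamma^2) = \alpha_j^2$, so $\ts_j^\star = \alpha_j^2/(\alpha_j^2\s_j^2 + \gamma^2)$, matching Eq.~\eqref{eq-lem-tsigma*}. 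For $j\ge r+1$ the coordinate does not appear in the objective, so $\ts_j$ is free. Plugging $\ts_j^\star$ back in, a short computation simplifies $(\ts_j^\star\s_j^2-1)^2\alpha_j^2 + \gamma^2(\ts_j^\star)^2\s_j^2$ to $\gamma^2\alpha_j^2/(\alpha_j^2\s_j^2+\gamma^2)$ (using $\ts_j^\star\s_j^2 - 1 = -\gamma^2/(\alpha_j^2\s_j^2+\gamma^2)$ and $(\ts_j^\star)^2\s_j^2 = \alpha_j^4\s_j^2/(\alpha_j^2\s_j^2+\gamma^2)^2$, then combining). Multiplying by the prefactor $\s_j^2/n$ and dividing numerator and denominator by $\s_j^2$ yields the claimed $\tfrac{\gamma^2}{n}\sum_{j=1}^r \tfrac{\alpha_j^2}{\alpha_j^2 + \gamma^2/\s_j^2}$, which is simultaneously a lower bound (since we minimized each term) and attained at $\Pre^\star$.

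I do not anticipate a serious obstacle here; the lemma is essentially a decoupled ridge-type bias-variance minimization, and everything reduces to a one-dimensional quadratic in each eigendirection. The only points requiring a little care are: (i) confirming the bias-variance cross term vanishes and that the $\hSigma_n$-norm weighting is applied consistently to each eigendirection; (ii) correctly handling the degenerate directions $j>r$ where $\s_j=0$ — these contribute nothing to the risk regardless of $\ts_j$, which is why any real value is optimal there and why the sum in the bound runs only to $r$; and (iii) the algebraic simplification of the optimal per-coordinate value, which should be presented carefully but is routine. The main "idea" content is simply recognizing that restricting $\Pre$ to share the eigenbasis of $\X\Xt$ makes the whole problem separate across eigendirections.
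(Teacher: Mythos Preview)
Your proposal is correct and follows essentially the same route as the paper: expand $\htheta(\Pre)$ in the eigenbasis $\Ud$, write the excess risk as a sum of decoupled bias-plus-variance quadratics weighted by $\s_j^2/n$, and complete the square in each $\ts_j$ to obtain both $\ts_j^\star$ and the optimal value. The paper's proof sketch and full proof in the appendix do exactly this, with the same handling of the degenerate directions $j>r$.
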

\textbf{Proof sketch}. This is straightforward.  One can expand the excess risk  for $\htheta(\Pre)$, which is  a quadratic expression in $\ts_j, j \in [d]$. Completing the squares shows the lower bound of Eq.~(\ref{eq-lem-optExcessRisk}) and the optimal values $\ts^\star_j$ of Eq.~(\ref{eq-lem-tsigma*}). A full proof is given in Appendix \ref{sec-app-proof-lem-optimalPre}. 



\subsection{The excess risk for the $k$-step SD estimator is quadratic in $\xik \in \R^k$} \label{sec-results-FD-quadratic}

We give an explicit formula for the excess risk achieved by for the $k$-step SD estimator from Eq.~\eqref{est-sdk}. 
Since $\htheta(\lambda, \bxik)$ is linear in each of $\xik_i, i \in [k]$ (recall that $\xik$ is a reparametrization of $\bxik$), the overall excess risk is \emph{quadratic} in $\xik$ as shown below. Appendix \ref{sec-app-proof-thm-quadraticRisk} provides a proof and the expressions for $\Mk$, $\mk$, and $\ck$.
This quadratic form will be especially useful in experiments.
\begin{theorem}[Informal version of Theorem~\ref{thm-quadraticRisk-formal} in Appendix~\ref{sec-app-proof-thm-quadraticRisk}] \label{thm-quadraticRisk}
     Under the fixed design linear regression in Assumption~\ref{assump-LR}, the excess risk achieved by the $k$-step SD is quadratic in $\xik \in \R^k$: 
    \begin{equation} \label{quadratic-ER}
        \ExcessRisk \left( \htheta(\lambda, \bxik)  \right) \;=\; \left(\xik\right)^\top \underbrace{\Mk}_{\in \R^{k \times k}} \left(\xik \right) + 2  \left( \xik \right)^\top \underbrace{\mk}_{\in \R^k} + \text{ } \ck\;. 
    \end{equation}
\end{theorem}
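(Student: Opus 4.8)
The plan is to observe that, once written in terms of the reparametrized vector $\xik$, the $k$-step SD estimator of Eq.~\eqref{est-sdk} is \emph{affine} in $\xik$, so that its excess risk --- a squared $\hSigma_n$-norm of a vector affine in $\xik$, averaged over the noise --- is automatically a quadratic polynomial in $\xik$. First I would collect the $\Id$ term in Eq.~\eqref{est-sdk} to write
\begin{align} \label{eq-plan-affine-sd}
  \htheta(\lambda, \bxik) \;=\; \htheta(\lambda) \;+\; \sum_{i=1}^k \xik_i \, \theta_i, \qquad \theta_i \;:=\; \Bigl( (\Om_\lambda^{-1}\X\Xt)^i - \Id \Bigr)\, \Om_\lambda^{-1}\X\Y \;,
\end{align}
with $\htheta(\lambda)=\Om_\lambda^{-1}\X\Y$ the ridge estimator. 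Put $a := \htheta(\lambda) - \stheta \in \R^d$ and $B := [\,\theta_1 \mid \cdots \mid \theta_k\,] \in \R^{d\times k}$; both depend on $(\X,\stheta,\lambda)$ and on $\Noise$ through $\X\Y = \X\Xt\stheta + \X\Noise$, but \emph{not} on $\xik$. Then $\htheta(\lambda,\bxik)-\stheta = a + B\,\xik$, so for every realization of $\Noise$,
\begin{align} \label{eq-plan-quad-sd}
  \bigl\| \htheta(\lambda,\bxik)-\stheta \bigr\|_{\hSigma_n}^2 \;=\; (\xik)^\top\! \bigl(B^\top\hSigma_n B\bigr)\xik \;+\; 2\,(\xik)^\top\! \bigl(B^\top\hSigma_n a\bigr) \;+\; a^\top\hSigma_n a \;.
\end{align}
Taking $\E_\Noise[\cdot]$ of \eqref{eq-plan-quad-sd} and using linearity of expectation (with $\xik$ deterministic) yields Eq.~\eqref{quadratic-ER} with $\Mk := \E_\Noise[B^\top\hSigma_n B]$, $\mk := \E_\Noise[B^\top\hSigma_n a]$, and $\ck := \E_\Noise[a^\top\hSigma_n a]$; in particular $\Mk\succeq 0$.

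To recover the explicit formulas of the formal version I would then expose the affine-in-$\Noise$ structure: $a = a_0 + \Om_\lambda^{-1}\X\,\Noise$ with $a_0 = (\Om_\lambda^{-1}\X\Xt - \Id)\stheta$, and $\theta_i = c_i + D_i\,\Noise$ with $c_i = ((\Om_\lambda^{-1}\X\Xt)^i - \Id)\Om_\lambda^{-1}\X\Xt\stheta$ and $D_i = ((\Om_\lambda^{-1}\X\Xt)^i - \Id)\Om_\lambda^{-1}\X$. Each entry of $\Mk,\mk,\ck$ is then the expectation of a quadratic in $\Noise$, which collapses under $\E[\Noise]=0$ and $\E[\Noise\Noise^\top]=\gamma^2\In$ into a bias term plus a variance term; e.g.\ $(\Mk)_{i i'} = c_i^\top\hSigma_n c_{i'} + \gamma^2\,{\rm tr}(D_i^\top\hSigma_n D_{i'})$, with analogous expressions for $\mk$ and $\ck$. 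Diagonalizing everything in the eigenbasis $\Ud=[\u_1,\ldots,\u_d]$ of $\X\Xt$ --- in which $\Om_\lambda$, $\hSigma_n$, and every $(\Om_\lambda^{-1}\X\Xt)^i$ are simultaneously diagonal --- converts the traces and quadratic forms into explicit sums over $j\in[r]$ in $\s_j$, $\langle\stheta,\u_j\rangle$, $\gamma^2$, and $\lambda$, matching Theorem~\ref{thm-quadraticRisk-formal}.

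There is essentially no conceptual obstacle here: the entire content of the statement is that $\xik$ --- rather than the raw $\bxik$, in which the estimator already has degree $k$ --- is the coordinate system that linearizes the estimator, after which quadratic dependence is forced by \eqref{eq-plan-quad-sd}. The only steps requiring care are bookkeeping: the eigenbasis reduction must handle the rank-deficient directions ($\s_j=0$, $j>r$) correctly, but these contribute nothing since $\hSigma_n\u_j = 0$ there, so only the components of $a$ and $B$ along $\u_1,\ldots,\u_r$ enter; and if one wants the result restated directly in terms of $\bxik$, one additionally substitutes the reparametrization $\xik_i = (1-\bxik_{k-i})\prod_{l=k-i+1}^{k}\bxik_l$ (with $\bxik_0=0$), which is not needed for the quadratic-in-$\xik$ claim itself.
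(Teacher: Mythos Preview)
Your proposal is correct and follows essentially the same approach as the paper: both exploit that the estimator in Eq.~\eqref{est-sdk} is affine in $\xik$, so the squared $\hSigma_n$-norm is automatically quadratic, and both obtain the explicit coefficients by passing to the eigenbasis of $\X\Xt$ (the paper via Lemma~\ref{lem-est-sdk-expansion}). Your presentation separates the abstract quadratic-structure argument from the eigenbasis bookkeeping more cleanly, but the substance is identical.
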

From the detailed expressions given in Appendix~\ref{sec-app-proof-thm-quadraticRisk}, we note that $\Mk$ is a sum of $r$ symmetric rank-$1$ matrices, which means it can have a maximum rank of $r$. This implies that $\Mk\in\R^{k\times k}$ for $k > r$ is rank-deficient (causing no additional decrease in the excess risk if the $\xir \in \R^r$ were chosen optimally to minimize the excess risk). This indicates that $r$ steps of SD might be sufficient to achieve the optimal excess risk, which is indeed what we observe in Theorem~\ref{thm-sdkPre-optPre}.




\section{Experiments} \label{sec-expts}

In this section, we empirically show that multi-step SD can outperform the ridge and single-step SD.
We first present a synthetic setting (section~\ref{sec-expts-synth}) to validate our theory. In section~\ref{sec-expts-hparams}, we discuss a strategy to select $\xi$ parameters based on the theoretical insight from section~\ref{sec-results-FD-quadratic}. In section~\ref{sec-expts-regr}, we implement that strategy on real-world regression tasks and show that it can indeed select performant $\xi$ values that provide multi-step SD estimators that achieve a smaller test risk. 

\subsection{Synthetic Experiments} \label{sec-expts-synth}
We validate our theoretical results with a fixed design synthetic experiment. We consider a problem with $d=r=4$, and set problem parameters $(\X, \stheta, \gamma^2)$. Namely, $\X$'s singular values are set as $\s_j := \nicefrac{1}{j}$ for $j \in [4]$, and $\stheta := \u_1$ as in Theorem \ref{thm-FD-separation}. Figure~\ref{fig-synth-all} shows the result for $\gamma = 0.125$, along with two more settings that validate the necessity of our assumptions (validating Theorems \ref{thm-FD-nonsep-assump} and \ref{thm-FD-nonsep-nonpeaky}).
Figure~\ref{fig-synth_A} confirms that repeated steps of SD do provide a reduction in the excess risk, since the \emph{lowest point} of the curve for each $k$ reduces as $k$ increases.
Also note that the optimal $\lambda$ for each $k$ (one that produces lowest excess risk estimator) is different.
Appendix~\ref{sec-app-synth-details} presents some more settings, including $\stheta := \nicefrac{1}{\sqrt{2}} (\u_1 + \u_2)$ for comparison with \cite{pmlr-v202-das23d}.
An interesting phenomenon in Figure~\ref{fig-synth-all} is that local maxima in $k$-step SD's curve coincide with local minima in $(k-1)$-step SD's curve, which was proven for $k=1$ in \cite{pmlr-v202-das23d}, and we observe empirically for all $k$.

{\bf Explanation of Figures~\ref{fig-synth-all},~\ref{fig-regr-all}}. \newtext{For the fixed design synthetic experiment in Figure~\ref{fig-synth-all} and the random design real-world experiment in Figure~\ref{fig-regr-all} (section~\ref{sec-expts-regr}),} the curves plotted are with the optimal $(\bxik)^\star$ for each $\lambda$. Hence, the curve of $k$-step SD will point-wise be lower/equal to the curve of $(k-1)$-step SD, since more steps of SD only provide more freedom. We say $k$-step SD {\emph strictly} dominates $(k-1)$-step SD when the minimum value of $k$-step SD's excess risk is strictly lower than that of $(k-1)$-step SD. For Figure~\ref{fig-synth-all}, the optimal $(\bxik)^\star$ is found analytically from the problem parameters. For real-world datasets in Figure~\ref{fig-regr-all}, we describe how to find $(\bxik)^\star$ for any given $\lambda$ in the next section.

\subsection{Choosing the hyperparameters $\xi$ for real-world datasets} \label{sec-expts-hparams}
We have shown that at the cost of introducing additional hyperparameters \emph{and} setting them to their optimal values, one can extract a large (upto $\Omega(d)$) performance gain.
However, how does one select these $\xi$'s for real-world datasets? The standard practice is to use a validation set, and perform a grid search. But this becomes infeasible for $k$-step SD for larger values of $k$, since performing a search over parameters in $\R^{k+1}$ (i.e $k$ values of $\bxik$ and $1$ value of $\lambda$) quickly becomes impractical.
But our theoretical analysis provides an insight that can be used to directly compute the optimal $\bxik \in \R^k$ (for a chosen $\lambda$) given a few evaluations on the validation set with certain chosen $\bxik$ values.
Namely, Theorem \ref{thm-quadraticRisk} tells us that the $\ExcessRisk$ is quadratic in $\xik$ (the reparameterized version). Now the coefficients of the quadratic depend on unknown quantities (like $\stheta, \gamma^2$), however we can use the validation set to estimate these coefficients. 

For example, for $1$-step SD, we know that the $\ExcessRisk (\htheta(\lambda, \xi)) = A\xi^2 + 2 B \xi + C$ for unknown $A, B, C$ (that depend on $\lambda$). Training $3$ specific $1$-step SD estimators, with $\xi = \{-1, 0, 1\}$, and measuring each of those $3$ estimators' Risk on the validation set, lets us estimate $A, B, C$. We then know that $\xi^\star = \nicefrac{-B}{A}$, for the chosen value of $\lambda$. Hence, we only need to perform a grid search over \emph{one} parameter, $\lambda$. Appendix \ref{sec-app-choosing-xi} provides more discussion, and provides a detailed illustration of the above process for $k=2$.
Note that this is feasible when the cost/time needed for a single training run of a $k$-step SD is small (since we need to do it multiple times for various $\bxik$ values). 

\subsection{Real-world regression experiments} \label{sec-expts-regr}

We implement multi-step SD for real-world regression tasks from the UCI repository \cite{ucirepo}, and demonstrate that $2$-step SD can outperform ridge and $1$-step SD. Note that for this section, the test set will contain fresh samples of $\x \in \R^d$, i.e. random design linear regression instead of fixed design.
Our metric for an estimator's performance will be 
mean squared error (MSE) on a test set of unseen examples, which is the empirical version of total risk (i.e. excess risk + unknown noise variance $\gamma^2$).
Given $n_{test}$ such examples denoted by $\{ (x_i, y_i)\}, i \in [n_{test}]$, the MSE of an estimator $\htheta$ is given by the mean of per-sample squared errors, i.e., $MSE(\htheta) = \nicefrac{\sum_{i}( \langle \htheta, x_i \rangle - y_i )^2}{n_{test}}$.


Using the training and validation splits, we compute ($i$) Optimal ridge: $\htheta (\slambda_0)$, ($ii$) Optimal $1$-step SD: $\htheta (\slambda_1, \xi^\star)$, and ($iii$) Optimal $2$-step SD: $\htheta \bigl(\slambda_2, (\bxi_1^\star, \bxi_2^\star)\bigr)$. 
This is done by plotting the MSE on the validation set for a grid of $\lambda$ values for all three estimators, and choosing the $\lambda$ that achieves the lowest error for each one. For any given $\lambda$, the optimal $\bxi^\star (\lambda)$ is chosen by the strategy described in Section~\ref{sec-expts-hparams}.
Finally, we evaluate the MSE of all three chosen estimators on the test set, which serves as our performance metric (refer to Table~\ref{table}).
Appendix~\ref{sec-app-regr} explains the overall methodology in greater detail. We apply this methodology on three datasets (descriptions in Appendix~\ref{sec-app-datasets}).

Table~\ref{table} describes the results we observe. For two of the three datasets, $2$-step SD can outperform both ridge and $1$-step SD. For the Air Quality dataset, $2$-step SD significantly outperforms both ridge and $1$-step SD, reducing the MSE by $47.2$\% compared to the optimal ridge.
In contrast, for the AEP dataset, we observe that the SD process cannot improve upon the ridge at all.
The MSE curves in Figure~\ref{fig-regr-all} shed more light on these observations. Notice how Figures~\ref{fig-regr-AQ},~\ref{fig-regr-AF} show a gap in the ridge and $2$-step SD (similar to Figure~\ref{fig-synth_A}), whereas Figure~\ref{fig-regr-AEP} shows no such gap (similar to Figure~\ref{fig-synth_C}).

We also verify that the strategy described in section~\ref{sec-expts-hparams} indeed selects performant $\xi$s.
Appendix~\ref{sec-app-observed-quadratic} shows that the optimal $\xi$ values given in Table~\ref{table}, selected via the strategy in section~\ref{sec-expts-hparams}, are indeed the ones that minimize the validation MSE. Further, we empirically observe the quadratic nature of risk (MSE) vs $\xi$ described in Theorem~\ref{thm-quadraticRisk} (Figure~\ref{fig-regr-appendix} in Appendix~\ref{sec-app-observed-quadratic}).
\begin{table}[h!]
    \centering
    \caption{Chosen hyperparameter values and the achieved test set MSE for ridge and $1,2$-step SD.} \label{table}
    \resizebox{\linewidth}{!}{
    \begin{tabular}{c c c c c}
        \toprule
        \textbf{Dataset} & & {\bf Optimal ridge} & {\bf Optimal $1$-step SD} & {\bf Optimal $2$-step SD} \\
        \midrule
        \multirow{2}{*}{\texttt{Air Quality}} & \textsmaller{Optimality hyperparameters} & $\slambda_0 = 10^2$ & $\slambda_1, \xi^\star = 10^3, -4.1$ & $\slambda_2, (\bxi_1^\star, \bxi_2^\star) = 10^3, (-0.9, -16.2)$ \\
                                   & \textsmaller{Test set MSE} & 2.01 & 1.99 & {\bf 1.06} \\
        \midrule
        \multirow{2}{*}{\texttt{Airfoil}} & \textsmaller{Optimality hyperparameters} & $\slambda_0 = 10^2$ & $\slambda_1, \xi^\star = 10^0,66.5$ & $\slambda_2, (\bxi_1^\star, \bxi_2^\star) = 10^3,(-1.8, -7.8)$ \\
                                   & \textsmaller{Test set MSE} & 1.34 & 1.22 & {\bf 1.19} \\
        \midrule
        \multirow{2}{*}{\texttt{AEP}} & \textsmaller{Optimality hyperparameters} & $\slambda_0 = 10^{2.5}$ & $\slambda_1, \xi^\star = 10^{2.5}, 0.1$ & $\slambda_2, (\bxi_1^\star, \bxi_2^\star) = 10^{2.5}, (-2.4, -2.3)$ \\
                                   & \textsmaller{Test set MSE} & {\bf 0.62} & 0.62 & 0.63 \\
        \bottomrule
    \end{tabular}
    }
\end{table}
\begin{figure}[h]
\begin{subfigure}{.33\textwidth}
  \includegraphics[width=\textwidth]{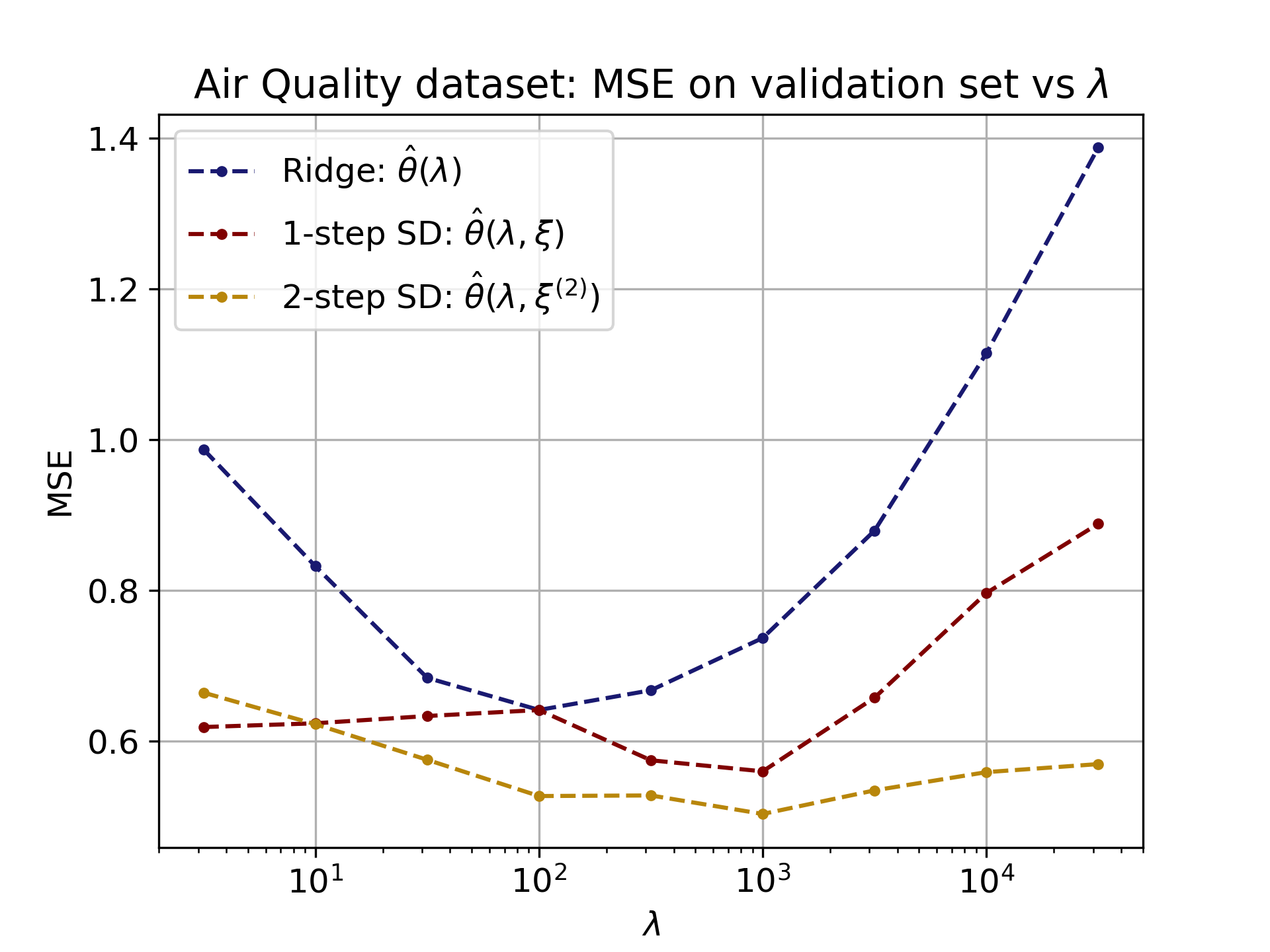}
  \caption{Air Quality dataset}
  \label{fig-regr-AQ}
\end{subfigure}
\begin{subfigure}{.33\textwidth}
  \includegraphics[width=\linewidth]{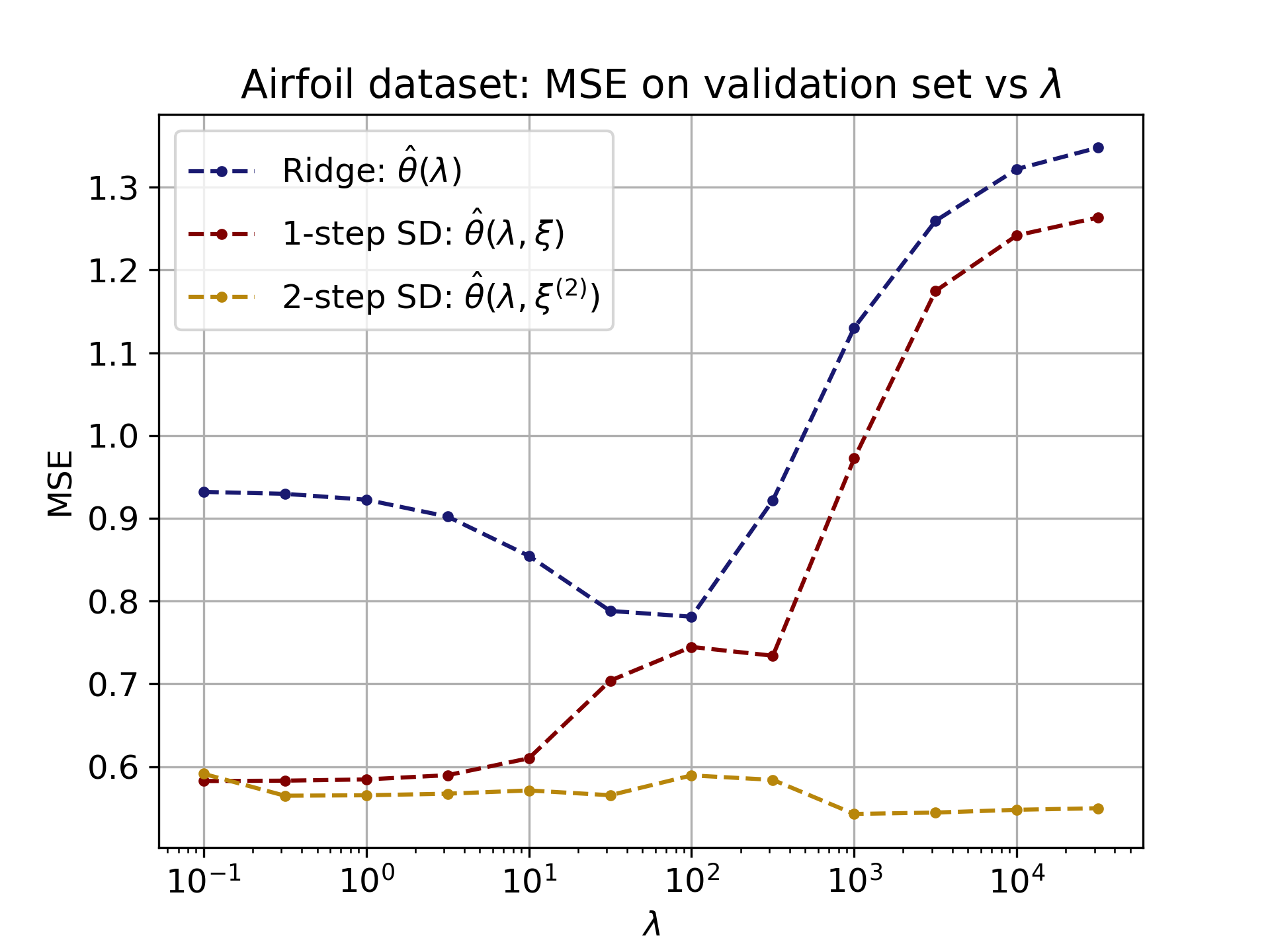}
  \caption{Airfoil dataset}
  \label{fig-regr-AF}
\end{subfigure}
\begin{subfigure}{.33\textwidth}
  \includegraphics[width=\linewidth]{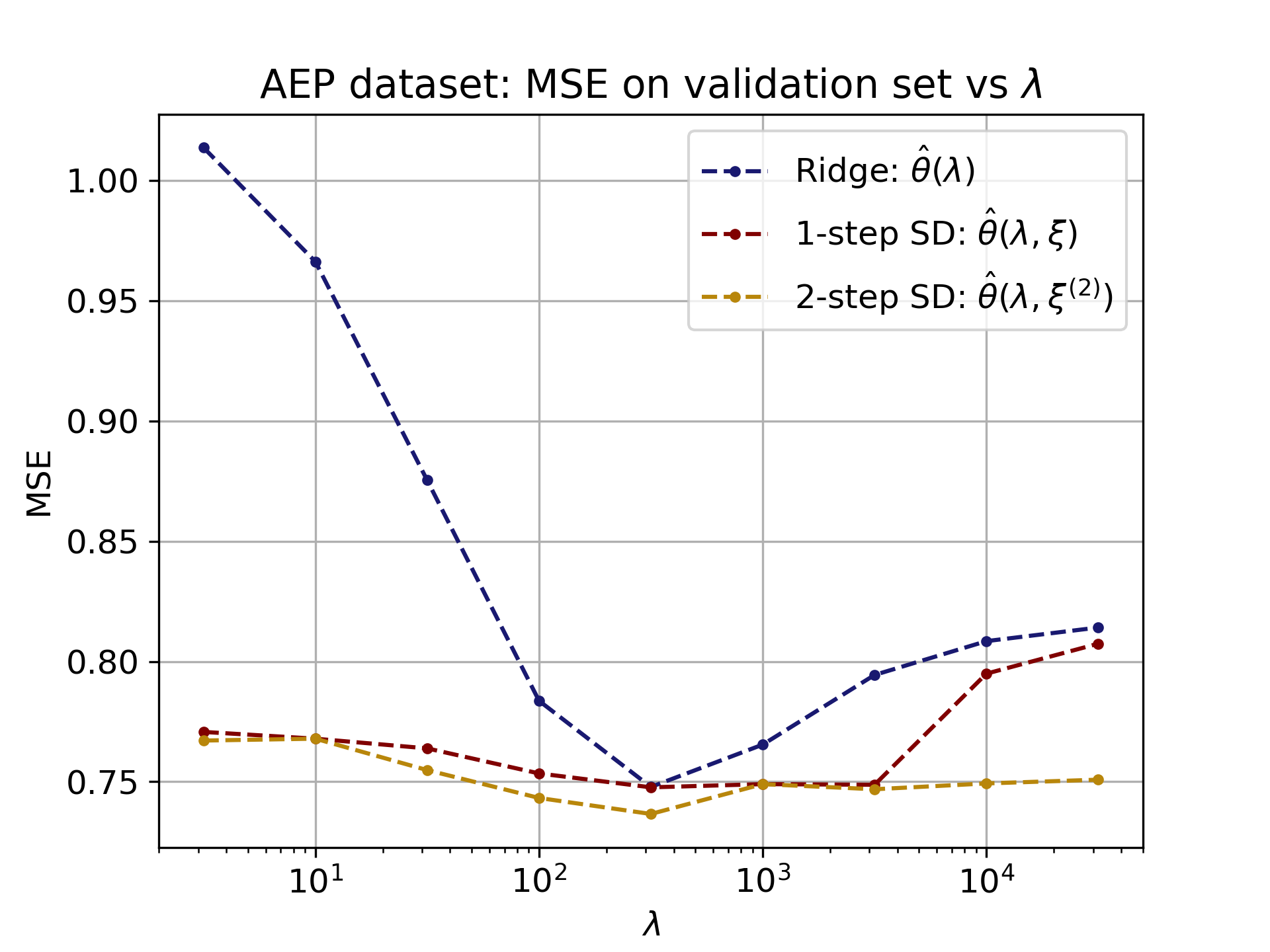}
  \caption{AEP dataset}
  \label{fig-regr-AEP}
\end{subfigure}
\caption{Validation set MSE vs $\lambda$ for three estimators: Ridge, $1$-step SD and $2$-step SD.}
\label{fig-regr-all}
\end{figure}

\section{Conclusion and Broader Impacts} \label{sec-conclusion}
In this paper, we theoretically studied the multi-step self-distillation for fixed design linear regression, with the goal of characterizing its performance compared to the single-step SD. Perhaps surprisingly, we demonstrated that the optimal multi-step SD can outperform the optimal single-step SD by a factor as large as $d$ in the estimator's excess risk, where $d$ is the input dimension of the regression.
Our analysis is limited by the fixed design assumption, and it would be useful to study the case of random design linear regression as well.
We empirically demonstrated the gains from using $2$-step SD on simple linear regression tasks. 
Larger scale empirical studies of multi-step SD, especially leveraging the insights of Section~\ref{sec-expts-hparams} on hyperparameter search, remain as a direction of future work.

Our contributions are largely on the theoretical understanding of multi-step self-distillation, and its potential performance gains. At a high-level, self-distillation can use data more effectively, since it allows us to extract more knowledge from the same training dataset. In today's age with data being one of the most important resources, this has positive potential impacts through more judicious use of data.
On the other hand, we propose to use multiple steps of self-distillation, requiring more compute and potentially contributing to higher environmental costs.

\section*{\newtext{Acknowledgements}}
\newtext{
This work is supported by NSF grants no.~2019844, 2112471, 2229876,  2134106, 2143493, and 2229881.
}

\bibliographystyle{abbrvnat}
\bibliography{citations}
\newpage
\appendix


\section{Notation} \label{sec-app-notation}

In this short section, we collect some notation used throughout the proofs.

\textbf{Decomposition of }$\X$. Let ${\rm rank}(\X)=r$ and the SVD of $\X$ be $\X = \sum_{j=1}^r \s_j \u_j \v_j^T$ where $\s_1 \geq \s_2 \geq \cdots \geq \s_r > 0$, and each $\u_j \in \R^d$ and $\v_j \in \R^n$. Further, let $\{ \u_1, \u_2, \cdots, \u_d\}$ be the full set of left singular vectors of $\X$ (even those corresponding to zero singular values), forming an orthonormal basis of $\R^d$. Let $\Ud \in \R^{d \times d}$ and $\Ur \in \R^{d \times r}$ denote the left singular matrix of $\X$ for the full and truncated set of left singular vectors respectively. Similarly, let $\Vd \in \R^{n \times d}$ and $\Vr \in \R^{n \times r}$ denote the full and truncated right singular matrix. Let $\Sd \in \R^{d \times d}$ and $\Sr \in \R^{r \times r}$ denote the collection of the singular values (with and without zeros respectively). Then, it holds that
\begin{equation} \label{X-SVD}
    \X = \Ur \Sr \Vr^\top = \Ud \Sd \Vd^\top \; \;.
\end{equation}
\textbf{Indices}. Throughout the text, the indices $i$ lie in $[k]$, i.e. they denote subsequent steps of self-distillation. The indices $j$ lie in $[r]$ or $[d]$, ie they denote dimensions of the $d$-dimensional space (aligned with vectors of $\Ur$ or $\Ud$). $v_j, V_{i,j}$ will denote indexing into a vector $v$, matrix $V$. There is one exception to $v_j$ denoting a vector's $j^{th}$ element, which is the below. \\
\textbf{Components of $\stheta$ on $\Ud$}. We will denote $\stheta_j := \langle \stheta, \u_j \rangle^2$, $j \in [d]$ as the components of $\stheta$ onto $\X$'s left singular space. Note that $\sum_{j=1}^d \stheta_j = \| \stheta \|_2^2$. 

\section{Discussion on norm used in excess risk metric} \label{sec-app-normdiscussion}

We point out that \citet{pmlr-v202-das23d} used the $\|.\|_2$ norm instead of the more natural $\|.\|_{\hSigma_n}$ norm to measure their fixed design excess risk (in eq (\ref{eq-ER-fixed})).
Almost all our results have an equivalent version in the $\|.\|_2$ norm setting also, since the only difference is in the relative weighing of the underlying dimensions of variation, i.e. with the $\|.\|_{\hSigma_n}$ norm, $\forall j \in [d]$, direction $\u_j$ is weighed by $\s_j^2/n$ instead of a constant $1$ weight (independent of $j$). 
In particular, we also present the version of \cite[Eq.~(9) from Theorem 3.8]{pmlr-v202-das23d} that will result in a strict dominance like Eq.~\eqref{eq-thm-sdgain} under the $\hSigma_n$ norm, i.e.
\begin{equation} \label{eq-prop-sdgain}
    \min_{\lambda \geq 0, \xi \in \R} \underbrace{\E_{\Noise} \left[ \| \htheta(\lambda, \xi) - \stheta \|_{\hSigma_n}^2  \right]}_{= \ExcessRisk(\htheta(\lambda, \xi))} \;<\; \min_{\lambda \geq 0} \underbrace{\E_{\Noise} \left[ \| \htheta(\lambda) - \stheta \|_{\hSigma_n}^2  \right]}_{=\ExcessRisk(\htheta(\lambda))} \; \;.
\end{equation}
\begin{proposition} \label{prop-condition}
    Let $\slambda := \argmin_{\lambda > 0} \ExcessRisk( \htheta(\lambda) )$. Then Eq.~\eqref{eq-prop-sdgain} holds on a problem instance $(\X, \stheta, \gamma^2)$ when
    \begin{equation} \label{eq-prop-condition}
        \sum_{k=1}^r \sum_{j=1}^{k-1}  \frac{\s_j^4 \s_k^4 \left( \s_j^2 - \s_k^2\right) \left( \langle \stheta, \u_k \rangle^2 - \langle \stheta, \u_j \rangle^2 \right) }{\left( \slambda + \s_j^2 \right)^4\left( \slambda + \s_k^2 \right)^4} < 0 \; \; .
    \end{equation}
\end{proposition}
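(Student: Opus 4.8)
My plan is to perturb the ridge-optimal hyperparameters $(\slambda,0)$, exploiting that $\xi=0$ recovers ridge (Eq.~\eqref{est-sd1-3}). Diagonalizing in the eigenbasis $\{\u_j\}_{j=1}^{d}$ of $\X\Xt$ (so $\X\Xt\u_j=\s_j^2\u_j$ and $\X^\top\u_j=\s_j\v_j$), the estimator $\htheta(\lambda,\xi)$ is $\Pre(\lambda,\xi)\,\Om_\lambda^{-1}$ applied to $\X\Y=\X\Xt\stheta+\X\Noise$, and this operator has eigenvalue $m_j(\lambda,\xi):=\big((1-\xi)\lambda+\s_j^2\big)\big/\big(\lambda+\s_j^2\big)^2$ on $\u_j$; using $\E[\langle\X\Noise,\u_j\rangle^2]=\s_j^2\gamma^2$,
\[
    \ExcessRisk\!\big(\htheta(\lambda,\xi)\big)\;=\;\frac{1}{n}\sum_{j=1}^{r}\psi_j\!\big(m_j(\lambda,\xi)\big),\qquad \psi_j(m):=\s_j^2\Big[(m\s_j^2-1)^2\langle\stheta,\u_j\rangle^2+m^2\s_j^2\gamma^2\Big],
\]
each $\psi_j$ a convex quadratic (the $k=1$ case of Theorem~\ref{thm-quadraticRisk}), and $m_j(\slambda,0)=(\slambda+\s_j^2)^{-1}$ recovers the ridge risk. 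Write $t_j:=\slambda+\s_j^2$, $p_j:=\s_j^4/t_j^4$, $w_j:=\s_j^4/t_j^3=p_jt_j$, $\delta_j:=\gamma^2-\slambda\langle\stheta,\u_j\rangle^2$, and $W:=\sum_{j=1}^r w_j>0$. Since the ridge risk is strictly decreasing at $\lambda=0^+$ and has finite limit as $\lambda\to\infty$, $\slambda$ lies in $(0,\infty)$, so $\tfrac{d}{d\lambda}\ExcessRisk(\htheta(\lambda))|_{\slambda}=0$; a short computation rewrites this first-order condition as $\sum_{j=1}^r w_j\delta_j=0$, a relation I will use twice.

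\textbf{Reduction to a second-order inequality.} I would first note that $(\slambda,0)$ is a critical point of $F(\lambda,\xi):=\ExcessRisk(\htheta(\lambda,\xi))$: $\partial_\lambda F|_{(\slambda,0)}=0$ is the ridge first-order condition, and $\partial_\xi F|_{(\slambda,0)}=-\tfrac{2\slambda}{n}\sum_j w_j\delta_j=0$ follows from the same relation (using $\psi_j'(m_j)=2\s_j^4\delta_j/t_j$ and $\partial_\xi m_j=-\slambda/t_j^2$ at $(\slambda,0)$). So a perturbation of $\xi$ alone produces no first-order gain, and I would pass to a second-order argument along a joint path: for a constant $c$ set $h_c(\xi):=F(\slambda+c\xi,\xi)$; since $\slambda\in(0,\infty)$ this is well defined for small $\xi$, with $h_c(0)=\min_\lambda\ExcessRisk(\htheta(\lambda))$, $h_c'(0)=0$, and $h_c''(0)=H_{11}c^2+2H_{12}c+H_{22}$, where $H$ is the Hessian of $F$ at $(\slambda,0)$. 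It then suffices to exhibit some $c$ with $h_c''(0)<0$, for then $h_c(\xi)<h_c(0)$ for small $\xi\ne0$, which is exactly Eq.~\eqref{eq-prop-sdgain}.

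\textbf{Computing $H$ and matching Eq.~\eqref{eq-prop-condition}.} I would compute $H$ from $\partial_a\partial_b[\psi_j(m_j)]=\psi_j''(m_j)\,\partial_a m_j\,\partial_b m_j+\psi_j'(m_j)\,\partial_a\partial_b m_j$, with $\psi_j''\equiv 2\s_j^4(\s_j^2\langle\stheta,\u_j\rangle^2+\gamma^2)$ and the partials of $m_j$ at $(\slambda,0)$, namely $\partial_\lambda m_j=-t_j^{-2}$, $\partial_\xi m_j=-\slambda t_j^{-2}$, $\partial_\lambda^2 m_j=2t_j^{-3}$, $\partial_\lambda\partial_\xi m_j=(\slambda-\s_j^2)t_j^{-3}$, $\partial_\xi^2 m_j=0$. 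Substituting $\langle\stheta,\u_j\rangle^2=(\gamma^2-\delta_j)/\slambda$ and then using $\sum_j w_j\delta_j=0$ (equivalently $\sum_j p_j(\slambda+\s_j^2)\delta_j=0$) to eliminate one sum collapses the entries to $H_{22}=\tfrac{2\slambda}{n}(G+B)$, $H_{11}=\tfrac{2}{\slambda n}(G+3B)$, $H_{12}=\tfrac{2}{n}(G+3B)$, where $G:=\gamma^2 W>0$ and $B:=\slambda\sum_{j=1}^r p_j\delta_j$. In particular $H_{12}=\slambda H_{11}$, which pinpoints the useful path: taking $c=-\slambda$, i.e.\ $\lambda(\xi)=\slambda(1-\xi)$, gives $h_{-\slambda}''(0)=\slambda^2 H_{11}-2\slambda H_{12}+H_{22}=-\tfrac{4\slambda^2}{n}\sum_{j=1}^r p_j\delta_j$. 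Finally, using $\sum_j w_j\delta_j=0$ once more to eliminate $\gamma^2$ and antisymmetrizing with the identity $w_k p_j-w_j p_k=p_j p_k(\s_k^2-\s_j^2)$,
\[
    \sum_{j=1}^r p_j\delta_j\;=\;-\frac{\slambda}{W}\sum_{k=1}^{r}\sum_{j=1}^{k-1}p_j p_k\,(\s_j^2-\s_k^2)\big(\langle\stheta,\u_k\rangle^2-\langle\stheta,\u_j\rangle^2\big),
\]
and since $p_j p_k=\s_j^4\s_k^4/\big[(\slambda+\s_j^2)^4(\slambda+\s_k^2)^4\big]$, the double sum is exactly the left side of Eq.~\eqref{eq-prop-condition}. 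Hence Eq.~\eqref{eq-prop-condition} forces $\sum_j p_j\delta_j>0$, so $h_{-\slambda}''(0)<0$, and Eq.~\eqref{eq-prop-sdgain} follows.

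\textbf{Main obstacle.} The crux is that $\partial_\xi m_j$ and $\partial_\lambda m_j$ are parallel at $(\slambda,0)$, so the first-order variation vanishes and no shortcut that freezes $\lambda=\slambda$ and optimizes over $\xi$ alone can work; an honest second-order argument is unavoidable, and the real labor is the bookkeeping that turns the generic $2\times 2$ Hessian into the clean antisymmetric double sum of Eq.~\eqref{eq-prop-condition}. What makes everything cancel is applying the ridge stationarity $\sum_j w_j\delta_j=0$ twice — once to collapse the Hessian entries so that $H_{12}=\slambda H_{11}$, once to rewrite $\sum_j p_j\delta_j$ in terms of pairwise differences of $\s_j^2$ and $\langle\stheta,\u_j\rangle^2$ — together with the identity $w_k p_j-w_j p_k=p_j p_k(\s_k^2-\s_j^2)$; the rest is routine. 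Equivalently, one can transcribe the proof of \cite[Theorem~3.8]{pmlr-v202-das23d} from the $\|\cdot\|_2$ norm to the $\hSigma_n$ norm, which merely rescales direction $\u_j$ by $\s_j^2/n$ throughout (as noted at the start of this appendix); the argument above is what that transcription produces.
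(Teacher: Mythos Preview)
Your proposal is correct and matches the paper's (implicit) approach: the paper does not spell out a proof but simply remarks that the condition differs from \cite[Eq.~(9)]{pmlr-v202-das23d} only by an extra $\s_j^2\s_k^2$ factor coming from the $\hSigma_n$-norm weighting, i.e., the intended proof is exactly the transcription of \cite[Theorem~3.8]{pmlr-v202-das23d} that you carry out. Your key observations --- that the ridge stationarity condition of Lemma~\ref{lem-Ridge} forces $m(\slambda)=0$ so $(\slambda,0)$ is a joint critical point, and that the same relation $\sum_j w_j\delta_j=0$ collapses the Hessian entries and produces the antisymmetric double sum --- are precisely what drives that argument, and your computations check out.
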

Note that this differs from \cite[Eq.~(9)]{pmlr-v202-das23d} in just one respect: it has $\s_j^4 \s_k^4$ instead of $\s_j^2 \s_k^2$.

\section{Details on $\xi$ parameters for general $k$-step SD} \label{sec-app-xis}

\subsection{Full $k$-step SD is representationally no larger than Repeated $k$-step SD} \label{sec-app-xis-fullVinc}

We first illustrate the Full $k$-step SD in Figure \ref{fig-sketch}. The repeated version introduces $k$ extra hyperparameters in the form of $\bxik \in \R^k$ parameters, whereas the full version introduces $\nicefrac{k(k+1)}{2}$.  
\begin{figure}[h]
\centering
\begin{tikzpicture}[->, >=stealth', auto, semithick, node distance=1.5cm]
\tikzstyle{state}=[draw, thick, fill=buff, minimum size=8mm]

\node[state] (S0) {$S_0$};
\node[state, fill=burlywood] (S1) [right of=S0] {$S_1$};
\node[state, fill=bronze] (S2) [right of=S1] {$S_2$};
\node[state, draw=none, fill=white] (Sdots) [right of =S2] {$\cdots$};
\node[state, fill=brown] (Sk) [right of=Sdots] {$S_k$};
\node[state, draw=none, fill=white] (lbla) at (10, 0) {\textit{Repeated $k$-step self-distillation}};
\path (S0) edge (S1)
      (S1) edge (S2)
      (S2) edge (Sdots)
      (Sdots) edge (Sk);

\node[state] (S0a) [below of=S0] {$S_0$};
\node[state, fill=burlywood] (S1a) [right of=S0a] {$S_1$};
\node[state, fill=bronze] (S2a) [right of=S1a] {$S_2$};
\node[state, draw=none, fill=white] (Sdotsa) [right of =S2a] {$\cdots$};
\node[state, fill=brown] (Ska) [right of=Sdotsa] {$S_k$};
\node[state, draw=none, fill=white] (lblb) at (10, -1.5) {\textit{Full $k$-step self-distillation}};
\path (S0a) edge (S1a)
      (S1a) edge (S2a)
      (S2a) edge (Sdotsa)
      (Sdotsa) edge (Ska)
      (S0a) edge[bend left] (S2a)
      (S0a) edge[bend left] (Ska)
      (S1a) edge[bend left] (Ska)
      (S2a) edge[bend left] (Ska);
\end{tikzpicture}
\caption{Illustrating two possible generalizations of $1$-step SD to a $k$-step process.}
\label{fig-sketch}
\end{figure}
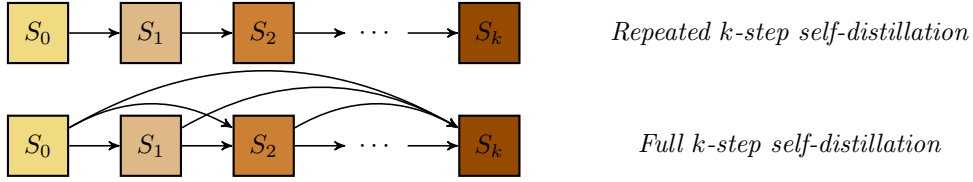

Consider the case of $k=2$, since that is the lowest value of $k$ for which the full version and the repeated version differ. Figure \ref{fig-2step-close} illustrates this difference explicitly. We will show that the freedom of $\txi \in \R^3$ is no more than the freedom of $\bxi \in \R^2$. This shows the equivalence of the full $2$-step and the repeated $2$-step versions, when $\txi \in \R^3$, $\bxi \in \R^2$ are free parameters optimized over the entire respective spaces.
Such equivalence for the general $k$-step case is then easy to see.

Nodes $S_0, \tS_0$ are both solving the ridge regression problem (\ref{est-ridge-2}). Let $\theta_0 = \ttheta_0 =  \Om_{\lambda}^{-1} \X \Y$ denote the estimator for both these nodes.
Similarly $S_1$ and $\tS_1$ are solving the same problem, although with different parameters. We have $\theta_1 (\bxi_1) =  \Om_{\lambda}^{-1} \X \left( \bxi_1 \cdot \X^\top \btheta_0 + (1 - \bxi_1) \cdot \Y \right)$ (and similarly, one can write $\ttheta_1$ with $\txi_1$ instead of $\bxi_1$). 
Now the node $S_2$ is also solving a $1$ parameter supervised SD problem, so $\theta_2 (\bxi_1, \bxi_2) =  \Om_{\lambda}^{-1} \X \left( \bxi_2 \cdot \X^\top \theta_1 (\bxi_1) + (1 - \bxi_2) \cdot \Y \right)$. Expanding this gives
\begin{equation} \label{eq-SD2step-inc}
    \theta_2 (\bxi_1, \bxi_2) =  \left\{ (1 - \bxi_2) \cdot \Id +  (\bxi_2 - \bxi_1 \bxi_2) \cdot \Om_{\lambda}^{-1} \X \Xt + \bxi_1 \bxi_2 \cdot (\Om_{\lambda}^{-1} \X \Xt)^2 \right\} \Om_{\lambda}^{-1} \X \Y \; \;.
\end{equation}
But the optimization problem for $\tS_2$ is a $2$ parameter supervised SD. It evaluates to
\begin{equation*}
    \argmin_{\theta \in \R^d} \left( \frac{\txi_{2a}}{2} \| \Xt \ttheta_0 - \Xt \theta \|^2 + \frac{\txi_{2b}}{2} \| \Xt \ttheta_1 - \Xt \theta \|^2 + \frac{(1 - \txi_{2a} - \txi_{2b})}{2} \| \Y - \Xt \theta \|^2 + \frac{\lambda}{2} \| \theta \|^2 \right) \; \;.
\end{equation*}
Following through a similar calculation, we observe that $\ttheta_2$ for node $\tS_2$ is given by
\begin{equation} \label{eq-SD2step-full}
    \ttheta_2 (\txi_1, \txi_{2a}, \txi_{2b}) =  \left\{ (1 - \txi_{2a} - \txi_{2b}) \cdot \Id +  (\txi_{2a} + \txi_{2b} - \txi_1 \txi_{2b} ) \cdot \Om_{\lambda}^{-1} \X \Xt + \txi_1 \txi_{2b} \cdot (\Om_{\lambda}^{-1} \X \Xt)^2 \right\} \Om_{\lambda}^{-1} \X \Y \; \;.
\end{equation}

\begin{figure}[H]
\centering
\begin{tikzpicture}[->, >=stealth', auto, semithick, node distance=2.5cm]
\tikzstyle{state}=[draw, thick, fill=buff, minimum size=8mm]

\node[state] (S0) {$S_0$};
\node[state, fill=burlywood] (S1) [right of=S0] {$S_1$};
\node[state, fill=bronze] (S2) [right of=S1] {$S_2$};
\node[state, draw=none, fill=white] (lbla) at (10, 0) {\textit{Repeated $2$-step self-distillation}};
\path (S0) edge node {$\bxi_1$} (S1)
      (S1) edge node {$\bxi_2$} (S2);

\node[state] (S0a) [below of=S0] {$\tS_0$};
\node[state,fill=burlywood] (S1a) [right of=S0a] {$\tS_1$};
\node[state,fill=bronze] (S2a) [right of=S1a] {$\tS_2$};
\node[state, draw=none, fill=white] (lblb) at (10, -2.5) {\textit{Full $2$-step self-distillation}};
\path (S0a) edge node {$\txi_1$} (S1a)
      (S1a) edge node {$\txi_{2b}$} (S2a)
      (S0a) edge[bend left] node {$\txi_{2a}$} (S2a);
\end{tikzpicture}    
\caption{Repeated vs Full $2$-step SD. We show that the extra freedom of the parameter $\txi_{2a}$ does not provide any additional freedom, when the other two $\txi_{1}, \txi_{2b}$ are optimized as free parameters.}
\label{fig-2step-close}
\end{figure}
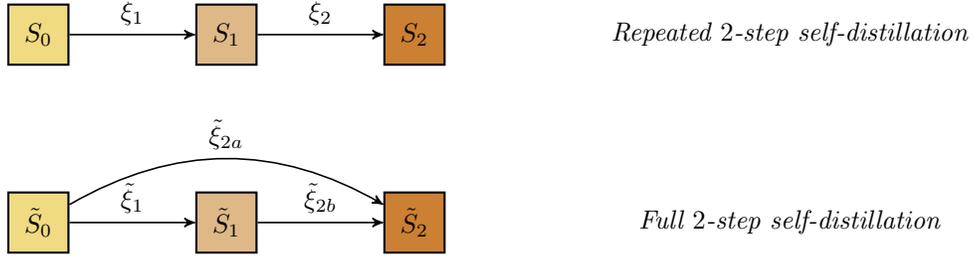

Equations (\ref{eq-SD2step-inc}) vs (\ref{eq-SD2step-full}) show that the full $2$-step offers the same freedom as the repeated $2$-step. However the repeated version has one shortcoming.
To generate the optimal $2$-step SD estimator, $\bxi_1$ needs to be different than the value needed for generating the optimal $1$-step SD estimator.
That is, the repeated $k$-step version, with its $k$ free parameters, allows only to generate the \emph{final} $k^{th}$-step estimator as the optimal one (i.e. if we choose the $(\bxi_1, \bxi_2)$ values so that the $2^{nd}$ estimator is the optimal $2$-step SD estimator, then the $1^{st}$ estimator with the chosen $\bxi_1$ won't be the optimal $1$-step SD estimator).
Whereas the full $k$-step version, with all its $k(k+1)/2$ free parameters, allows us to generate a sequence of \emph{all} $k$ optimal estimators.

\subsection{Proof that $k$-step SD estimator with $\bxik \in \R^k$ will have the form given in eq (\ref{est-sdk})} \label{sec-app-xis-objective}

Consider Figure \ref{fig-sdk}. $\bxik \in \R^k$ is the set of actual imitation parameters used for running $k$-step (repeated) SD. Let $\htheta (\lambda, \bxik) $ denote the $k$-step estimator generated by using $\bxik$ parameters. In what follows, we will prove that $\htheta (\lambda, \bxik) $ will have the form given in eq (\ref{est-sdk}), with $\xik \in \R^k$ as the described reparametrization of $\bxik \in \R^k$. 
Since we're in the repeated version, the $k^{th}$ step objective will be a combination of losses w.r.to ground-truth labels and predictions from the $(k-1)^{th}$ step.
So, the objective for the $k^{th}$ step is 
\begin{equation} \label{eq-obj-Sk}
    \htheta(\lambda, \bxik) := \argmin_{\theta \in \R^d} \left( \frac{\bxik_k}{2} \| \Xt \htheta(\lambda, \bxi^{(k-1)}) - \Xt \theta \|^2 + \frac{(1 - \bxik_k)}{2} \| \Y - \Xt \theta \|^2 + \frac{\lambda}{2} \| \theta \|^2 \right) \; \;.
\end{equation}
Note that $\htheta(\lambda, \bxik)$ recursively depends on predictions from the previous $\htheta(\lambda, \bxi^{(k-1)})$. This objective is of the form in eq (\ref{est-sd1}), so similar to eq (\ref{est-sd1-2}), we have the following expression
\begin{align}
    \htheta(\lambda, \bxik) &= \left( \X \Xt + \lambda \Id \right)^{-1} \X \left\{ \bxik_k \cdot \X^\top \htheta( \lambda, \bxi^{(k-1)}) + (1 - \bxik_k) \cdot \Y \right\} \label{eq-est-Sk} \\
    &= \left\{ (1 - \bxik_k) \cdot \Om_{\lambda}^{-1} \X \Y + \bxik_k \cdot \Om_{\lambda}^{-1} \X \X^\top \htheta( \lambda, \bxi^{(k-1)}) \right\} \; \;. \label{eq-est-Sk-2}
\end{align}
Using this, we can inductively prove that the general form of this estimator is captured by Eq.~\eqref{est-sdk}. \\ 
\textbf{Claim}. With the described reparametrization, i.e.,
$\xik_i := (1 - \bxik_{k-i}) \prod_{l=k-i+1}^{k} \bxik_{l}$ for each $i \in [k]$ (where we let $\bxik_0=0$), Eq.~\eqref{est-sdk} is the solution to the recursive form Eq.~\eqref{eq-est-Sk-2}.

\textbf{Proof}. We will prove this by induction. \\
\textit{Base Case}. From eqs (\ref{est-sdk}) and (\ref{eq-est-Sk-2}), the case for $k=1$ is true (with $\barxi^{(1)} = \bxi^{(1)}$). \\
\textit{Inductive Step}. Assuming Eq.~\eqref{est-sdk} captures the solution of Eq.~\eqref{eq-est-Sk-2} for $k-1$, we get
\begin{align*}
    \htheta(\lambda, \bxik) &= \left\{ \left\{ 1 -  \bxik_k \right\} \Id + \bxik_k \Om_\lambda^{-1} \X \Xt \left(  \left\{ 1 - \sum_{i=1}^{k-1} \barxi^{(k-1)}_i \right\} \Id + \sum_{i=1}^{k-1} \barxi^{(k-1)}_i \left( \Om_\lambda^{-1} \X \Xt \right)^i \right) \right\}  \cdot \Om_\lambda^{-1} \X \Y \; \;.
\end{align*}
This again satisfies the form in equation (\ref{est-sdk}), when the following coefficients match
\begin{align*}
    \bxik_k &= \sum_{i=1}^k \xik_i \; \;, \\
    \bxik_k \cdot ( 1 - \sum_{i=1}^{k-1} \barxi^{(k-1)}_i )  &= \xik_1 \; \;, \\
    \bxik_k \cdot \barxi^{(k-1)}_{i-1} &= \xik_i  \quad \quad \forall i \in \{2, 3, \cdots, k\} \; \;.
\end{align*}
One can then see that the described reparametrization makes the above hold true. 

\textbf{Remark}. 
Since $\htheta(\lambda, \bxi^{(1)})$ is simply the $1$-step SD estimator with the form $\htheta(\lambda, \bxi^{(1)}) = \Pre \cdot  \Om_{\lambda}^{-1} \X \Y $ for some preconditioner $\Pre$, plugging this in the equation eq (\ref{eq-est-Sk-2}), we realize that we can factor out $\Om_{\lambda}^{-1} \X \Y$ (i.e. the ridge solution) from the expression on the right side. That is, $\htheta(\lambda, \bxi^{(2)}) = \Pre' \cdot  \Om_{\lambda}^{-1} \X \Y $ for some different pre-conditioner $\Pre'$. This is why inductively we get that $\htheta(\lambda, \bxik)$, $k \geq 1$ all produce a pre-conditioning on the ridge solution, as shown in eq (\ref{est-sdk}). Further, eq (\ref{eq-est-Sk-2}) also dictates why we get increasing powers of the term $\Om_{\lambda}^{-1} \X \Xt$ in the final expression.


\subsection{Explicit reparametrization for $k= 2, 3$} \label{sec-app-xis-reparam}

We explicitly demonstrate the reparametrization for $k=2, 3$.
As noted in Section \ref{sec-expts-hparams} (and Appendix \ref{sec-app-choosing-xi}), owing to the quadratic form of excess risk in $\xik$, one can find the optimal $\xik$ analytically. It can then be translated back to the original $\bxik$ as follows.

For $k=2$, the form is (dropping the $.^{(2)}$ for ease)
\begin{equation} \label{trans-2stepSD}
    \bxi_1 = 1 - \frac{\barxi_1}{\left( \barxi_1 + \barxi_2 \right)}
    \quad, \quad \quad \quad \quad 
    \bxi_2 = \barxi_1 + \barxi_2 \; \;.
\end{equation}
For $k=3$, the form is (dropping the $.^{(3)}$ for ease)
\begin{equation} \label{trans-3stepSD}
    \bxi_1 = 1 - \frac{\barxi_2}{\barxi_2 + \barxi_3} 
    \quad, \quad \quad 
    \bxi_2 = 1 - \frac{\barxi_1}{\barxi_1 + \barxi_2 + \barxi_3} 
    \quad, \quad \quad 
    \bxi_3 = \barxi_1 + \barxi_2 + \barxi_3 \; \;.
\end{equation}


\section{Algebraic expansion of $\htheta(\lambda, \bxik)$ from eq (\ref{est-sdk})} \label{sec-app-proof-lem-est-sdk-expansion}

\begin{lemma} \label{lem-est-sdk-expansion}
The estimator $\htheta(\lambda, \bxik)$ given in eq (\ref{est-sdk}) can be expanded as
    \begin{align*}
        \htheta(\lambda, \bxik) &= \sum_{j=1}^d \left( 1 - \sum_{i=1}^k \xik_i \left\{ 1 - \left( \frac{\s_j^2}{\lambda + \s_j^2} \right)^i \right\} \right) \cdot \frac{\s_j^2}{\lambda + \s_j^2} \cdot \langle \stheta, \u_j \rangle \cdot \u_j \\ & \hspace{25pt} + \sum_{j=1}^d \left( 1 - \sum_{i=1}^k \xik_i \left\{ 1 - \left( \frac{\s_j^2}{\lambda + \s_j^2} \right)^i \right\} \right) \cdot \frac{\s_j}{\lambda + \s_j^2} \cdot \langle \Noise,  \v_j \rangle \cdot \u_j
    \end{align*}
\end{lemma}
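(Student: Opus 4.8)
The plan is to diagonalize every matrix appearing in Eq.~\eqref{est-sdk} in the common eigenbasis $\Ud=[\u_1,\ldots,\u_d]$ of $\X\Xt$ and then simply read off the coefficients. First I would invoke the SVD $\X=\Ud\Sd\Vd^\top$ from Eq.~\eqref{X-SVD}, which gives $\X\Xt=\sum_{j=1}^d \s_j^2\,\u_j\u_j^\top$, hence $\Om_\lambda=\sum_{j=1}^d(\lambda+\s_j^2)\,\u_j\u_j^\top$ and $\Om_\lambda^{-1}=\sum_{j=1}^d(\lambda+\s_j^2)^{-1}\,\u_j\u_j^\top$ (well defined since $\lambda>0$; for $j>r$ the eigenvalue is just $\lambda$). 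Therefore $\big(\Om_\lambda^{-1}\X\Xt\big)^i=\sum_{j=1}^d\big(\s_j^2/(\lambda+\s_j^2)\big)^i\,\u_j\u_j^\top$ for all $i\ge1$, and the pre-conditioner $\Pre(\lambda,\bxik)$ in Eq.~\eqref{est-sdk} is a matrix diagonal in $\{\u_j\}$ with $j$-th eigenvalue $1-\sum_{i=1}^k\xik_i\{1-(\s_j^2/(\lambda+\s_j^2))^i\}$.

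Next I would expand the ridge factor $\Om_\lambda^{-1}\X\Y$ by writing $\Y=\Xt\stheta+\Noise$ and using the SVD again: $\X\Y=\X\Xt\stheta+\X\Noise=\sum_{j=1}^d\s_j^2\langle\stheta,\u_j\rangle\u_j+\sum_{j=1}^d\s_j\langle\Noise,\v_j\rangle\u_j$, so after applying $\Om_\lambda^{-1}$ the ridge estimator decomposes into a ``signal'' part $\sum_j\frac{\s_j^2}{\lambda+\s_j^2}\langle\stheta,\u_j\rangle\u_j$ and a ``noise'' part $\sum_j\frac{\s_j}{\lambda+\s_j^2}\langle\Noise,\v_j\rangle\u_j$. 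Since the pre-conditioner and the ridge vector are now written in the same orthonormal basis $\{\u_j\}$, the final step is just to multiply the $j$-th pre-conditioner eigenvalue by the $j$-th coordinate of the ridge vector and collect the signal and noise sums, which produces precisely the two claimed sums.

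The argument is pure bookkeeping, so I do not expect any real obstacle; the only places that need care are (i) keeping straight that the left singular vectors $\u_j$ diagonalize $\X\Xt$ and the pre-conditioner, while the inner products with $\Noise$ are against the \emph{right} singular vectors $\v_j$ (via $\X\Noise=\sum_j\s_j\langle\v_j,\Noise\rangle\u_j$), and (ii) the directions $j>r$ with $\s_j=0$, which contribute nothing to $\Om_\lambda^{-1}\X\Y$ because of the $\s_j^2$ and $\s_j$ prefactors, so extending the sums from $j\in[r]$ to $j\in[d]$ (as in the stated lemma) is harmless.
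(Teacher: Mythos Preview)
Your proposal is correct and follows essentially the same approach as the paper's proof: diagonalize $\X\Xt$ and $\Om_\lambda$ in the common eigenbasis $\Ud$, use orthonormality of $\Ud$ to cancel the inner factors, and read off the eigenvalues. The paper's proof is just a terser statement of exactly this idea; your version simply spells out the signal/noise split $\Y=\Xt\stheta+\Noise$ and the bookkeeping for $j>r$ more explicitly.
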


\begin{proof}
    The expansion relies on $\X \Xt = \Ud \Sd^2 \Ud^\top$ and $\X \Xt + \lambda \Id = \Ud (\Sd^2 + \lambda \Id) \Ud^\top$. $\Ud$ is orthonormal ($\Ud \Ud^\top = \Id = \Ud^\top \Ud$), which neatly cancels all occurences of $\Ud$ in the middle, and allows us to directly combine the matrices in the eigenvalue space.
\end{proof}

\section{Detailed version and a proof of Theorem~\ref{thm-FD-separation}} \label{sec-app-proof-thm-FD-separation}

We first write a detailed version of the theorem that exactly characterizes the family of problem instances that admits the separation. We then present a proof. We point the reader to Appendix~\ref{sec-app-notation} for notations used throughout the proof.

\begin{theorem}[Detailed version] \label{thm-FD-separation-App}
Consider the following conditions on $\stheta, \X$ that characterize a family of problem instances $\{ (\X, \stheta, \gamma^2) \}$:
\begin{enumerate}
    \item $\stheta = \| \stheta \| \cdot \u_1$ (implying $\stheta_1 = \| \stheta \|^2$ and $\stheta_j = 0$ for $j \geq 2$) \hfill [Assumption \ref{assump-FD-sj}.\ref{assump-FD-sj2}]
    \item $\| \stheta \|^2 > 99 \cdot \left( \nicefrac{r \gamma^2}{\s_r^2} \right) \cdot \left( \nicefrac{\s_1^2}{\s_r^2} \right) $
    \item Assumption \ref{assump-FD-sj}.\ref{assump-FD-sj1} holds on the singular values of $\X$ \hfill [Assumption \ref{assump-FD-sj}.\ref{assump-FD-sj1}]
    \item $( \nicefrac{\s_1^2}{\s_r^2} )\leq 2$ holds on the singular values of $\X$
\end{enumerate}
\begin{align*}
    \intertext{Under Condition \#1 + Condition \#3, it holds that}
    \exists \lambda > 0, \exists \bxir \in \R^r, \hspace{10pt} &\ExcessRisk\left( \htheta (\lambda, \bxir) \right) \leq \frac{\gamma^2}{n} \tag{Rewriting eq (\ref{eq-thm-FD-sep-SD})} 
    \intertext{Under Condition \#1 + Condition \#2 + Condition \#4, it holds that}
    \forall \lambda > 0, \forall \xi \in \R, \hspace{10pt} &\ExcessRisk\left( \htheta (\lambda, \xi) \right) \geq \frac{0.99}{2^{9}} \cdot \frac{r\gamma^2}{n} \tag{Rewriting eq (\ref{eq-thm-FD-sep-oneSD})} 
    \intertext{Under Condition \#1 + Condition \#2, it holds that}
    \forall \lambda > 0, \hspace{10pt} &\ExcessRisk\left(\htheta(\lambda) \right) \geq  0.98 \cdot \frac{r \gamma^2}{n} \tag{Rewriting eq (\ref{eq-thm-FD-sep-Ridge})}
\end{align*}
\end{theorem}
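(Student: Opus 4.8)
The plan is to treat the three displayed inequalities separately, in each case starting from the closed-form excess risk obtained by expanding the relevant estimator in the eigenbasis $\{\u_j\}_{j=1}^d$ of $\X\Xt$. Write $\mu_j := \s_j^2/(\lambda+\s_j^2)\in(0,1)$ for $j\in[r]$ (and $\mu_j=0$ for $j>r$). By Lemma~\ref{lem-est-sdk-expansion} (with $k=0$ for ridge and $k=1$ for $1$-step SD), together with $\E[\Noise]=0$, $\E[\Noise\Noise^\top]=\gamma^2\In$, orthonormality of $\{\v_j\}$, and $\|v\|_{\hSigma_n}^2=\tfrac1n\sum_j\s_j^2\langle v,\u_j\rangle^2$, one gets (using Condition~\#1 to discard the bias contributions of all directions $\u_j$ with $j\ge 2$)
\begin{align*}
\ExcessRisk(\htheta(\lambda)) &\;=\; \tfrac1n\Bigl[\s_1^2(1-\mu_1)^2\|\stheta\|^2 + \gamma^2\sum_{j=1}^r\mu_j^2\Bigr],\\
\ExcessRisk(\htheta(\lambda,\xi)) &\;=\; \tfrac1n\Bigl[\s_1^2(c_1-1)^2\|\stheta\|^2 + \gamma^2\sum_{j=1}^r c_j^2\Bigr],\qquad c_j:=\mu_j\bigl(1-\xi(1-\mu_j)\bigr).
\end{align*}
The bound \eqref{eq-thm-FD-sep-SD} is then immediate: Theorem~\ref{thm-sdkPre-optPre} says Condition~\#3 guarantees a $\lambda>0$ and $\bxir$ attaining \eqref{eq-thm-sdk-opt-ER-LowerBound}, which under Condition~\#1 collapses to $\tfrac{\gamma^2}{n}\cdot\tfrac{\|\stheta\|^2}{\|\stheta\|^2+\gamma^2/\s_1^2}\le\tfrac{\gamma^2}{n}$.

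For the ridge lower bound \eqref{eq-thm-FD-sep-Ridge} I would note that $\mu_j^2$ is increasing in $\s_j^2\ge\s_r^2$, so the variance is at least $\tfrac{r\gamma^2}{n}\mu_r^2=\tfrac{r\gamma^2}{n}\cdot\tfrac1{(x+1)^2}$ with $x:=\lambda/\s_r^2$. Bounding the bias from below using $\s_1^2\ge\s_r^2$ and feeding in Condition~\#2 in the form $R:=\|\stheta\|^2\s_r^4/(r\gamma^2\s_1^2)>99$, the total becomes at least $\tfrac{r\gamma^2}{n}\cdot\tfrac{1+Rx^2}{(x+1)^2}$; minimizing this scalar function over $x\ge0$ (the minimizer is $x=1/R$) gives value $\tfrac{R}{R+1}>\tfrac{99}{100}>0.98$.

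For \eqref{eq-thm-FD-sep-oneSD} I would split on how close the effective leading eigenvalue $c_1$ is to $1$. If $(c_1-1)^2\ge\tfrac1{64}$, then since $\s_1^2\|\stheta\|^2\ge 99\,r\gamma^2(\s_1^2/\s_r^2)^2\ge 99\,r\gamma^2$ by Condition~\#2, the bias term alone already exceeds $\tfrac{99}{64}\cdot\tfrac{r\gamma^2}{n}$. Otherwise $|c_1-1|<\tfrac18$; solving $c_1=\mu_1(1-\xi(1-\mu_1))$ for $\xi$ and substituting into $c_j$ yields, after simplification, $c_j=\tfrac{\mu_j}{\mu_1}\bigl[(1+t)+\gamma_j(1+t-\mu_1)\bigr]$ with $t:=c_1-1$ and $\gamma_j:=\tfrac{\mu_1-\mu_j}{1-\mu_1}\ge0$. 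Condition~\#4 ($\s_1^2\le2\s_r^2$) is exactly what yields $\tfrac{\mu_j}{\mu_1}\ge\tfrac12$ and $\gamma_j\le\tfrac{\s_1^2}{\s_r^2}-1\le1$; since also $|t|<\tfrac18$ and $\mu_1<1$, these give $(1+t)+\gamma_j(1+t-\mu_1)\ge\tfrac78-\tfrac18=\tfrac34$, hence $c_j\ge\tfrac38$ for every $j\in[r]$ and the variance term is at least $\tfrac{9}{64}\cdot\tfrac{r\gamma^2}{n}$. Taking the smaller of the two case bounds proves the inequality; carefully propagating the looser constants that appear when the factor-$2$ estimates of Condition~\#4 are used repeatedly recovers the stated $\tfrac{0.99}{2^9}$.

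The two one-variable optimizations are routine; the crux is the regime $c_1\approx1$ of the $1$-step SD bound. There $\xi$ is a priori unbounded — it behaves like $-1/\mu_1$, which diverges as $\lambda\to\infty$ — so one must rule out that some extreme $\xi$ drives several $c_j$ toward $0$ and makes the variance negligible while keeping $c_1$ near $1$. The point is that Condition~\#4 keeps all the $\mu_j$ within a factor $2$ of one another, so pinning $c_1$ near $1$ forces every $c_j$ near the value it would take at $c_1=1$, and those values are bounded below by $\tfrac12$; without this condition the argument breaks down.
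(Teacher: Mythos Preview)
Your argument is correct, and for the two lower bounds it takes a genuinely different route from the paper's.

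For the $r$-step SD upper bound, both you and the paper invoke Theorem~\ref{thm-sdkPre-optPre} in the same way.

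For the ridge bound, the paper drops the bias, then controls the remaining (monotone-in-$\lambda$) variance at the optimizer $\slambda$ by first upper-bounding $\slambda$ via its fixed-point characterization in Lemma~\ref{lem-Ridge}. Your approach instead keeps bias and variance together, lower-bounds each in terms of the single variable $x=\lambda/\s_r^2$, and minimizes $\frac{1+Rx^2}{(x+1)^2}$ directly; this avoids the $\slambda$ analysis and in fact yields $R/(R+1)>0.99$, slightly sharper than the paper's $0.98$. (The bias reduction step does check out: $\frac{\s_1^2}{(\lambda+\s_1^2)^2}\ge\frac{\s_r^4}{\s_1^2(\lambda+\s_r^2)^2}$ follows from $\frac{\s_1^2}{\s_r^2}\ge\frac{\lambda+\s_1^2}{\lambda+\s_r^2}$.)

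For $1$-step SD, the paper minimizes the quadratic $M\xi^2+2m\xi+c$ in closed form and then does substantial algebra on $(Mc-m^2)/M$, splitting into terms $T_1,T_2,T_3$ and using two different upper bounds on $M$ before invoking Conditions~\#2 and~\#4 to get $Q_1Q_2\ge\frac{0.99}{2^9}$. Your case split on $|c_1-1|$ is more structural: either bias alone is large, or pinning $c_1\approx 1$ forces all $c_j\ge\tfrac38$ (Condition~\#4 enters precisely to keep the $\mu_j$ within a factor $2$ and $\gamma_j\le 1$), giving variance $\ge\tfrac{9}{64}\cdot\tfrac{r\gamma^2}{n}$. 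This is much sharper than the paper's constant. One minor point: your closing remark about ``propagating looser constants'' to ``recover'' $\tfrac{0.99}{2^9}$ reads backwards --- your $\tfrac{9}{64}$ already dominates $\tfrac{0.99}{2^9}$, so the stated inequality follows a fortiori and there is nothing to loosen.
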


\begin{proof}
We will analyze all three: $k$-step SD, ridge, and $1$-step SD.

\textbf{$k$-step SD}: Since Assumption \ref{assump-FD-sj}.\ref{assump-FD-sj1} holds, from Theorem \ref{thm-sdkPre-optPre} we directly have
\begin{align}
    \exists \lambda > 0, \exists \bxir \in \R^r, \quad \ExcessRisk\left( \htheta \left( \lambda, \bxir \right) \right) &= \frac{\gamma^2}{n} \sum_{j=1}^r \frac{ \stheta_j }{\left(\stheta_j  + \frac{\gamma^2}{\s_j^2} \right)} \\
    \intertext{Since $\stheta_j = 0, j \geq 2$ from Condition \#1, we have}
    &= \frac{\gamma^2}{n} \cdot \frac{ \stheta_1}{\stheta_1 + \frac{\gamma^2}{\s_1^2}} \label{eq-SD-UB-proof-PREV} \\
    & \leq \frac{\gamma^2}{n} \label{eq-SD-UB-proof}
\end{align}
This completes the proof for (\ref{eq-thm-FD-sep-SD}). 

\textbf{Ridge}: We will now show (\ref{eq-thm-FD-sep-Ridge}), by characterizing the Excess Risk for the ridge estimator in this regime, showing that it can be upto $r$ times worse.
From Lemma \ref{lem-Ridge}, we realize that the Excess Risk expression for the simple ridge estimator $\htheta(\lambda)$ is given by
\begin{align} 
    \forall \lambda > 0, \quad \ExcessRisk\left( \htheta(\lambda) \right) &= \frac{1}{n} \sum_{j=1}^r \frac{\left( \lambda^2 \stheta_j + \gamma^2 \s_j^2 \right) \cdot \s_j^2}{(\lambda + \s_j^2)^2} \label{eq-ER-Ridge-inproof} \\
    &\geq \underbrace{\frac{\gamma^2}{n} \sum_{j=1}^r \frac{ \s_j^4 }{(\lambda + \s_j^2)^2}}_{\text{Just the Variance term}} \label{eq-ERVar-Ridge}
\end{align}
Inequality (\ref{eq-ERVar-Ridge}) above comes from ignoring the bias term (since it is non-negative). Note that the variance term is a \emph{decreasing} function of $\lambda$.
And again from Lemma \ref{lem-Ridge}, we get the following expression for $\slambda > 0$ that minimizes the $\ExcessRisk$.
\begin{align}
    \slambda &= \gamma^2 \cdot \frac{\sum_{j=1}^r \frac{\s_j^4}{\left(\slambda + \s_j^2\right)^3}}{\sum_{j=1}^r \frac{\stheta_j \s_j^4}{\left(\slambda + \s_j^2\right)^3}} \label{eq-slambda-exp-inproof} \\
    &= \frac{\gamma^2}{\| \stheta \|^2} \cdot \sum_{j=1}^r \left( \frac{\s_j}{\s_1} \right)^4 \cdot \left( \frac{\slambda + \s_1^2}{\slambda + \s_j^2} \right)^3 \tag{$\stheta_j = 0$, $j \geq 2$ and $\stheta_1 = \| \stheta \|^2$ from Cond \#1} \\
    & \leq \frac{\gamma^2}{\| \stheta \|^2} \cdot \sum_{j=1}^r \left( \frac{\s_j}{\s_1} \right)^4 \cdot \left( \frac{\s_1^2}{\s_j^2} \right)^3 \tag{Since $\frac{\slambda + \s_1^2}{\slambda + \s_j^2} \leq \frac{\s_1^2}{\s_j^2}$, as $\slambda > 0$} \\
    & \leq \frac{\gamma^2}{\| \stheta \|^2} \cdot \sum_{j=1}^r \left( \frac{\s_1}{\s_j} \right)^2 \label{ineq-slambda-ridge-OG} \\
    &\leq \frac{\gamma^2}{\| \stheta \|^2} \cdot r \left( \frac{\s_1}{\s_r} \right)^2 \label{ineq-slambda-ridge}
\end{align}

Now since the variance term in (\ref{eq-ERVar-Ridge}) is a decreasing function of $\lambda$, so we can use the upper bound of $\slambda$ from (\ref{ineq-slambda-ridge}) to lower bound the $\ExcessRisk$ of optimal ridge as
\begin{align}
    \ExcessRisk\left( \htheta(\slambda) \right) &\geq \frac{\gamma^2}{n} \cdot \sum_{j=1}^r \frac{1}{\left( 1 + \frac{1}{s_j^2} \cdot \frac{\gamma^2}{ \| \stheta \|^2} \cdot r \left(\frac{s_1}{s_r}\right)^2 \right)^2} \tag*{} \\
    & \geq \frac{\gamma^2}{n} \cdot \sum_{j=1}^r \frac{1}{\left( 1 + \frac{s_r^2}{s_j^2} \cdot \frac{1}{99} \right)^2} \tag{Using Condition \#2} \\
    &\geq \frac{\gamma^2}{n} \cdot \sum_{j=1}^r \frac{1}{\left( 1 + \frac{1}{99} \right)^2} \tag{Since $s_r \leq s_j$, $\forall j \leq r$} \\
    &\geq \frac{\gamma^2}{n} \cdot r \cdot  \underbrace{\frac{99^2}{100^2}}_{\geq 0.98}
\end{align}
This completes the proof of eq (\ref{eq-thm-FD-sep-Ridge}).

    \textbf{$1$-step SD}: To evaluate $1$-step SD's $\ExcessRisk$, we make use of Theorem \ref{thm-quadraticRisk}. Observe that since $k=1$, the $\ExcessRisk$ is a simple quadratic in one variable. We will call $\xi^{(1)} \in \R$ as just $\xi$ (similar to eq (\ref{est-sd1-3})). And similarly, we will call $M^{(1)}, m^{(1)}$ as just $M, m$, both real-valued. We then have
    \begin{align}
        \ExcessRisk \left( \htheta( \lambda, \xi) \right) &= M \xi^2 + 2m \xi + c \tag*{} \\
        & \geq c - \frac{m^2}{M} \tag{By simple quadratic min} \\
        & = \frac{Mc - m^2}{M} \label{eq-proof-1SD-Mcm}
    \end{align}
    Note that $M, m, c$ are all functions of $\lambda$.
    Now we evaluate their expressions.
    $c$ is simply $\ExcessRisk$ of ridge, which we will fetch from Lemma \ref{lem-Ridge}.
    Evaluate $M$ from Theorem \ref{thm-quadraticRisk}:
    \begin{align*}
        M &= \frac{1}{n} \sum_{j=1}^r \frac{\left(  \frac{\lambda \stheta_j}{\rho_j} + \gamma^2 \right) }{(1 + \rho_j)^2} \cdot C_j(1) \cdot C_j(1) \\
        &= \frac{1}{n} \sum_{j=1}^r \frac{\left(  \frac{\lambda \stheta_j}{\rho_j} + \gamma^2 \right) }{(1 + \rho_j)^2} \cdot \frac{\rho_j^2}{(1 + \rho_j)^2} \tag{Using $C_j(1) = 1 - \frac{1}{1 + \rho_j}$} \\
        &= \frac{1}{n} \sum_{j=1}^r \frac{\left(  \lambda \stheta_j \rho_j + \gamma^2  \rho_j^2 \right) }{(1 + \rho_j)^4} \\
        &= \frac{1}{n} \left( \sum_{j=1}^r \frac{\left(  \lambda^2 \stheta_j \s_j^6 + \gamma^2  \lambda^2 \s_j^4 \right) }{(\lambda + \s_j^2)^4} \right) \tag{Using $\rho_j = \frac{\lambda}{\s_j^2}$}
    \end{align*}
    Evaluate $m$ from Theorem \ref{thm-quadraticRisk}:
    \begin{align*}
        m &= \frac{1}{n} \sum_{j=1}^r \frac{\left(\lambda \stheta_j - \gamma^2 \right) }{(1 + \rho_j)^2} \cdot C_j(1) \\
        &= \frac{1}{n} \sum_{j=1}^r \frac{\left(\lambda \stheta_j - \gamma^2 \right) }{(1 + \rho_j)^2} \cdot \frac{\rho_j}{1 + \rho_j} \tag{Using $C_j(1) = 1 - \frac{1}{1 + \rho_j}$} \\
        &= \frac{1}{n} \left( \sum_{j=1}^r \frac{\left(\lambda^2 \stheta_j \s_j^4  - \gamma^2 \lambda \s_j^4 \right) }{(\lambda + \s_j^2)^3} \right)  \tag{Using $\rho_j = \frac{\lambda}{\s_j^2}$} 
    \end{align*}

    We now write the expressions together for comparison, before we use them.
    \begin{align*}
        n \cdot M &= \lambda^2 \sum_{j=1}^r \frac{ \stheta_j \s_j^6 }{(\lambda + \s_j^2)^4} + \lambda^2 \gamma^2 \sum_{j=1}^r \frac{\s_j^4 }{(\lambda + \s_j^2)^4} \\
        n \cdot m &= \lambda^2 \sum_{j=1}^r \frac{ \stheta_j \s_j^4 }{(\lambda + \s_j^2)^3} - \lambda \gamma^2 \sum_{j=1}^r \frac{\s_j^4 }{(\lambda + \s_j^2)^3} \\
        n \cdot c &= \lambda^2 \sum_{j=1}^r \frac{ \stheta_j \s_j^2 }{(\lambda + \s_j^2)^2} +  \gamma^2 \sum_{j=1}^r \frac{\s_j^4 }{(\lambda + \s_j^2)^2} \tag{Directly from Lemma \ref{lem-Ridge}}
    \end{align*}
    With all these pieces, for the numerator in eq (\ref{eq-proof-1SD-Mcm}), we have
    \begin{align*}
        n^2 \cdot \left( Mc - m^2 \right) &= T_1 + T_2 + T_3 
    \end{align*}
    where we use $T_1, T_2, T_3$ to capture terms of different forms. $T_1$ will capture the $\stheta_j$ product terms, $T_2$ will capture the $\gamma^2$ product terms, and $T_3$ will capture the cross terms. Namely,
    \begin{align*}
        T_1 &= \lambda^4 \sum_{j=1}^r \sum_{l=1}^r \frac{ \stheta_j \stheta_l \s_j^2 \s_l^2}{ (\lambda + \s_j^2)^2 (\lambda + \s_l^2)^2  } \cdot \left( \frac{s_j^4}{(\lambda + \s_j^2)^2} + \frac{s_l^4}{(\lambda + \s_l^2)^2} -  \frac{ 2 s_j^2 \s_l^2}{(\lambda + \s_j^2) (\lambda + \s_l^2) } \right) \\
        &= \lambda^4 \sum_{j=1}^r \sum_{l=1}^r \frac{ \stheta_j \stheta_l \s_j^2 \s_l^2}{ (\lambda + \s_j^2)^2 (\lambda + \s_l^2)^2  } \cdot \left( \frac{s_j^2}{(\lambda + \s_j^2)} - \frac{s_l^2}{(\lambda + \s_l^2)} \right)^2 \\
        & \geq 0 \tag{\textdagger}\\
        T_2 &= \lambda^2 \gamma^4 \sum_{j=1}^r \sum_{l=1}^r \frac{  \s_j^4 \s_l^4}{ (\lambda + \s_j^2)^2 (\lambda + \s_l^2)^2  } \cdot \left( \frac{1}{(\lambda + \s_j^2)^2} + \frac{1}{(\lambda + \s_l^2)^2} - \frac{2}{(\lambda + \s_j^2) (\lambda + \s_l^2) } \right) \\
        &= \lambda^2 \gamma^4 \sum_{j=1}^r \sum_{l=1}^r \frac{  \s_j^4 \s_l^4}{ (\lambda + \s_j^2)^2 (\lambda + \s_l^2)^2  } \cdot \left( \frac{1}{(\lambda + \s_j^2)} - \frac{1}{(\lambda + \s_l^2)^2} \right)^2 \\
        & \geq 0 \tag{\textdagger \textdagger} \\
        T_3 &=  \lambda^2 \gamma^2 \left( \sum_{j=1}^r \frac{ \stheta_j \s_j^6 }{(\lambda + \s_j^2)^4} \right) \left( \sum_{j=1}^r \frac{\s_j^4 }{(\lambda + \s_j^2)^2} \right) \\
        & \hspace{10pt} + \lambda^4 \gamma^2 \left( \sum_{j=1}^r \frac{ \stheta_j \s_j^2 }{(\lambda + \s_j^2)^2} \right) \left( \sum_{j=1}^r \frac{\s_j^4 }{(\lambda + \s_j^2)^4} \right) \\
        & \hspace{10pt} + 2 \lambda^3 \gamma^2 \left( \sum_{j=1}^r \frac{ \stheta_j \s_j^4 }{(\lambda + \s_j^2)^3} \right) \left( \sum_{j=1}^r \frac{\s_j^4 }{(\lambda + \s_j^2)^3}  \right) \\
        & \geq \lambda^2 \gamma^2 \left( \sum_{j=1}^r \frac{ \stheta_j \s_j^6 }{(\lambda + \s_j^2)^4} \right) \left( \sum_{j=1}^r \frac{\s_j^4 }{(\lambda + \s_j^2)^2} \right) + \lambda^4 \gamma^2 \left( \sum_{j=1}^r \frac{ \stheta_j \s_j^2 }{(\lambda + \s_j^2)^2} \right) \left( \sum_{j=1}^r \frac{\s_j^4 }{(\lambda + \s_j^2)^4} \right) \tag{Ignoring the $3^{rd}$ term}
    \end{align*}
    For the overall lower bound on the numerator of eq (\ref{eq-proof-1SD-Mcm}), we combine the above lower bound on $T_3$ with eqs (\textdagger), (\textdagger \textdagger).
    Under Condition \#1, this simplifies to
    \begin{align}
        n^2 \cdot \left( Mc - m^2 \right) &\geq \lambda^2 \gamma^2 \cdot \frac{ \| \stheta \|^2 \s_1^6 }{(\lambda + \s_1^2)^4}  \left( \sum_{j=1}^r \frac{\s_j^4 }{(\lambda + \s_j^2)^2} \right) + \lambda^4 \gamma^2 \cdot  \frac{ \| \stheta \|^2 \s_1^2 }{(\lambda + \s_1^2)^2}  \left( \sum_{j=1}^r \frac{\s_j^4 }{(\lambda + \s_j^2)^4} \right) \tag*{} \\
        &= \gamma^2 \cdot \| \stheta \|^2 \s_1^2 \left\{  \frac{ \lambda^2 \s_1^4 }{(\lambda + \s_1^2)^4}  \left( \sum_{j=1}^r \frac{\s_j^4 }{(\lambda + \s_j^2)^2} \right) +   \frac{ \lambda^4 }{(\lambda + \s_1^2)^2}  \left( \sum_{j=1}^r  \frac{ \s_j^4 } {(\lambda + \s_j^2)^4} \right) \right\} \label{nume-LB}
    \end{align}
    
    And for the denominator in eq (\ref{eq-proof-1SD-Mcm}), we have
    \begin{align}
        n \cdot M &= \lambda^2 \sum_{j=1}^r \frac{ \stheta_j \s_j^6 }{(\lambda + \s_j^2)^4} + \lambda^2 \gamma^2 \sum_{j=1}^r \frac{\s_j^4 }{(\lambda + \s_j^2)^4} \tag*{} \\
        &= \lambda^2 \left( \sum_{j=1}^r \frac{ \stheta_j \s_j^6 }{(\lambda + \s_j^2)^4} + \gamma^2 \sum_{j=1}^r \frac{\s_j^4 }{(\lambda + \s_j^2)^4} \right)  \label{expr-nm}
    \intertext{Upper Bound $\propto \lambda^2$ from eq (\ref{expr-nm}):}
        &\leq \lambda^2 \left( \sum_{j=1}^r \frac{ \stheta_j \s_j^6 }{\s_j^8} + \gamma^2 \sum_{j=1}^r \frac{\s_j^4 }{\s_j^8} \right) \tag*{}\\
        &\leq \lambda^2 \left( \sum_{j=1}^r \frac{ \stheta_j \s_1^2 }{\s_j^4} + \gamma^2 \sum_{j=1}^r \frac{1}{\s_j^4} \right) \tag*{} \\
        &\leq \frac{ \lambda^2 }{s_r^4} \left( \| \stheta \|^2 \s_1^2 + r \gamma^2 \right) \label{deno-UB-1}
    \intertext{Upper Bound $\propto \nicefrac{1}{\lambda^2}$ from eq (\ref{expr-nm}):}
        & \leq \lambda^2 \left( \sum_{j=1}^r \frac{\stheta_j \s_j^6}{\lambda^4}  + \gamma^2 \sum_{j=1}^r \frac{\s_j^4}{\lambda^4} \right) \tag*{}\\ 
        & = \frac{1}{\lambda^2} \left( \sum_{j=1}^r \stheta_j \s_j^6 + 
        \gamma^2 \sum_{j=1}^r \s_j^4 \right) \tag*{} \\
        & \leq \frac{1}{\lambda^2} \left( \s_1^6 \sum_{j=1}^r \stheta_j  + \gamma^2 \s_1^4 \sum_{j=1}^r 1 \right) \tag*{} \\
        & \leq \frac{\s_1^4}{\lambda^2} \left(  \| \stheta \|^2 \s_1^2  + r \gamma^2  \right) \label{deno-UB-2}
    \end{align}

    Now we put together the numerator lower bound from eq (\ref{nume-LB}), with the denominator upper bound from eq (\ref{deno-UB-1}) for the first term, and from eq (\ref{deno-UB-2}) for the second term. We then get
    \begin{align}
        n \cdot \frac{Mc - m^2}{M} &\geq \gamma^2 \cdot \frac{  \| \stheta \|^2 \s_1^2 }{ \left( \| \stheta \|^2 \s_1^2  + r \gamma^2  \right)  } \left( \frac{ \s_1^4 \s_r^4 }{(\lambda + \s_1^2)^4}   \left( \sum_{j=1}^r \frac{\s_j^4 }{(\lambda + \s_j^2)^2} \right) +  \frac{ \nicefrac{\lambda^6}{\s_1^4} }{ (\lambda + \s_1^2)^2}  \left( \sum_{j=1}^r \frac{\s_j^4}{(\lambda + \s_j^2)^4} \right) \right) \tag*{}
        \intertext{Using $\s_j \geq \s_r$ and $\nicefrac{1}{(\lambda + \s_j^2)} \geq \nicefrac{1}{(\lambda + \s_1^2)}$ for all $j \in [r]$, we get}
        &\geq \gamma^2 \cdot \frac{  \| \stheta \|^2 \s_1^2 }{ \left( \| \stheta \|^2 \s_1^2  + r \gamma^2  \right)  } \left( \frac{ \s_1^4 \s_r^8 }{(\lambda + \s_1^2)^6}   \left( \sum_{j=1}^r 1 \right) +  \frac{ \lambda^6 \cdot ( \nicefrac{\s_r^4}{\s_1^4} ) }{ (\lambda + \s_1^2)^6}  \left( \sum_{j=1}^r 1 \right) \right) \tag*{} \\ 
        &= r \gamma^2 \cdot \underbrace{ \frac{  \| \stheta \|^2 \s_1^2 }{ \left( \| \stheta \|^2 \s_1^2  + r \gamma^2  \right) } }_{Q_1} \underbrace{ \left( \frac{ \s_1^4 \s_r^8 + ( \nicefrac{\s_r^4}{\s_1^4} )  \lambda^6 }{(\lambda + \s_1^2)^6} \right) }_{Q_2} \label{eq-Q1-Q2}
    \end{align}
    Under Condition \#2, it holds that
    \begin{align}
        Q_1 \geq \frac{ 99 \cdot \nicefrac{s_1^4}{s_r^4} }{99 \cdot \nicefrac{s_1^4}{s_r^4} + 1} \geq \frac{99}{100} = 0.99 \quad \quad \quad \text{(since $\s_1 \geq \s_r$)} \label{eq-Q1}
    \end{align}
    And now we analyze $Q_2$ using simple calculus. Note that
    \begin{align}
        Q_2 &=  \frac{ t_1 + t_2 \lambda^6 }{ (\lambda + \s_1^2)^6}  \quad \quad \quad \quad \text{ where } t_1 = \s_1^4 \s_r^8, \quad t_2 = \frac{\s_r^4}{\s_1^4} \tag*{}
        \intertext{Simple calculus shows that this function is minimized at $\bar{\lambda} = \left( \nicefrac{t_1}{(t_2 \s_1^2)} \right)^{0.2} = \left( \s_1^6 \s_r^4 \right)^{0.2} \leq \s_1^2$ (since $\s_r \leq \s_1$). And the min value of the function (evaluated at $\bar{\lambda}$) is }
        Q_2 & \geq \frac{t_1}{\s_1^2 \left( \bar{\lambda} + \s_1^2 \right)^5 } \geq \frac{t_1}{\s_1^2 \cdot 2^5 \s_1^{10}} =  \frac{1}{32} \cdot \left( \frac{\s_r}{\s_1}\right)^8 \label{eq-Q2}
    \end{align}
    Combining eqs (\ref{eq-proof-1SD-Mcm}), (\ref{eq-Q1-Q2}), (\ref{eq-Q1}), (\ref{eq-Q2}), we get
    \begin{equation*}
        \forall \lambda > 0, \forall \xi \in \R \hspace{10pt} \ExcessRisk \left( \htheta (\lambda, \xi) \right) \geq \frac{0.99}{2^5} \cdot \left( \frac{\s_r}{\s_1} \right)^8 \cdot \frac{r \gamma^2}{n}
    \end{equation*}
    Using Condition \#4 in the above gives the desired eq (\ref{eq-thm-FD-sep-oneSD}).
\end{proof}

\subsection{Discussion on Condition \#1 in Theorem \ref{thm-FD-separation-App}} \label{sec-app-peaky-discussion}

One does not strictly need $\stheta$ to be aligned with the leading eigenvector $\u_1$. 
If instead we had the below conditions (call them Conditions \#1' and \#2'), then also the Theorem would hold.
\begin{enumerate}
    \item $\stheta = \| \stheta \| \cdot \u_r$
    \item $\| \stheta \|^2 > 99 \cdot \frac{r \gamma^2}{\s_r^2} \cdot \left( \frac{\s_1^2}{\s_r^2} \right)^2 $ (i.e., the dependence has changed to $\left( \frac{\s_1^2}{\s_r^2} \right)^2$ from $\left( \frac{\s_1^2}{\s_r^2} \right)$)
\end{enumerate}
\begin{proof}
This is because the Upper Bound for the $r$-step SD estimator still holds with the identical calculation (with $\stheta_r, \s_r$ instead of $\stheta_1, \s_1$ in eq (\ref{eq-SD-UB-proof-PREV})). And the Lower Bound for the ridge estimator works similarly (with just the Variance term sufficing, i.e. eq (\ref{eq-ERVar-Ridge})), but with a slightly different upper bound on $\slambda$ than eq (\ref{ineq-slambda-ridge}). Namely, from eq (\ref{eq-slambda-exp-inproof}), we will get
\begin{align}
    \slambda &= \frac{\gamma^2}{\| \stheta \|^2} \cdot \sum_{j=1}^r \left( \frac{\s_j}{\s_r} \right)^4 \cdot \left( \frac{\slambda + \s_r^2}{\slambda + \s_j^2} \right)^3 \tag{$\stheta_j = 0$, $j \leq r-1$ and $\stheta_r = \| \stheta \|^2$ from Cond \#1'} \\
    & \leq \frac{\gamma^2}{\| \stheta \|^2} \cdot \sum_{j=1}^r \left( \frac{\s_j}{\s_r} \right)^4 \tag{Since $\frac{\slambda + \s_r^2}{\slambda + \s_j^2} \leq 1$, as $\slambda > 0$, $\s_r \leq \s_j$} \\
    &\leq \frac{\gamma^2}{\| \stheta \|^2} \cdot r \left( \frac{\s_1}{\s_r} \right)^4 \label{ineq-slambda-ridge-rpeak}
\end{align}
Using (\ref{ineq-slambda-ridge-rpeak}) instead of (\ref{ineq-slambda-ridge}) completes the argument. 
\end{proof}



\section{Proof of Theorem \ref{thm-FD-nonsep-assump}} \label{sec-app-proof-thm-FD-nonsep-assump}
We point the reader to Appendix~\ref{sec-app-notation} for notations used throughout the proof.
\begin{proof}
Under the condition of $\forall j \in [r], \s_j = 1$, we need to analyze the $\ExcessRisk$ for both $k$-step SD and ridge. \textbf{Denote $Q := \sum_{j=1}^r \stheta_j$ for simplicity}.

\textbf{$k$-step SD}: Since assumption \ref{assump-FD-sj}.\ref{assump-FD-sj1} is violated, we cannot use Theorem \ref{thm-sdkPre-optPre} for the $k$-step SD. Instead, we will work with the quadratic expansion of $\ExcessRisk$ from Theorem \ref{thm-quadraticRisk}. We will rewrite the expressions from Appendix \ref{sec-app-proof-thm-quadraticRisk} for the quadratic coefs. Note that $\rho_j = \lambda / \s_j^2 = \lambda$ for all $j \in [r]$ becomes independent of $j$ in this case. And so $C_j(i) = 1 - \frac{1}{(1 + \lambda)^i}$ for all $j \in [r], i \in [k]$ also becomes independent of $j$. Let $C(i) := 1 - \frac{1}{(1 + \lambda)^i}$ denote the coefs $C_j(i)$, since they're now independent of $j$.
Let $\omega := [ C(1), C(2), \cdots, C(k) ] \in \R^k$. Then, we get
\begin{align*}
    \forall (i_1, i_2) \in [k] \times [k] \quad \quad \Mk_{i_1, i_2} &= \frac{1}{n} \sum_{j=1}^r \frac{\stheta_j + \gamma^2}{(1 + \lambda)^2} \cdot C(i_1) \cdot C(i_2) \\
    &= \frac{1}{n} \cdot \frac{Q + r \gamma^2}{(1 + \lambda)^2} \cdot C(i_1) \cdot C(i_2) \\
    \implies \Mk &= \frac{1}{n} \cdot \frac{Q + r \gamma^2}{(1 + \lambda)^2} \cdot \omega \omega^\top \quad \quad \in \R^{k \times k} \text{ becomes a rank-1 matrix}
    \intertext{Similarly, for $\mk$ we get}
    \forall i_1 \in [k] \quad \quad \mk_{i_1} &= \frac{1}{n} \sum_{j=1}^r \frac{\lambda \stheta_j - \gamma^2}{(1 + \lambda)^2} \cdot C(i_1) \\
    \implies \mk &= \frac{1}{n} \cdot \frac{\lambda Q - r\gamma^2}{(1 + \lambda)^2} \cdot \omega \quad \quad  \in \R^k
    \intertext{And similarly, for $\ck$ we get}
    \ck &= \frac{1}{n} \cdot \frac{\lambda^2 Q + r\gamma^2}{(1 + \lambda)^2} \quad \quad  \in \R
\end{align*}
Using the above expressions, we rewrite the overall $\ExcessRisk$ for any $k$-step SD ($k \geq 1$) as
\begin{equation*}
    \ExcessRisk \left( \htheta (\lambda, \bxik) \right) = \frac{1}{n} \left( \frac{Q + r \gamma^2}{(1 + \lambda)^2} \cdot \langle \omega, \xik \rangle^2 + 2 \cdot \frac{\lambda Q - r\gamma^2}{(1 + \lambda)^2} \cdot \langle \omega, \xik \rangle + \frac{\lambda^2 Q + r\gamma^2}{(1 + \lambda)^2} \right)
\end{equation*}
We're aiming for a lower bound on the above quantity, so we can first minimize with respect to $\bxik$, and then analyze the remaining as a function of $\lambda$. Note that this is a quadratic in the scalar $\langle \omega, \xik \rangle$. Since $q(x) := ax^2 + 2bx + c = c - \frac{b^2}{a} + a \left( x + \frac{b}{a} \right)^2 \geq c - \frac{b^2}{a}$, we have 
\begin{align*}
    \forall k \geq 1, \forall \bxik \in \R^k \quad \quad \ExcessRisk \left( \htheta (\lambda, \bxik) \right) &\geq \frac{1}{n} \cdot \frac{1}{(1 + \lambda)^2} \left(  \lambda^2 Q + r\gamma^2 - \frac{\left( \lambda Q - r \gamma^2 \right)^2}{Q + r\gamma^2} \right) \\
    &= \frac{1}{n} \cdot \frac{1}{(1 + \lambda)^2} \left(  \frac{ \lambda^2 Qr \gamma^2 + Qr \gamma^2 + 2 \lambda Qr \gamma^2 }{Q + r\gamma^2} \right) \\
    &= \frac{1}{n} \cdot \frac{Q r \gamma^2}{ Q + r \gamma^2} \\ 
    &= \frac{r \gamma^2}{n} \cdot \frac{1}{ 1 + \frac{r \gamma^2}{Q}} 
\end{align*}
Note this expression is independent of $\lambda$. Hence the above lower bound holds $\forall k \geq 1, \forall \bxik \in \R^k, \forall \lambda > 0$. This concludes the proof of eq (\ref{eq-thm-FD-nonsep-assump-1}).

\textbf{Ridge}: For ridge, we can simply borrow the expression of $\ck$ from above for its $\ExcessRisk$. That is
\begin{equation*}
    \ExcessRisk\left( \htheta (\lambda) \right) = \frac{1}{n} \cdot \frac{\lambda^2 Q + r\gamma^2}{(1 + \lambda)^2}
\end{equation*}
Let $\slambda$ denote its minimizer over $\lambda > 0$. Simple calculus gives $\slambda = \frac{r \gamma^2}{Q}$. Plugging this in, we get the same expression
\begin{equation*}
    \ExcessRisk\left( \htheta (\slambda) \right) = \frac{r \gamma^2}{n} \cdot \frac{1}{ 1 + \frac{r \gamma^2}{Q}} 
\end{equation*}
This completes the proof of ridge achieving the lower bound in eq (\ref{eq-thm-FD-nonsep-assump-1}).
\end{proof}

\section{Proof of Theorem \ref{thm-FD-nonsep-nonpeaky}} \label{sec-app-proof-thm-FD-nonsep-nonpeaky}
We point the reader to Appendix~\ref{sec-app-notation} for notations used throughout the proof.

\begin{proof}
Under the condition of $\forall j \in [r], \stheta_j = z > 0$, we need to analyze the $\ExcessRisk$ for both $k$-step SD and ridge.

\textbf{$k$-step SD}: Again from Theorem \ref{thm-sdkPre-optPre}, any $k$-step SD estimator's $\ExcessRisk$ is
\begin{align*}
    \forall k \geq 1, \forall \lambda > 0, \forall \bxik \in \R^k, \quad \ExcessRisk \left( \htheta(\lambda, \bxik) \right) &\geq \frac{\gamma^2}{n} \sum_{j=1}^r \frac{ \stheta_j }{\left(\stheta_j  + \frac{\gamma^2}{\s_j^2} \right)} \\
    &\geq \frac{\gamma^2}{n} \sum_{j=1}^r \frac{ 1 }{\left(1  + \frac{ \gamma^2}{z} \cdot \frac{1}{\s_j^2} \right)} \tag{Since $\forall j \in [r], \stheta_j = z$}
\end{align*}
This completes the proof of eq (\ref{eq-thm-FD-nonsep-nonpeaky-1}).

\textbf{Ridge}: Now for the ridge estimator, we will use Lemma \ref{lem-Ridge}.
From eq (\ref{eq-slambda-exp}), we get an exact expression for $\slambda > 0$ that minimizes the $\ExcessRisk$. Namely
\begin{align*}
    \slambda = \frac{\gamma^2}{z}
\end{align*}
Substituting this in eq (\ref{eq-ER-Ridge}), we get
\begin{align*}
    \ExcessRisk \left( \htheta(\slambda) \right) &= \frac{1}{n} \sum_{j=1}^r \frac{\left( (\slambda)^2 \stheta_j + \gamma^2 \s_j^2 \right) \cdot \s_j^2}{(\slambda + \s_j^2)^2} \\
    &= \frac{\gamma^2}{n} \sum_{j=1}^r \frac{\left( \slambda + \s_j^2 \right) \cdot \s_j^2}{(\slambda + \s_j^2)^2} \tag{Since $\slambda \stheta_j = \frac{\gamma^2}{z} z = \gamma^2$} \\
    &= \frac{\gamma^2}{n} \sum_{j=1}^r \frac{1}{(1 + \frac{\slambda}{\s_j^2} )} \\
    &= \frac{\gamma^2}{n} \sum_{j=1}^r \frac{1}{(1 + \frac{\gamma^2}{z} \cdot \frac{1}{\s_j^2} )} \tag{Substituting $\slambda = \frac{\gamma^2}{z}$}
\end{align*}
This completes the proof of ridge achieving the lower bound in eq (\ref{eq-thm-FD-nonsep-nonpeaky-1}).
\end{proof}


\section{Proof of Theorem \ref{thm-sdkPre-optPre}} \label{sec-app-proof-thm-sdkPre-optPre}
We point the reader to Appendix~\ref{sec-app-notation} for notations used throughout the proof.

\begin{proof}
The proof of (\ref{eq-thm-sdk-opt-ER-LowerBound}) is a simple instantiation of Lemma \ref{lem-optimalPre}. Since the SD estimator is a particular instance of the general $\htheta(\Pre)$, i.e.
\[
    \htheta \left( \lambda, \bxik \right) = \htheta \left( \Pre \leftarrow \Pre \left(\lambda, \bxik \right) \cdot \Om_{\lambda}^{-1} \right)
\]
Eq (\ref{eq-thm-sdk-opt-ER-LowerBound}) follows from the lower bound in Lemma \ref{lem-optimalPre}.

For proving the equality being achieved, we will work with the general $k$-step SD estimator, and show that: Under assumption \ref{assump-FD-sj}.\ref{assump-FD-sj1}, $k=r$ steps are sufficient to provide enough freedom to $\bxik$ so that the $k$-step SD estimator achieves the lowest possible $\ExcessRisk$.

Using lemma \ref{lem-optimalPre}, note that the condition $\Pre \left(\lambda, \bxik \right) \cdot \Om_{\lambda}^{-1} = \Pre^\star$ is sufficient to ensure that $\htheta \left( \lambda, \bxik \right) = \htheta(\Pre^\star)$, which would mean that the $k$-step SD estimator admits the lowest possible $\ExcessRisk$.
Since the eigenspaces for both sides of the equation are the same ($\Ud$), we only need to ensure that the eigenvalues match on both sides.
That is, we need the following condition (for indices $j \geq r+1$, there's no condition since lemma \ref{lem-optimalPre} tells us that any real value of $\ts_j$ suffices).
\begin{align} \label{eq-thm-sdk-opt-condition-APP}
    \forall j \in [r] \hspace{25pt} \left( 1 - \sum_{i=1}^k \xik_i \left\{ 1 - \left( \frac{\s_j^2}{\lambda + \s_j^2} \right)^i \right\} \right) \cdot \frac{1}{\lambda + \s_j^2} &= \ts_j^\star \\
    \iff \forall j \in [r] \hspace{25pt} \left( 1 - \sum_{i=1}^k \xik_i + \sum_{i=1}^k \xik_i \left( \frac{\s_j^2}{\lambda + \s_j^2} \right)^i  \right) \cdot \frac{\s_j^2}{\lambda + \s_j^2} &= \frac{\stheta_j}{\stheta_j + \frac{\gamma^2}{\s_j^2}}
\end{align}

Let $a_j(\lambda) := \frac{\s_j^2}{\lambda + \s_j^2}$. Since $\lambda > 0$, $a_j \in (0, 1), \forall j \in [r]$. The above condition can then be written succinctly in matrix form as
\begin{equation} \label{eq-thm-xik-consistency}
    \underbrace{\Ak}_{\in \R^{r \times k}} \cdot \underbrace{\xik}_{\in \R^k} = \underbrace{\alpha(\lambda)}_{\in \R^r}
\end{equation}
With the following describing the elements of $\Ak, \alpha(\lambda)$ as
\begin{align*}
    \left[ \alpha(\lambda) \right]_j &:= 1 - \left( \lambda + \s_j^2 \right) \ts_j^\star \quad \quad \quad j \in [r] \\
    \left[ \Ak \right]_{j, i} &:= 1 - \left( a_j(\lambda) \right)^i \quad \quad \quad \quad j \in [r], i \in [k]
\end{align*}
The notation $\Ak, \alpha(\lambda)$ explicitly denotes the dependence on $\lambda$.

Now, $\Ak$ being invertible would ensure that $\exists \xik \in \R^k$ that makes equation (\ref{eq-thm-xik-consistency}) hold true (i.e. the system of equations admits a solution). For that, we need (1) $k=r$ and (2) $\Ar$ being full-rank. The first condition is stated in the lemma.
In what follows, we will prove that $\exists \lambda > 0$ such that the second condition is satisfied, i.e. $\Ar$ being full-rank. Decompose $\Ar$ as
\[
    \Ar =   \underbrace{\begin{bmatrix}
                1 & 1 & \cdots & 1 \\
                1 & 1 & \cdots & 1 \\
                \vdots & \vdots & \ddots & \vdots \\
                1 & 1 & \cdots & 1
                \end{bmatrix}_{r \times r}}_{W}
                -
                \underbrace{\begin{bmatrix}
                a_1(\lambda) & a_1(\lambda)^2 & \cdots & a_1(\lambda)^r \\
                a_2(\lambda) & a_2(\lambda)^2 & \cdots & a_2(\lambda)^r \\
                \vdots & \vdots & \ddots & \vdots \\
                a_r(\lambda) & a_r(\lambda)^2 & \cdots & a_r(\lambda)^r
                \end{bmatrix}_{r \times r}}_{V(\lambda)}
\]
Where $W = \bone \bone^\top$ is the matrix of all ones, and $V(\lambda)$ is akin to the square Vandermonde matrix (only difference being that the standard definition of Vandermonde also has a column of ones).

Using the Matrix Determinant Lemma, we can write
\[
    \det ( \Ar ) = \left( 1 - \bone^\top V(\lambda)^{-1} \bone \right) \cdot \det ( -V(\lambda) )
\]
First note that, with the determinant expansion rule based on the first row
\[ 
    \det V(\lambda) = \det \begin{bmatrix}
                1 & \mathbf{0}^\top  \\
                \bone & V(\lambda)
                \end{bmatrix}_{(r+1) \times (r+1)}
\]
The matrix on the right is exactly the Vandermonde matrix with $a_0 = 0, a_1(\lambda), a_2(\lambda), \cdots a_r(\lambda)$. 
Using the formula for the det of a standard (with a row of ones) Vandermonde matrix, we get
\[
    \det V(\lambda) = \prod_{0 \leq i < j \leq r} \left( a_j (\lambda) - a_i(\lambda) \right) = \prod_{1 \leq i \leq r} a_i (\lambda) \cdot \prod_{1 \leq i < j \leq r} (a_j (\lambda)- a_i(\lambda))
\]
Since $a_j(\lambda) \in (0, 1)$ for all $j \in [r]$, and $a_j(\lambda)$'s are all distinct, we conclude that $\det V(\lambda) \neq 0$, i.e. $V(\lambda)$ is full-rank.
What remains to show is that the scalar $\left( 1 - \bone^\top V(\lambda)^{-1} \bone \right)$ is non-zero \emph{for some} $\lambda > 0$.
With the SVD of $V(\lambda) = C D E^{-1}$ where $C, E$ are orthonormal and $D$ is diagonal with positive entries, one can expand this term as:
\[
    \bone^\top V(\lambda)^{-1} \bone = (E^{-1} \bone)^\top D^{-1} (C^{-1} \bone) 
    \implies | \bone^\top V^{-1} \bone | \geq \frac{1}{d_{max}} \cdot | \langle C^{-1} \bone, E^{-1} \bone \rangle |
\]
Now observe that as $\lambda \rightarrow \infty$, $a_j(\lambda) \rightarrow 0$ for all $j \in [r]$, which means that $V(\lambda) \rightarrow \mathbf{0}_{r \times r}$, which means that (1) $d_{max} \rightarrow 0^{+}$, and (2) $C \leftrightarrow E$ (since $V(\lambda)$ becomes closer to a symmetric matrix, allowing its left and right singular matrices to approach equality to each other).
That is, $\frac{1}{d_{max}} \rightarrow \infty$ and $\langle C^{-1} \bone, E^{-1} \bone \rangle \rightarrow \| \bone \|^2 = r$.
This would ensure that $| \bone^\top V(\lambda)^{-1} \bone | > 1$, meaning that the scalar $(1 - \bone^\top V(\lambda)^{-1} \bone)$ would be non-zero. 
Hence $\exists \lambda > 0$, such that $\Ar$ is full-rank.
\end{proof}


\subsection{Proof of Lemma \ref{lem-optimalPre}} \label{sec-app-proof-lem-optimalPre}

\begin{proof}
The estimator $\htheta(\Pre)$ expands as
\begin{equation*}
    \htheta(\Pre) = \Ud \tS \Sd^2 \Ud^\top \stheta + \Ud \tS \Sd \Vd^\top \Noise \; \;.
\end{equation*}
Expanding the Excess Risk shows
\begin{align*}
    \ExcessRisk \left( \htheta(\Pre) \right) &= \underbrace{\sum_{j=1}^d \left( \left( \ts_j \cdot \s_j^2 - 1 \right)^2 \cdot \stheta_j \cdot w_{j} \right) }_{Bias} \text{ } + \text{ } \underbrace{\gamma^2 \sum_{j=1}^d \left( \ts_j^2 \cdot \s_j^2  \cdot   w_{j}  \right) }_{Variance} \; \;,
\end{align*}
where $w_{j} = \frac{\s_j^2}{n}$ for all $j \in [d]$. This is a simple quadratic expression in $\ts$. Completing the squares gives the desired optimal values of $\ts_j^\star, j \in [d]$. 
\end{proof}
\section{Quadratic $\ExcessRisk$: detailed version of Theorem \ref{thm-quadraticRisk} and a proof} \label{sec-app-proof-thm-quadraticRisk}
We point the reader to Appendix~\ref{sec-app-notation} for notations used throughout the proof.

\begin{theorem}[Formal version of Theorem~\ref{thm-quadraticRisk}] \label{thm-quadraticRisk-formal}
    The Excess Risk is quadratic in $\xik \in \R^k$. Namely
    \begin{equation} \label{quadratic-ER-App}
        \ExcessRisk \left( \htheta(\lambda, \bxik)  \right) = \left(\xik\right)^\top \underbrace{\Mk}_{\in \R^{k \times k}} \left(\xik \right) + 2  \left( \xik \right)^\top \underbrace{\mk}_{\in \R^k} + \text{ } \ck
    \end{equation}
    where the below holds, for indices $(i_1, i_2)$ in $[k] \times [k]$,
    \begin{align*}
        \Mk_{i_1, i_2} &= \underbrace{\frac{\lambda}{n} \sum_{j=1}^r \frac{\stheta_j}{\rho_j (1 + \rho_j)^2} \cdot C_j(i_1) \cdot C_j(i_2)}_{\Bk_{i_1, i_2}} + \underbrace{\frac{\gamma^2}{n} \sum_{j=1}^r \frac{1}{(1 + \rho_j)^2} \cdot C_j(i_1) \cdot C_j(i_2)}_{\Vk_{i_1, i_2}} \\
        &= \frac{1}{n} \sum_{j=1}^r \frac{\left(  \frac{\lambda \stheta_j}{\rho_j} + \gamma^2 \right) }{(1 + \rho_j)^2} \cdot C_j(i_1) \cdot C_j(i_2) \; \;, \\
        \mk_{i_1} &= \underbrace{\frac{\lambda}{n} \sum_{j=1}^r \frac{\stheta_j}{(1 + \rho_j)^2} \cdot C_j(i_1)}_{\bk_{i_1}} + \underbrace{\frac{\gamma^2}{n} \sum_{j=1}^r \frac{\left(-1 \right)}{(1 + \rho_j)^2} \cdot C_j(i_1)}_{\vk_{i_1}} \\
        &= \frac{1}{n} \sum_{j=1}^r \frac{\left(\lambda \stheta_j - \gamma^2 \right) }{(1 + \rho_j)^2} \cdot C_j(i_1) \; \;, \\
        \ck &=  \underbrace{ \frac{\lambda}{n}  \sum_{j=1}^r \frac{\stheta_j \rho_j}{(1 + \rho_j)^2} }_{\text{From Bias}} + \underbrace{ \frac{\gamma^2}{n}  \sum_{j=1}^r \frac{1}{(1 + \rho_j)^2} }_{\text{From Variance}} \\
        &= \frac{1}{n}  \sum_{j=1}^r \frac{ \left( \lambda \stheta_j \rho_j + \gamma^2 \right) }{(1 + \rho_j)^2} \; \;.
    \end{align*}
    The $B, b$ and $V, v$ notation is used to indicate the (squared) Bias and Variance terms respectively.
    Here $\rho_j, j \in [r]$ is used to simplify the notation, and is defined as $\rho_j :=  \frac{\lambda}{\s_j^2}, j \in [r]$. \\
    And the coefs $C_j$, $j \in [r]$ with $i \in [k]$ have the form 
    \begin{equation} \label{quadratic-eq-Cj}
        C_j(i) :=  1 - \frac{1}{(1 + \rho_j)^i} \hspace{25pt} j \in [r], i \in [k] \; \;.
    \end{equation}
\end{theorem}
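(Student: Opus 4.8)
The plan is to start from the closed form of $\htheta(\lambda,\bxik)$ in the left‑singular basis $\{\u_j\}_{j=1}^d$ of $\X$ supplied by Lemma~\ref{lem-est-sdk-expansion}, and to compute the $\hSigma_n$‑weighted excess risk of Eq.~\eqref{eq-ER-fixed} one coordinate at a time. Set $a_j:=\s_j^2/(\lambda+\s_j^2)=1/(1+\rho_j)$ with $\rho_j=\lambda/\s_j^2$, so that the scalar multiplying $\u_j$ in Lemma~\ref{lem-est-sdk-expansion}, namely $g_j(\xik):=1-\sum_{i=1}^k\xik_i\bigl(1-a_j^{\,i}\bigr)=1-\sum_{i=1}^k\xik_i\,C_j(i)$, is \emph{affine} in $\xik$, where $C_j(i)=1-a_j^{\,i}=1-(1+\rho_j)^{-i}$ as in Eq.~\eqref{quadratic-eq-Cj}. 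Subtracting $\stheta=\sum_{j=1}^d\langle\stheta,\u_j\rangle\u_j$, the coefficient of $\u_j$ in $\htheta(\lambda,\bxik)-\stheta$ is $\bigl(g_j(\xik)a_j-1\bigr)\langle\stheta,\u_j\rangle+g_j(\xik)\tfrac{\s_j}{\lambda+\s_j^2}\langle\Noise,\v_j\rangle$ for $j\le r$ and $-\langle\stheta,\u_j\rangle$ for $j>r$. Since $\hSigma_n=\tfrac1n\sum_{j=1}^r\s_j^2\u_j\u_j^\top$, orthonormality of $\{\u_j\}$ gives $\|\htheta(\lambda,\bxik)-\stheta\|_{\hSigma_n}^2=\tfrac1n\sum_{j=1}^r\s_j^2\cdot(\text{that coefficient})^2$, and the $j>r$ terms drop out because $\s_j=0$ there.

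Next I would take $\E_\Noise$. Using $\E[\langle\Noise,\v_j\rangle]=0$ and $\E[\langle\Noise,\v_j\rangle^2]=\gamma^2$ (as $\v_j$ is a unit vector and $\E[\Noise\Noise^\top]=\gamma^2\In$), the cross term vanishes, so $\E_\Noise[(\text{coefficient})^2]$ splits into a bias term $\bigl(g_j(\xik)a_j-1\bigr)^2\langle\stheta,\u_j\rangle^2$ and a variance term $\gamma^2 g_j(\xik)^2\,\tfrac{\s_j^2}{(\lambda+\s_j^2)^2}=\gamma^2 g_j(\xik)^2 a_j^2$ (using $\s_j^2\cdot\tfrac{\s_j^2}{(\lambda+\s_j^2)^2}=a_j^2$). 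Because $g_j(\xik)$ is affine in $\xik$, both are quadratic in $\xik$, hence so is $\ExcessRisk(\htheta(\lambda,\bxik))$; this already yields the form in Eq.~\eqref{quadratic-ER-App}.

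To identify the coefficients I would expand the two squares. Writing $g_j(\xik)a_j-1=-(1-a_j)-a_j\sum_i\xik_i C_j(i)$ and $g_j(\xik)=1-\sum_i\xik_i C_j(i)$ and squaring, the quadratic, linear, and constant parts are proportional to $C_j(i_1)C_j(i_2)$, $C_j(i_1)$, and $1$, with $j$‑dependent prefactors; summing the bias and variance pieces over $j$ and inserting $\s_j^2=\lambda/\rho_j$, $a_j=1/(1+\rho_j)$ turns these prefactors into exactly those claimed. Concretely, $\s_j^2 a_j^2=\lambda/(\rho_j(1+\rho_j)^2)$ produces $\Bk$, $\s_j^2(1-a_j)a_j=\lambda/(1+\rho_j)^2$ produces $\bk$, $\s_j^2(1-a_j)^2=\lambda\rho_j/(1+\rho_j)^2$ produces the bias part of $\ck$, and $a_j^2=1/(1+\rho_j)^2$ produces $\Vk$, $\vk$, and the variance part of $\ck$; collecting terms gives $\Mk=\Bk+\Vk$ and $\mk=\bk+\vk$.

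There is no genuine obstacle: the argument is bookkeeping on top of Lemma~\ref{lem-est-sdk-expansion} and the second‑moment structure of $\Noise$. The two things to be careful about are that the quadratic form is in the reparametrized variable $\xik$ rather than the original $\bxik$ ($\htheta(\lambda,\bxik)$ is linear in $\xik$ but not in $\bxik$, cf.\ Appendix~\ref{sec-app-xis-objective}), and keeping the elementary identities among $\s_j,a_j,\rho_j,\lambda$ straight when matching prefactors, in particular $C_j(i)=1-(\s_j^2/(\lambda+\s_j^2))^i=1-(1+\rho_j)^{-i}$.
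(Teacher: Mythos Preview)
Your proof is correct and follows essentially the same approach as the paper: both start from Lemma~\ref{lem-est-sdk-expansion}, expand $\stheta$ in the $\{\u_j\}$ basis, take the $\hSigma_n$-weighted expectation to separate into bias and variance, and then collect the quadratic, linear, and constant coefficients in $\xik$. Your sketch is in fact more detailed than the paper's proof, which just writes down the per-coordinate bias and variance squares and says ``collecting terms gives the desired expressions''; your explicit identification of the prefactors via $\s_j^2 a_j^2=\lambda/(\rho_j(1+\rho_j)^2)$, $\s_j^2(1-a_j)a_j=\lambda/(1+\rho_j)^2$, etc., is exactly that bookkeeping carried out.
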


\begin{proof}
We first use the expansion from Lemma \ref{lem-est-sdk-expansion}.
Secondly, expand $\stheta = \sum_{j=1}^d \langle \stheta, \u_j \rangle \u_j$.
Using these, we can expand Excess Risk as
\begin{align*}
    \ExcessRisk\left( \htheta(\lambda, \bxik) \right) &= \E_{\Noise} \left[ \left\| \htheta(\lambda, \bxik) - \stheta \right\|_{\hSigma_n}^2 \right] \\
    &= \sum_{j=1}^d \underbrace{\frac{\s_j^2}{n}}_{\hSigma_n \text{ weighing}} \cdot \stheta_j \cdot \left( \frac{\s_j^2}{\lambda + \s_j^2} \right)^2 \cdot \left( \frac{\lambda}{\s_j^2} + \sum_{i=1}^k \xik_i \cdot \left\{ 1 - \left( \frac{\s_j^2}{\lambda + \s_j^2} \right)^i \right\}  \right)^2 \\
    & \hspace{10pt} + \sum_{j=1}^d \underbrace{\frac{\s_j^2}{n}}_{\hSigma_n \text{ weighing}} \cdot \gamma^2 \cdot \frac{\s_j^2}{(\lambda + \s_j^2)^2} \cdot \left( -1 + \sum_{i=1}^k \xik_i \cdot \left\{ 1 - \left( \frac{\s_j^2}{\lambda + \s_j^2} \right)^i \right\}  \right)^2
\end{align*}
Writing down the above expansion and collecting the corresponding quadratic, linear and constant terms' coefficients in the $\xik$, give the desired expressions. 
\end{proof}

\subsection{$\ExcessRisk$ for ridge} \label{sec-app-lem-Ridge}

Since we will compare the ridge estimator's $\ExcessRisk$ to the $k$-step SD estimator, we state the $\ExcessRisk$ expression for the ridge estimator (eq (\ref{est-ridge-2})) formally here.
\begin{lemma} \label{lem-Ridge}
    The ridge estimator $\htheta(\lambda)$ satisfies
    \begin{equation}
    \forall \lambda > 0, \quad \ExcessRisk\left( \htheta(\lambda) \right) = \frac{1}{n} \sum_{j=1}^r \frac{\left( \lambda^2 \stheta_j + \gamma^2 \s_j^2 \right) \cdot \s_j^2}{(\lambda + \s_j^2)^2} \; \;. \label{eq-ER-Ridge}
    \end{equation}
    And the optimal penalty $\slambda > 0$ that minimizes this $\ExcessRisk$ satisfies
    \begin{equation}
        \slambda = \gamma^2 \cdot \frac{\sum_{j=1}^r \frac{\s_j^4}{\left(\slambda + \s_j^2\right)^3}}{\sum_{j=1}^r \frac{\stheta_j \s_j^4}{\left(\slambda + \s_j^2\right)^3}} \; \;. \label{eq-slambda-exp} 
    \end{equation}
\end{lemma}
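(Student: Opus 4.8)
The plan is to derive the closed form in Eq.~\eqref{eq-ER-Ridge} by a standard bias--variance decomposition diagonalized against the SVD of $\X$, and then obtain Eq.~\eqref{eq-slambda-exp} by differentiating that closed form in $\lambda$.

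For the first claim, I would substitute $\Y = \Xt\stheta + \Noise$ into $\htheta(\lambda) = \Om_\lambda^{-1}\X\Y$ and write $\htheta(\lambda) - \stheta = (\Om_\lambda^{-1}\X\Xt - \Id)\,\stheta + \Om_\lambda^{-1}\X\Noise$. The cross term in $\|\cdot\|_{\hSigma_n}^2$ has zero expectation (the first term is deterministic and $\E[\Noise]=0$), so $\ExcessRisk(\htheta(\lambda)) = \|(\Om_\lambda^{-1}\X\Xt - \Id)\stheta\|_{\hSigma_n}^2 + \E_\Noise[\|\Om_\lambda^{-1}\X\Noise\|_{\hSigma_n}^2]$. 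Writing $\X = \Ud\Sd\Vd^\top$ gives $\X\Xt = \Ud\Sd^2\Ud^\top$, $\Om_\lambda = \Ud(\Sd^2 + \lambda\Id)\Ud^\top$, $\hSigma_n = \tfrac1n\Ud\Sd^2\Ud^\top$, and $\Vd^\top\Vd = \Id$, so all matrices are simultaneously diagonal in the $\Ud$-basis. The matrix $\Om_\lambda^{-1}\X\Xt - \Id$ acts on $\u_j$ by $-\lambda/(\lambda+\s_j^2)$, making the bias term $\tfrac1n\sum_j \tfrac{\lambda^2\s_j^2}{(\lambda+\s_j^2)^2}\,\stheta_j$; and $\Om_\lambda^{-1}\X\Noise = \Ud(\Sd^2+\lambda\Id)^{-1}\Sd\Vd^\top\Noise$ together with $\E[\Noise\Noise^\top] = \gamma^2\In$ gives the variance term $\tfrac{\gamma^2}{n}\,\mathrm{tr}\big(\Sd^4(\Sd^2+\lambda\Id)^{-2}\big) = \tfrac{\gamma^2}{n}\sum_j \tfrac{\s_j^4}{(\lambda+\s_j^2)^2}$. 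Adding the two and dropping the indices $j>r$ (for which $\s_j=0$, so every affected summand vanishes) yields exactly Eq.~\eqref{eq-ER-Ridge}. As a sanity check, this is also the special case $\Pre = \Om_\lambda^{-1} = \Ud(\Sd^2+\lambda\Id)^{-1}\Ud^\top$, i.e. $\ts_j = 1/(\lambda+\s_j^2)$, of the bias--variance formula in the proof of Lemma~\ref{lem-optimalPre}.

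For the second claim I would just differentiate. Each summand $g_j(\lambda) := (\lambda^2\stheta_j + \gamma^2\s_j^2)/(\lambda+\s_j^2)^2$ has $g_j'(\lambda) = \tfrac{2\lambda\stheta_j(\lambda+\s_j^2) - 2(\lambda^2\stheta_j+\gamma^2\s_j^2)}{(\lambda+\s_j^2)^3} = \tfrac{2\s_j^2(\lambda\stheta_j-\gamma^2)}{(\lambda+\s_j^2)^3}$, where the $\lambda^2\stheta_j$ contributions cancel. Hence $\tfrac{d}{d\lambda}\ExcessRisk(\htheta(\lambda)) = \tfrac{2}{n}\sum_j \tfrac{\s_j^4(\lambda\stheta_j-\gamma^2)}{(\lambda+\s_j^2)^3}$, and any minimizer $\slambda>0$, being an interior point of $(0,\infty)$, must make this vanish; isolating the scalar $\lambda$ from $\lambda\sum_j \s_j^4\stheta_j/(\slambda+\s_j^2)^3 = \gamma^2\sum_j \s_j^4/(\slambda+\s_j^2)^3$ gives Eq.~\eqref{eq-slambda-exp}.

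There is no genuine obstacle here — the lemma is routine algebra. The only points requiring care are: keeping the SVD bookkeeping straight (in particular using $\Vd^\top\Vd = \Id$ rather than $\Vd\Vd^\top = \In$); noting that zero singular values make every affected summand vanish, so the sums run over $[r]$; and observing that Eq.~\eqref{eq-slambda-exp} is only an implicit fixed-point characterization of $\slambda$ (not a closed form), which is nevertheless all that is needed downstream, e.g. for the upper bound on $\slambda$ used in the proof of Theorem~\ref{thm-FD-separation-App}.
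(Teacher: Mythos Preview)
Your proposal is correct and matches the paper's argument in substance. The only cosmetic difference is that for Eq.~\eqref{eq-ER-Ridge} the paper invokes Theorem~\ref{thm-quadraticRisk} at $k=0$ (reading off the constant term $\ck$) rather than redoing the bias--variance decomposition from scratch, but that theorem is itself proved by exactly the SVD-diagonalized bias--variance expansion you carry out; the derivation of Eq.~\eqref{eq-slambda-exp} by differentiating and setting to zero is identical.
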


\begin{proof}
    Eq (\ref{eq-ER-Ridge}) is a simple instantiation of Theorem \ref{thm-quadraticRisk} in the vacuous case of $k=0$. Specifically, the quantity $\ck$ captures exactly what we need. Borrowing its expression from the detailed theorem statement (Appendix \ref{sec-app-proof-thm-quadraticRisk}) gives us eq (\ref{eq-ER-Ridge}).
    Now, taking the derivative of the expression in eq (\ref{eq-ER-Ridge}) and setting it to zero, we get the stated expression for $\slambda > 0$. 
\end{proof}


\section{Discussion on synthetic experiments} \label{sec-app-synth-details}

This section follows up Section~\ref{sec-expts-synth} with more details and examples about the synthetic problem. Figure~\ref{fig-synth-appendix} shows four more settings, with $\gamma \in \{0.125, 0.25 \}$ and $\stheta \in \{ \u_1, \nicefrac{1}{\sqrt{2}}(\u_1 + \u_2)\}$.

Figure~\ref{fig-synth_1_1} validates that repeated steps of SD do provide a reduction in the excess risk, since the \emph{lowest point} of the curve for each $k$ is reducing as $k$ increases.
Observe that at $k=r=4$ steps of SD, the curves in Figure~\ref{fig-synth-appendix} become flat. This is because we stated Theorem~\ref{thm-sdkPre-optPre} with a "$\exists \lambda > 0$" such that $r$-step SD can achieve the lower bound, but in practice it can happen "$\forall \lambda > 0$".

\begin{figure}[h]
\begin{subfigure}{.5\textwidth}
  \includegraphics[width=\textwidth]{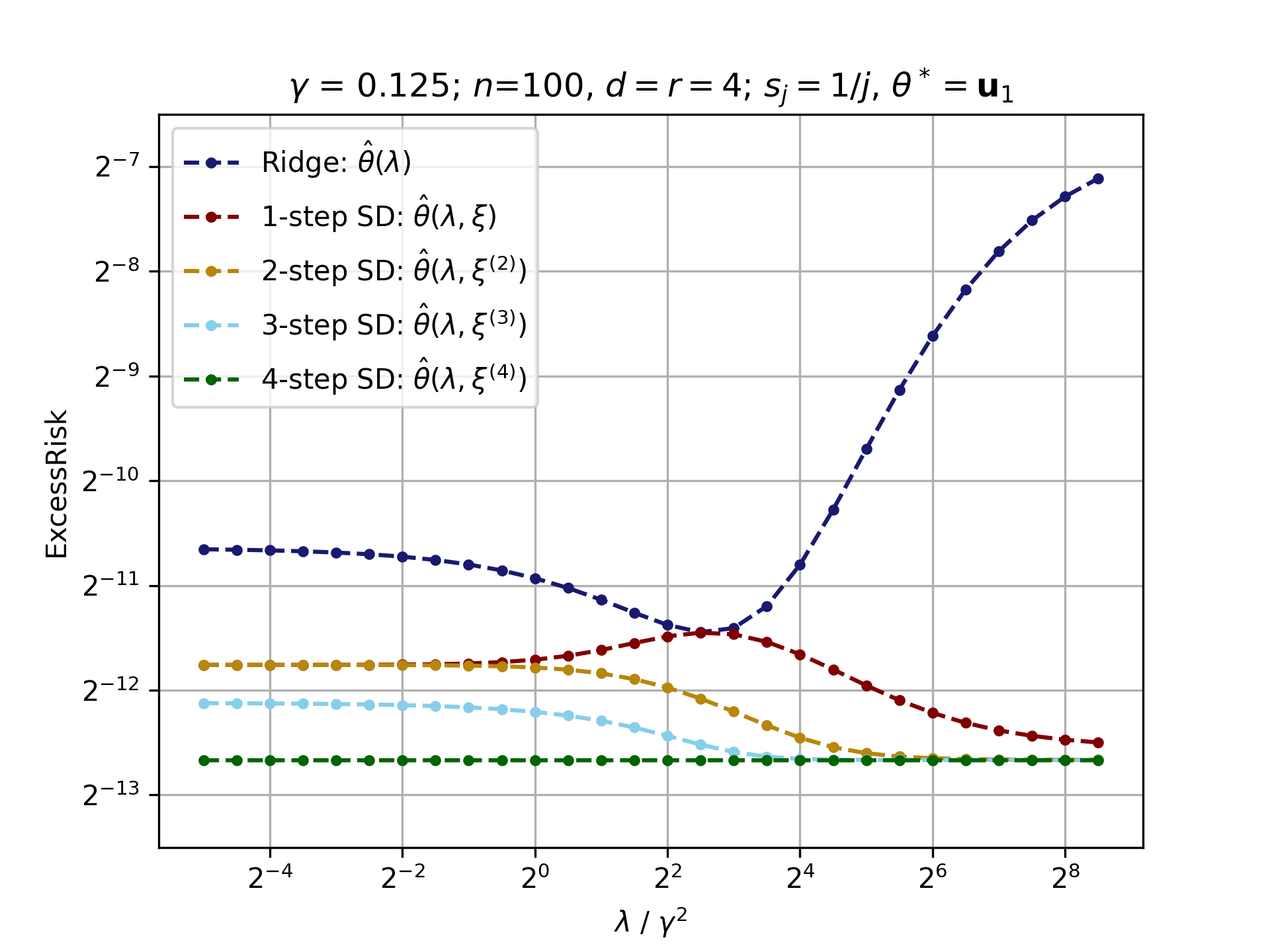}
  \caption{$\gamma=0.125, \stheta=\u_1$}
  \label{fig-synth_1_1}
\end{subfigure}
\begin{subfigure}{.5\textwidth}
  \includegraphics[width=\linewidth]{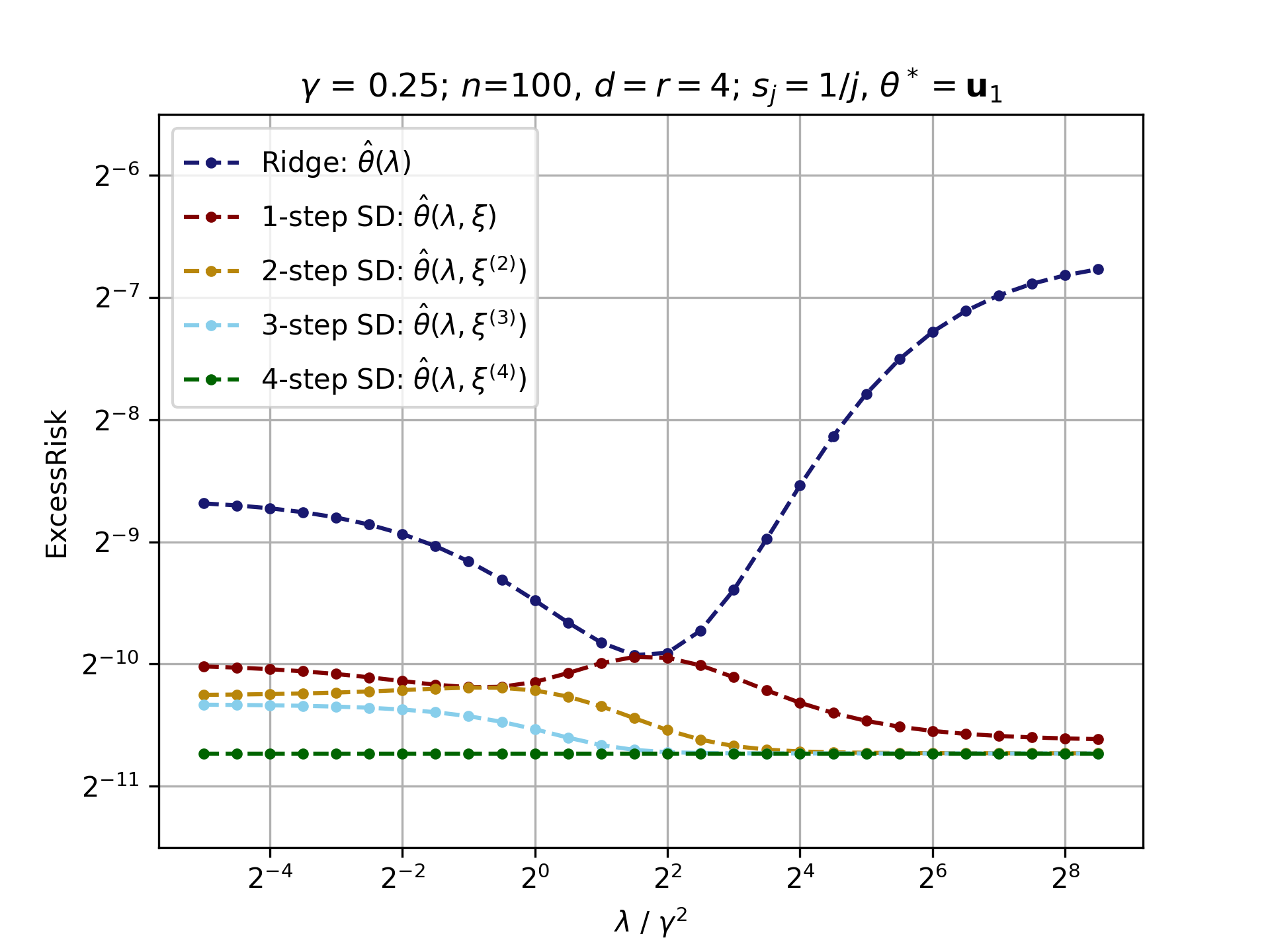}
  \caption{$\gamma=0.25, \stheta=\u_1$}
  \label{fig-synth_1_2}
\end{subfigure}
\medskip
\begin{subfigure}{.5\textwidth}
  \includegraphics[width=\linewidth]{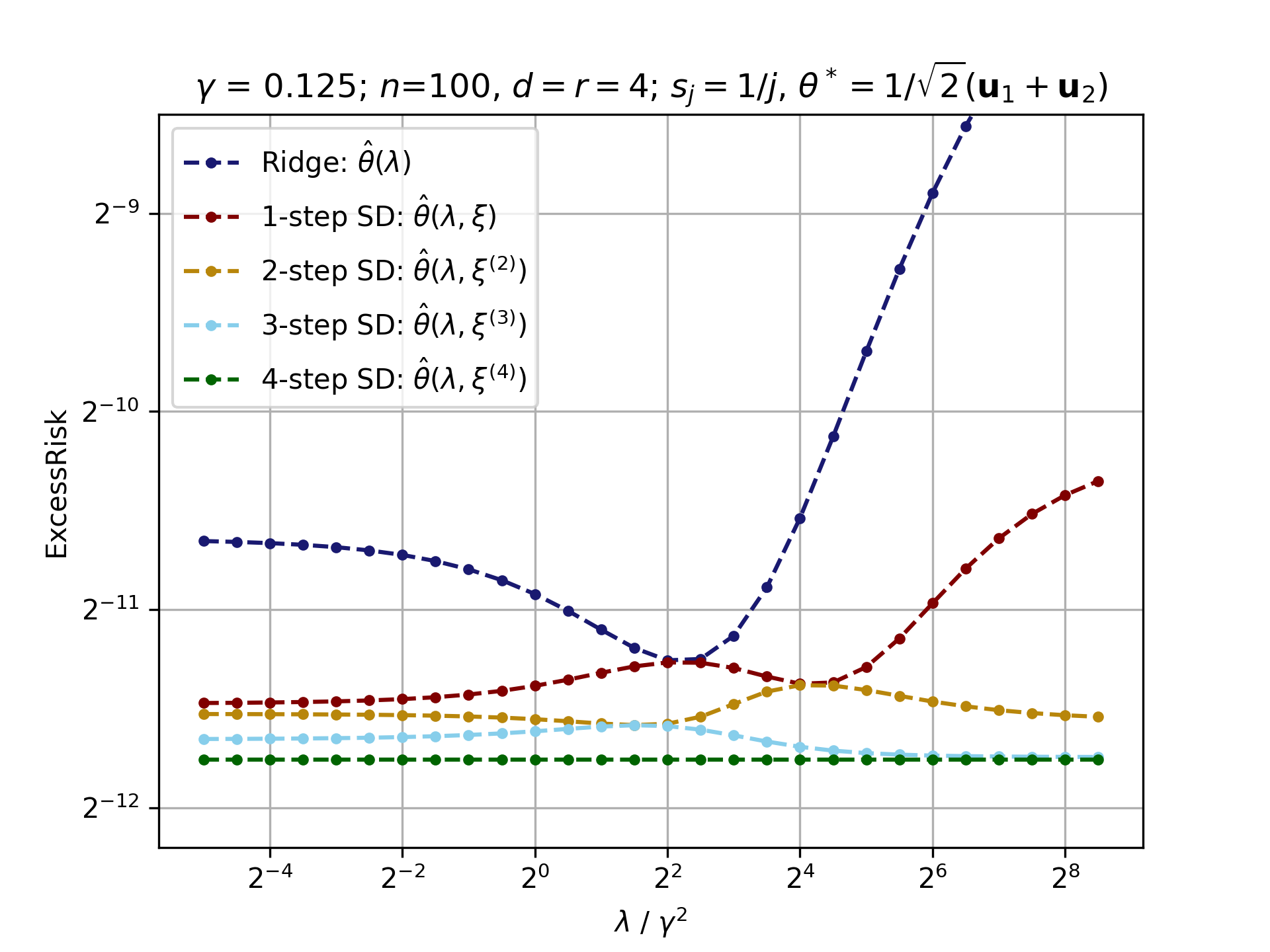}
  \caption{$\gamma=0.125, \stheta=\nicefrac{1}{\sqrt{2}}(\u_1 + \u_2)$}
  \label{fig-synth_2_1}
\end{subfigure}
\begin{subfigure}{.5\textwidth}
  \includegraphics[width=\linewidth]{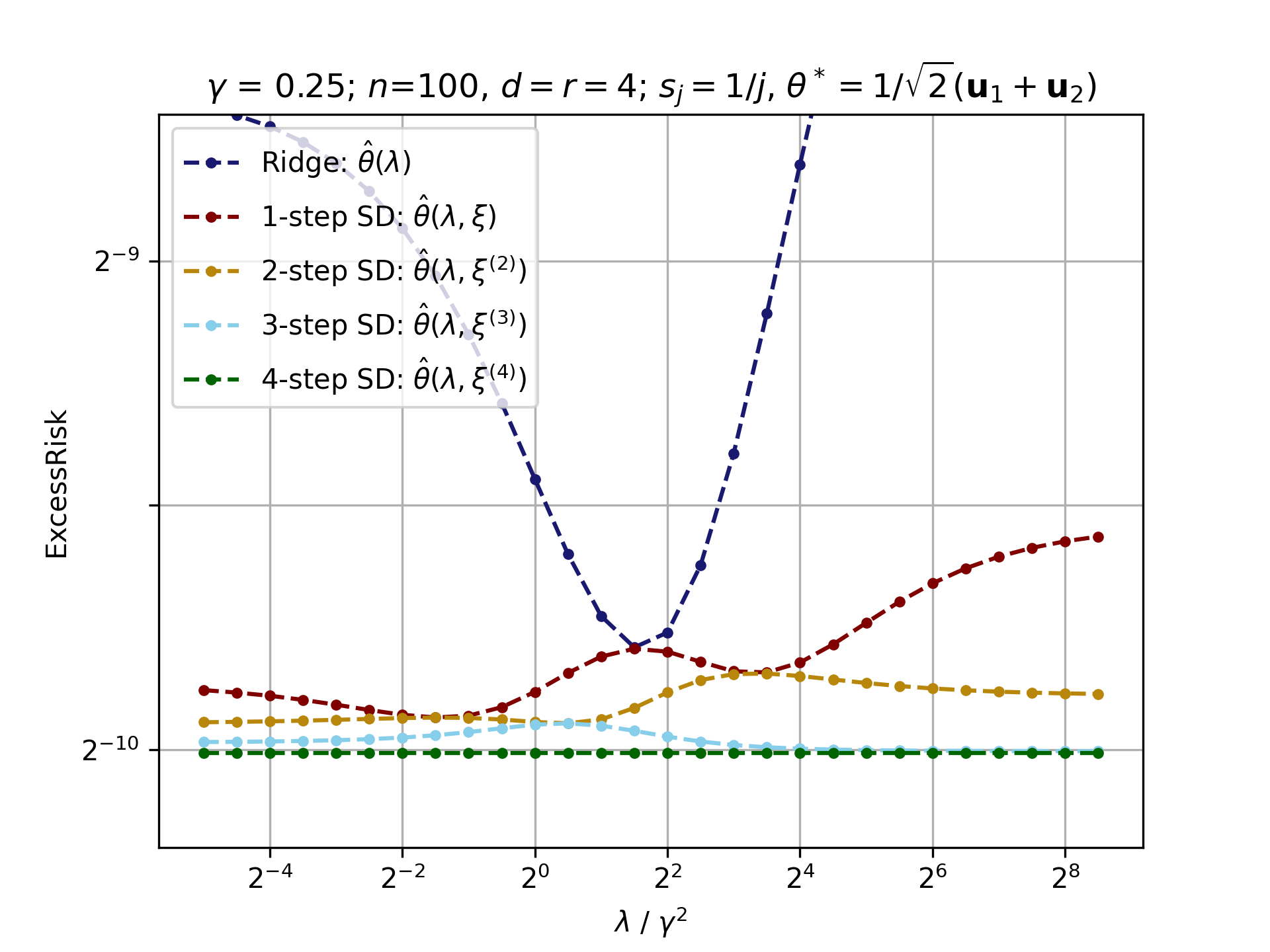}
  \caption{$\gamma=0.25, \stheta=\nicefrac{1}{\sqrt{2}}(\u_1 + \u_2)$}
  \label{fig-synth_2_2}
\end{subfigure}
\caption{Plot of excess risk of $k$-step SD with optimal $\bxik$ for each $\lambda$, for $k \in \{0, 1, 2, 3, 4 \}$ on a synthetic problem (Section~\ref{sec-expts-synth}).}
\label{fig-synth-appendix}
\end{figure}

\section{Discussion on choosing $\xi$ parameters} \label{sec-app-choosing-xi}

For the $k$-step SD estimator (eq (\ref{est-sdk})), for any chosen $\lambda$, Theorem \ref{thm-quadraticRisk} tells us that the $\ExcessRisk$ is quadratic in $\xik \in \R^k$.
To estimate the coefficients of this quadratic from certain chosen evaluations of $\bxik$, we need a total of $\nicefrac{k(k+3)}{2} + 1$ evaluations. This is because the number of unknown coefficients are (i) $k$ for square terms, (ii) $\nicefrac{k(k-1)}{2}$ for cross-square terms, (iii) $k$ for linear terms, and (iv) $1$ for the constant term. We now illustrate this in detail for $k=2$.

\subsection{Illustration for $k=2$} \label{sec-app-choosing-xi-ill2}

\begin{figure}[h]
\centering
\begin{tikzpicture}[->, >=stealth', auto, semithick, node distance=2cm]
\tikzstyle{state}=[draw, thick, fill=buff, minimum size=8mm]

\node[state] (S0) {$S_0$};
\node[state, fill=burlywood] (S1) [right of=S0] {$S_1$};
\node[state, fill=bronze] (S2) [right of=S1] {$S_2$};
\path (S0) edge node {$\bxi_1$} (S1)
      (S1) edge node {$\bxi_2$} (S2);
\end{tikzpicture}
\caption{Illustrating $2$-step SD.}
\label{fig-2step-choosing-xi}
\end{figure}
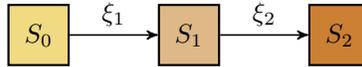

As explained above, the $\ExcessRisk$ for a chosen $\lambda$ has the following form
\begin{align*}
    \ExcessRisk \bigl( \htheta(\lambda, \underbrace{\xi}_{\in \R^2}) \bigr) &= A \barxi_1^2 + B \barxi_2^2 + 2C \barxi_1 \barxi_2 + 2D \barxi_1 + 2E \barxi_2 + F 
\end{align*}
where the reparametrization is $\barxi_1 \leftarrow \bxi_2 (1 - \bxi_1)$, and $\barxi_2 \leftarrow \bxi_2 \bxi_1$ (refer to Appendix \ref{sec-app-xis-reparam} for details on this reparametrization).
Let $\textsc{Eval}(\bxi_2, \bxi_1)$ denote the result of measuring this estimator's Risk on the validation dataset. We get the below system of equations:

\hspace{5pt} 

\begin{align*}
    F &= \textsc{Eval} (\bxi_2 = 0, \bxi_1 = 0) \\ 
    A + 2D + F &= \textsc{Eval} (\bxi_2 = 1, \bxi_1 = 0) \\
    A - 2D + F &= \textsc{Eval} (\bxi_2 = -1, \bxi_1 = 0) \\
    B + 2E + F &= \textsc{Eval} (\bxi_2 = 1, \bxi_1 = 1) \\
    B - 2E + F &= \textsc{Eval} (\bxi_2 = -1, \bxi_1 = 1) \\
    \frac{A + B}{4} + \frac{C}{2} + D + E + F &= \textsc{Eval} (\bxi_2 = 1, \bxi_1 = 0.5)
\end{align*}
The above $6$ $\textsc{Eval}$ operations help us identify all the $6$ unknown coefficients. Given $AB - C^2 > 0$ (\cite{jankovicQUAD}), we will have the below $\barxi_1^\star, \barxi_2^\star$ minimizing the test loss (i.e. giving the optimal $2$-step SD coefs for the chosen $\lambda$). And using them, we calculate the actual $\bxi_1^\star, \bxi_2^\star$ by doing the inverse mapping of the reparametrization.
\begin{align*}
    \barxi_1^\star = \frac{CE - DB}{AB - C^2}, \barxi_2^\star = \frac{CD - AE}{AB - C^2} \\ 
    \bxi_1^\star = \frac{\barxi_2^\star}{(\barxi_1^\star + \barxi_2^\star)}, \bxi_2^\star = \barxi_1^\star + \barxi_2^\star
\end{align*}

\section{Details on regression experiments} \label{sec-app-regr}

We note that all experiments run on a single CPU within 60 seconds (wall-clock time). We utilize sklearn's implementation of the $\textsc{Ridge}$.

{\bf Methodology}. We now explain our methodology in detail, which is followed for all datasets:
\begin{enumerate}
    \item First, split the original dataset into three parts for a Train-Validation-Test split. We divide all datasets in a $30-30-40$ split. Note that 30\% of the data suffices for training since we work with small datasets which have $d$ on the order of ten (and total number of samples $n$ on the order of thousands). For datasets that have a temporal  notion (e.g. date/timestamps), we do the three-way split sequentially.
    \item Perform two standard transformations on all three splits of the data: (i) Remove records with missing values, and (ii) coordinate-wise whitening for all $\X$ features and the $\Y$ target, i.e., subtracting the mean (computed on the train set) and dividing by the standard deviation (also computed on the train set).
    \item Select a grid of $\lambda$ values (and ensure that it is large enough so that the optimal $\lambda$ lies in it). The grid has a factor of $\sqrt{10}$ difference between consecutive values (e.g., $\{1, \sqrt{10}, 10, \cdots, 10^4\}$). 
    Then, for all $\lambda$ in the grid, compute:
    \begin{itemize}
        \item The ridge estimator $\htheta(\lambda)$ using the train set.
        \item The $1$-step SD estimator $\htheta(\lambda, \xi^\star)$ using the train set, where $\bxi^\star$ for each $\lambda$ is found using the validation set by the strategy described in Section \ref{sec-expts-hparams}.
        \item The $2$-step SD estimator $\htheta(\lambda, (\bxi^{(2)})^\star)$ using the train set, where $(\bxi^{(2)})^\star$ for each $\lambda$ is found using the validation set by the strategy described in Appendix \ref{sec-app-choosing-xi-ill2}.
    \end{itemize}
    Let $\slambda_0$ denote the optimal penalty for ridge that minimizes the MSE on the validation set. Similarly, let $(\slambda_1, \bxi^\star)$ and $\bigl(\slambda_2, (\bxi_1^\star, \bxi_2^\star)\bigr)$ denote the optimal parameters for $1$-step, $2$-step SD, again chosen via the validation set.
    \item Evaluate the MSE on the test set (unseen as yet) for all three computed estimators: $\htheta (\slambda_0)$, $\htheta (\slambda_1, \xi^\star)$, $\htheta \bigl(\slambda_2, (\bxi_1^\star, \bxi_2^\star)\bigr)$.
\end{enumerate}

\subsection{Description of the datasets used} \label{sec-app-datasets}
{\bf UCI Air Quality}.
The UCI Air Quality dataset \cite{misc_air_quality_360} contains records of hourly averaged readings from 5 metal oxide chemical sensors (which serve as covariates), along with ground-truth hourly averaged concentrations of the pollutants from a co-located reference certified analyzer. The recordings are from an Italian city over a one year period in 2004-2005.
After removing records with missing entries, this dataset has $n=6941$ rows and we consider $d=8$ relevant covariates (5 values of the metal oxide chemical sensor readings + 3 values of Temperature, Relative Humidity, and Absolute Humidity).
We explicitly mention the field names (all real-numbered non-negative).
\begin{itemize}
    \item $\x$ covariates' names in the dataset: \texttt{[PT08.S1(CO), PT08.S2(NMHC), PT08.S3(NOx), PT08.S4(NO2), PT08.S5(O3), T, RH, AH]}
    \item $\y$ target's name in the dataset: \texttt{NO2(GT)}
\end{itemize}


{\bf UCI Airfoil Self-Noise}.
The UCI Airfoil Self-Noise dataset \cite{misc_airfoil_self-noise_291} contains data obtained from NASA's aerodynamic and acoustic tests of airfoil blade sections.
It contains $5$ real-valued covariates, which are relevant physical quantities.
The target is also real-valued, which is the sound pressure created (in dB).
There are no missing entries. This dataset has $n=1503$ rows and $d=5$ covariates.
We explicitly mention the field names:
\begin{itemize}
    \item $\x$ covariates' names in the dataset: \texttt{[frequency, attack-angle, chord-length, free-stream-velocity, suction-side-displacement-thickness]}
    \item $\y$ target's name in the dataset: \texttt{scaled-sound-pressure}
\end{itemize}

{\bf UCI Appliances Energy Prediction}.
The UCI AEP dataset \cite{misc_appliances_energy_prediction_374} contains energy appliances' data collected at $10$ min frequency for about $4.5$ months. 
This dataset has no missing entries, and a total of $19735$ instances with $28$ covariates.
We downsample the dataset to hourly frequency, giving $n=3290$ rows with $d=24$ relevant covariates (removing degenerate covariates: `date', `lights', `windspeed', `visiblity').
The full list of covariates is $24$ long, and so we do not list it out here (we already mentioned the $4$ we removed from the total of $28$).

\subsection{Observed quadratic nature of MSE w.r.t. $\xi$ parameters} \label{sec-app-observed-quadratic}

Theorem~\ref{thm-quadraticRisk} showed that the excess risk is quadratic w.r.t. $\xik$ variables (the reparameterized version). We used this theoretical insight to devise a hyperparameter tuning strategy in section~\ref{sec-expts-hparams}.
The below curves ratify this empirically.
Inspecting the minima of each of the below curves, we observe that the values of $\xi^\star$ for $1$-step SD in Table~\ref{table} (found through the strategy described in section~\ref{sec-expts-hparams}) indeed coincide with the minima producing values in the below curves.
\begin{figure}[h]
\begin{subfigure}{.33\textwidth}
  \includegraphics[width=\textwidth]{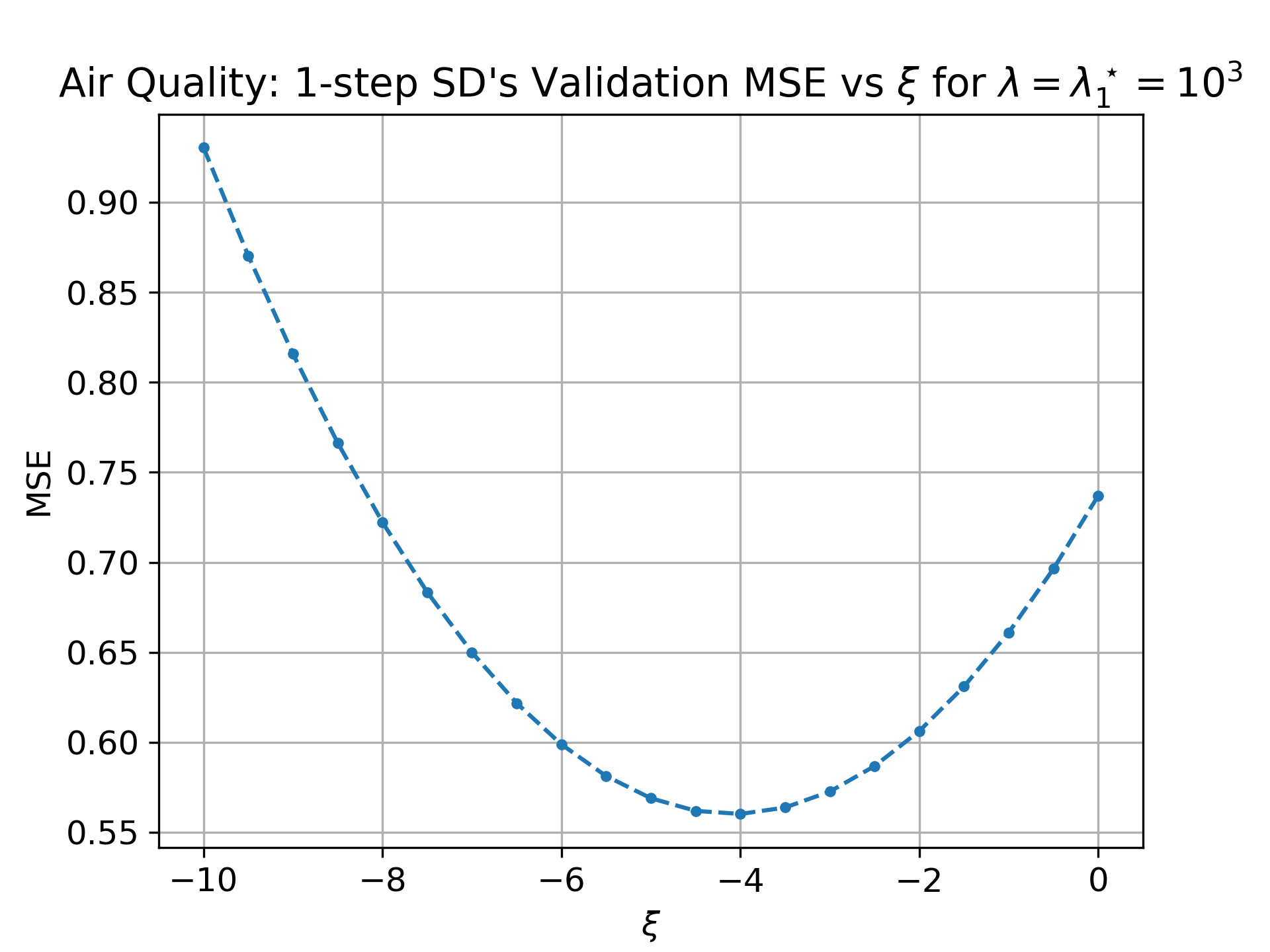}
  \caption{Air Quality dataset}
  \label{fig-regr-quad-AQ}
\end{subfigure}
\begin{subfigure}{.33\textwidth}
  \includegraphics[width=\linewidth]{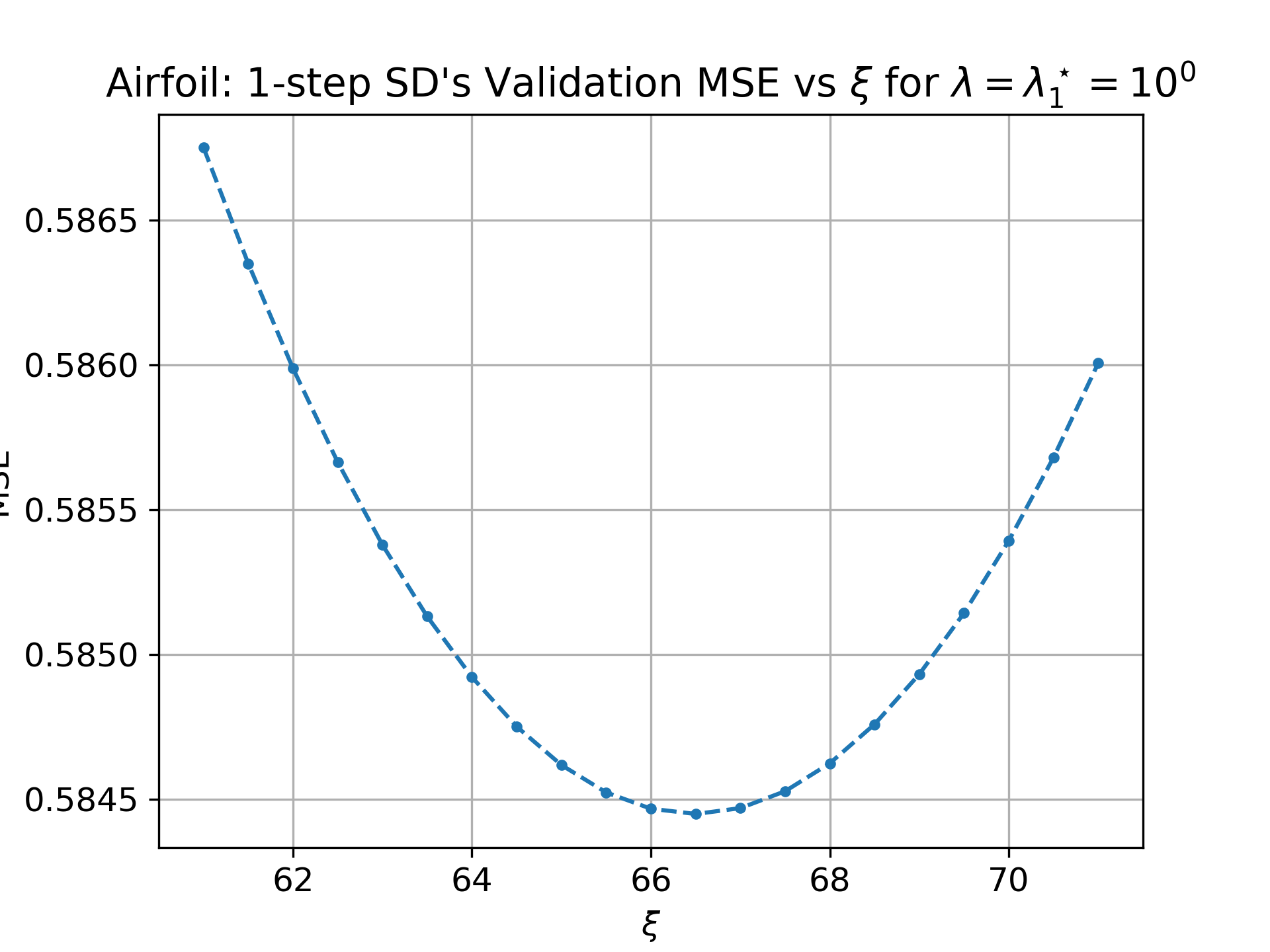}
  \caption{Airfoil dataset}
  \label{fig-regr-quad-AF}
\end{subfigure}
\begin{subfigure}{.33\textwidth}
  \includegraphics[width=\linewidth]{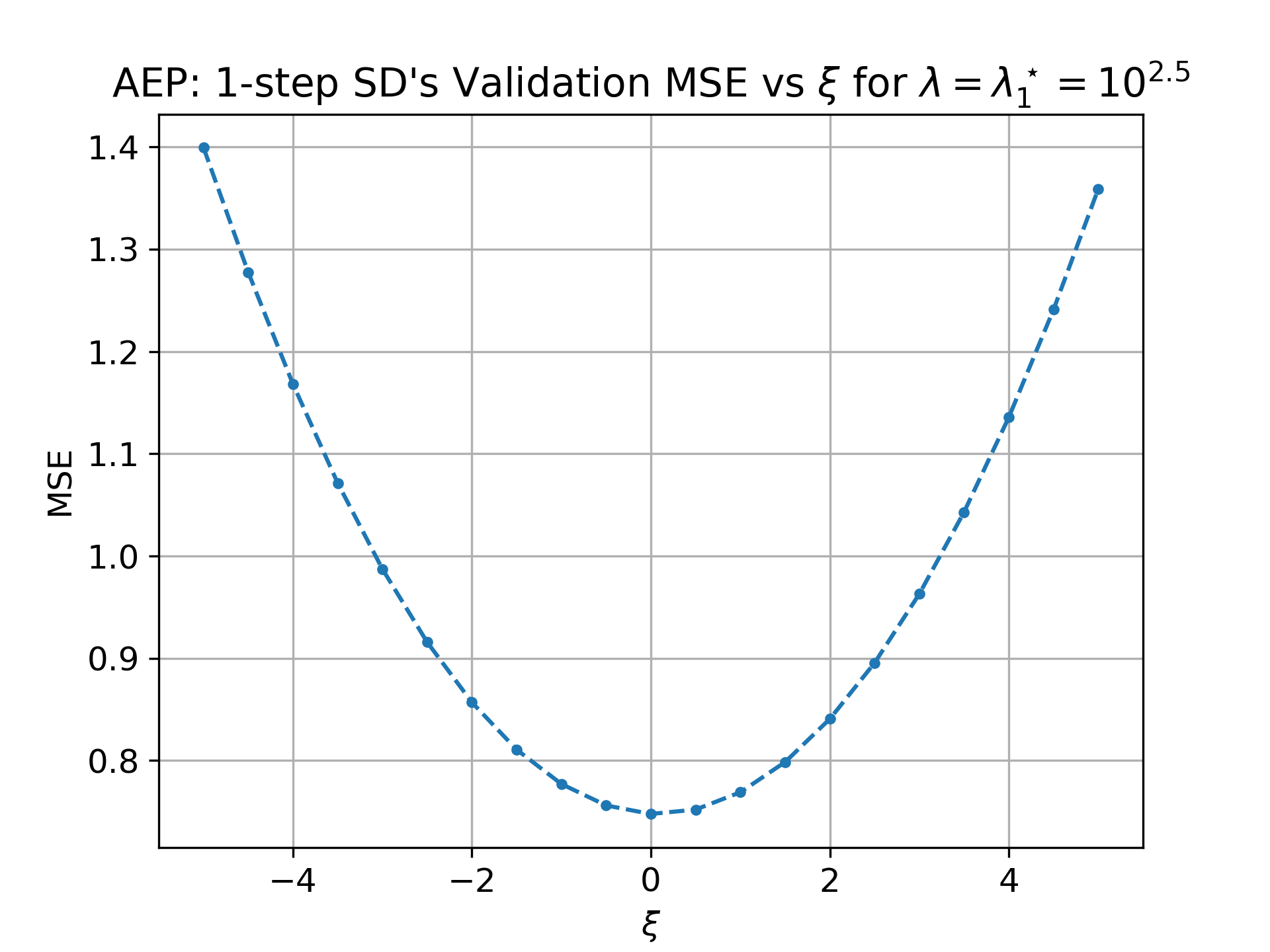}
  \caption{AEP dataset}
  \label{fig-regr-quad-AEP}
\end{subfigure}
\caption{Observed quadratic nature of MSE (on validation set) of $1$-step SD vs $\xi$ for $\lambda = \slambda_1$. This agrees with Theorem~\ref{thm-quadraticRisk} and validates the hyperparameter tuning strategy outlined in section~\ref{sec-expts-hparams}.
}
\label{fig-regr-appendix}
\end{figure}

\end{document}